\newcommand{\cmark}{{\color{ForestGreen}{\ding{51}}}}%
\newcommand{\xmark}{\ding{55}}%
\theoremstyle{plain}
\newtheorem{theorem}{Theorem}[section]
\newtheorem{proposition}[theorem]{Proposition}
\newtheorem{lemma}[theorem]{Lemma}
\theoremstyle{definition}
\newtheorem{definition}[theorem]{Definition}
\newtheorem{assumption}[theorem]{Assumption}
\theoremstyle{remark}
\newcommand{\rev}[1]{{\leavevmode\color{black}#1}}
\newcommand{\highlight}[1]{{\leavevmode\color{magenta}#1}}
\def\AA{\mathcal{A}}\def\CC{\mathcal{C}}
\def\DD{\mathcal{D}}\def\EE{\mathcal{E}}\def\FF{\mathcal{F}}
\def\GG{\mathcal{G}}
\def\JJ{\mathcal{J}}\def\LL{\mathcal{L}}
\def\MM{\mathcal{M}}\def\NN{\mathcal{N}}\def\OO{\mathcal{O}}
\def\PP{\mathcal{P}}
\def\SS{\mathcal{S}}
\def\Ebb{\mathbb{E}}
\def\Rbb{\mathbb{R}}
\def\R{\Rbb}
\def\one{{\mathbbm1}}
\def\*{\star}
\DeclareMathSymbol{\mhef}{\mathord}{operators}{`\-}
\DeclareMathOperator*{\argmin}{arg\,min}
\DeclareMathOperator*{\argmax}{arg\,max}
\newcommand{\E}{\Ebb}
\icmltitlerunning{MAHALO: Unifying Offline RL and IL from Observations}
\begin{document}

\twocolumn[
\icmltitle{MAHALO: Unifying Offline Reinforcement Learning and \\
           Imitation Learning from Observations}



\icmlsetsymbol{equal}{*}

\begin{icmlauthorlist}
\icmlauthor{Anqi Li}{uw}
\icmlauthor{Byron Boots}{uw}
\icmlauthor{Ching-An Cheng}{msr}
\end{icmlauthorlist}

\icmlaffiliation{uw}{University of Washington}
\icmlaffiliation{msr}{Microsoft Research}

\icmlcorrespondingauthor{Anqi Li}{anqil4@cs.washington.edu}

\icmlkeywords{Offline Reinforcement Learning, Offline Imitation Learning, Imitation Learning from Observations}

\vskip 0.3in
]



\printAffiliationsAndNotice{} 

\begin{abstract}

We study a new paradigm for sequential decision making, called offline policy learning from observations (PLfO). Offline PLfO aims to learn policies using datasets with substandard qualities:
\begin{enumerate*}[label=\emph{\arabic*)}]
    \item only a subset of trajectories is labeled with rewards,
    \item labeled trajectories may not contain actions,
    \item labeled trajectories may not be of high quality, and 
    \item the data may not have full coverage.
\end{enumerate*}
Such imperfection is common in real-world learning scenarios, and offline PLfO encompasses many existing offline learning setups, including offline imitation learning (IL), offline IL from observations (ILfO), and offline reinforcement learning (RL).
In this work, we present a generic approach to offline PLfO, called \underline{M}odality-agnostic \underline{A}dversarial \underline{H}ypothesis \underline{A}daptation for \underline{L}earning from \underline{O}bservations (MAHALO).
Built upon the pessimism concept in offline RL, MAHALO optimizes the policy using a performance lower bound that accounts for uncertainty due to the dataset's insufficient coverage.
We implement this idea by adversarially training data-consistent critic and reward functions, 
which forces the learned policy to be robust to data deficiency.
We show that MAHALO consistently outperforms or matches specialized algorithms across a variety of offline PLfO tasks  in theory and experiments. Our code is available at \href{https://github.com/AnqiLi/mahalo}{https://github.com/AnqiLi/mahalo}.

\end{abstract}

\vspace{-5mm}
\section{Introduction}
\vspace{-1mm}

\begin{table*}[t]
\caption{Different problem formulations for sequential decision making on offline datasets. Our PLfO formulation is the most general and can leverage the broadest range of data, which makes it the most realistic. The other formulations can be reduced to PLfO with additional restrictions on data. * denotes that data can only be used partially, with either action or reward removed.}
\label{tab:formulation}
\begin{center}
\begin{small}
\begin{tabular}{c|c|c|c|c|c}
                & $(s,a,r,s')$ & $(s,a,s')$ & Expert $(s,a,s')$ & Expert $(s,s')$ & Non-expert $(s,r,s')$\\
\hline
Offline IL
                & \xmark*   & \cmark     & \cmark            & \xmark          & \xmark\\
\hline
Offline ILfO
                & \xmark*   & \cmark     & \xmark*           & \cmark          & \xmark\\
\hline
Offline RL
                & \cmark    & \xmark     & \xmark            & \xmark          & \xmark\\
\hline
Offline RL w/ Unlabeled Data
                & \cmark    & \cmark     & \xmark            & \xmark          & \xmark\\
\hline
Offline PLfO
(Proposed)          & \cmark    & \cmark     & \cmark            & \cmark          & \cmark\\
\end{tabular}
\end{small}
\end{center}
\vskip -0.25in
\end{table*}

Online reinforcement learning (RL) has shown great promise 
in solving simulated tasks \cite{silver2016mastering,mnih2015human}. 
However, exploratory interactions with the environment, which are central to online RL, often can not be afforded in risk-sensitive 
applications, such as robotics~\cite{ibarz2021train} and healthcare~\cite{gottesman2018evaluating}. In these domains, it is more practical to consider an offline setting~\cite{levine2020offline}, where data is collected by behavioral policies satisfying certain criteria.

There are two main approaches to solving decision making problems offline: offline imitation learning (IL)~\cite{chang2021mitigating,kidambi2021mobile,kim2021demodice} and offline RL~\cite{fujimoto2019off,levine2020offline}. 
Offline IL generally does not assume access to the reward. 
Theses approaches learn with a small set of expert demonstrations and potentially a separate dynamics dataset with unknown quality. 
Offline IL seeks to mimic expert behavior while avoiding distribution shift caused by using offline datasets. 
Offline imitation learning from observations (ILfO)~\cite{kidambi2021mobile,ma2022versatile}
further relaxes the requirements of expert actions. ILfO allows learning from experts with different action spaces~\cite{edwards2020estimating}, or when the expert has a different action modality or a embodiment~\cite{cao2021learning,radosavovic2021state}.


%
Offline RL, on the other hand, does not require expert-level demonstrations. It instead assumes that each transition in the offline dataset is labeled with reward. 
The goal of offline RL is to learn a policy which
\begin{enumerate*}[label=\emph{\arabic*)}]
    \item always improves upon the behavioral policy~\cite{fujimoto2019off,laroche2019safe}, and 
    \item can outperform any other policies whose state-action distribution is covered by data~\cite{xie2021bellman}.
\end{enumerate*}

However, in real-world applications, it is expensive to either acquire expert-level demonstrations (even if they are observation-only), or label every transition with reward. In this paper, we propose a more general and realistic formulation called offline \emph{Policy Learning from Observations} (PLfO). Our goal is to learn from datasets where
\begin{enumerate*}[label=\emph{\arabic*)}]
    \item a subset of trajectories is labeled with rewards,
    \item labeled trajectories may not contain actions,
    \item labeled trajectories may not be of high quality, and
    \item the overall data may not have full coverage. 
\end{enumerate*}
The flexibility of this formulation allows us to directly take advantage of more data sources, such as dynamics data collected for other tasks and reward data collected by a non-expert agent with a different action space.

Offline PLfO considers two offline datasets:  the reward dataset $\DD_R=\{(s,r,s')\}$ and dynamics dataset $\DD_\AA=\{(s,a,s')\}$, 
\rev{where the dynamics dataset is consistent with the Markovian dynamics that the learner aims to solve (i.e. it is collected by agents that have the same embodiment as the learner).
}
In offline RL setting, the reward and dynamics datasets are aligned, since they are from the same underlying dataset $\DD = \{(s,a,r,s')\}$. Recent work~\cite{yu2022leverage} relaxes this requirement by assuming that only a subset of transitions are labeled with rewards, i.e., the set of state transitions contained in $\DD_R$ is a subset of $\DD_\AA$. On the contrary, in offline PLfO,  \rev{we {make no assumption} on how these two datasets are related to each other. }

Offline ILfO can also be viewed as a special case of offline PLfO. Although ILfO does not assume knowledge of reward, it makes an implicit assumption that expert trajectories attain high returns, 
while making no assumptions on reward information elsewhere (e.g., on the dynamics data $\DD_A$). In other words, from the perspective of offline PLfO, {expert demonstrations essentially act as the reward-labeled dataset $\DD_R$} for ILfO. 
Practically, we can simply label the expert demonstrations with the maximum reward. This observation is in line with existing work
~\cite{fu2018learning,eysenbach2021replacing,smith2023strong}. We refer readers to~\cref{tab:formulation} for a summary of comparison between offline PLfO and existing formulations. In~\cref{sec:related-work} we provide a more comprehensive literature review.

The key challenge to offline PLfO is the mismatch among the reward dataset, the dynamics dataset, and the test-time distribution. 
We present a generic approach to offline PLfO, called \underline{M}odality-agnoistic \underline{A}dversarial \underline{H}ypothesis \underline{A}daptation for \underline{L}earning from \underline{O}bservations (MAHALO). Built upon the concept of pessimism from offline RL literature~\cite{jin2021pessimism,liu2020provably,kumar2020conservative,xie2021bellman,cheng2022adversarially}, MAHALO optimizes for a performance lower bound accounting for insufficient data coverage on reward and dynamics. It can be realized by modifying existing offline RL algorithms based on adversarial training, such as~\cite{xie2021bellman,cheng2022adversarially,uehara2021pessimistic,rigter2022rambo,xie2022armor}. In particular, we present a model-free instantiation of MAHALO built upon ATAC~\cite{cheng2022adversarially}, an offline RL algorithm based on a Stackelberg game of relative pessimism.
In MAHALO, we consider the actor policy as the leader in the Stackelberg game, and adversarially train critic and reward functions so that they are data-consistent and can detect potential deficiency of the actor policy. As a result, the policy can be robust to the missing data coverage.

The contribution of this paper is two-fold. First, we propose offline PLfO, a novel formulation which relaxes data assumption for policy learning with offline data.
This general formulation encompasses most existing offline formulations, including, but not limited to, offline IL, ILfO, RL, and RL with unlabeled data. Second, we present MAHALO, a solution to offline PLfO based on pessimism. We further present a model-free realization of MAHALO. 
In theory and experiments, we show that MAHALO consistently outperforms or matches performance with more specialized algorithms across various offline PLfO scenarios and tasks.

\vspace{-1mm}
\section{Preliminaries}

\vspace{-1mm}
\paragraph{Markov Decision Process} We consider RL in a Markov Decision Process (MDP) $\MM=(\SS, \AA, P, R, \gamma)$, where $\SS$  and $\AA$ are the state and action spaces, $\gamma\in[0,1)$ is the discount factor, $P:\SS\times\AA\to \Delta(\SS)$ is the transition probability, where $\Delta(\cdot)$ denotes the space of probability distributions. We assume that the reward function $R$ is defined on state transitions
, i.e., $R: \SS\times\SS\to [0, R_{\max}]$, as we consider learning from observations. This state-transition reward function $R$ induces an effective state-action reward function $\Bar{R}(s,a) \coloneqq \E_{s'\sim P(\cdot|s,a)}[R(s,s')]$, which is the expected state-transition reward under the transition probability $P$. 
We denote a Markovian policy as $\pi:\SS\to\Delta(\AA)$.
The goal of RL is to find a policy which maximizes the expected discounted return $J(\pi)\coloneqq \E[\sum_{t=0}^\infty \gamma^t r_t]$, 
where $r_t=R(s_t, s_{t+1})$ and the expectation is over the randomness of running policy $\pi$ with transition probability $P$ starting from an initial state distribution $d_0(s)$.
For a policy $\pi$ and any function $f:\SS\times\AA\to \R$, we define the transition operator $\PP^\pi$ as $(\PP^\pi f)(s,a) \coloneqq \gamma\E_{s'\sim P(\cdot|s,a)}[f(s', \pi)]$, where $f(s',\pi)=\sum_{a'}\pi(a'|s')f(s',a')$.
For a policy $\pi$, we define the average state-action occupancy measure $d^\pi(s,a) \coloneqq (1-\gamma)\E[\sum_{t=0}^\infty \gamma^t \one(s_t=s, a_t=a)]$. 
We recall that $J(\pi) = \frac{1}{1-\gamma} \E_{s,a\sim d^\pi}\E_{s'\sim P(s'|s,a)}[R(s,s')]$. 

\vspace{-4mm}
\paragraph{Offline RL} 
Offline RL studies the problem of policy learning from a reward-labeled transition dataset $\DD = \{ (s,a,r,s') \}$. 
The goal of offline RL is to learn the best policy that can be explained by data, while not making assumptions on the data coverage quality. An offline RL algorithm ideally is able to learn the optimal policy of the MDP $\MM$, as long as the dataset covers the states and actions that the optimal policy would visit. 
Such robustness of offline RL to data coverage quality is commonly realized by pessimism, which reasons about the worst case for states and actions not covered by the offline data. Being pessimistic in the face of uncertainty naturally forces the agent to search for good policies within the data support. Typically, the pessimism is implemented via behavior regularization~\cite{fujimoto2021minimalist,wu2019behavior}, 
value penalty~\cite{jin2021pessimism,yu2020mopo,kumar2020conservative}, or adversarial training via a two-player game~\cite{cheng2022adversarially,xie2021bellman,rigter2022rambo,uehara2021pessimistic}. 
\vspace{-1mm}
\paragraph{Offline IL} 
Offline IL (such as behavior cloning) studies the problem of policy learning using only the transition dataset $\DD = \{ (s,a,s') \}$ without reward labels. The transition data is a union of near-optimal expert data $\DD_E$ and (optionally) a separately collected data of unknown quality $\DD_X$. 
Like offline RL, the data in offline IL does not have full coverage, and the principle of mimicking the expert data in IL also effectively encourages the learner to stay within the the data distribution. In fact, \citet{cheng2022adversarially} show that offline IL can be viewed as an offline RL problem with the largest reward uncertainty:
By running an offline RL algorithm that optimizes for the relative performance between the learner and the behavioral policy under data uncertainty, an IL mimicking behavior would naturally occur.



\vspace{-2mm}
\paragraph{Stackelberg game}
A Stackelberg game is a sequential two-player game~\cite{von2010market} between a leader $x$ and a follower $y$. In this game, the leader plays first and then the follower plays after seeing the leader's decision. The game can be written as a bilevel optimization problem:  $\max_x f(x,y_x) \textrm{ s.t. } y_x \in \max_y g(x,y)$, where $f$ and $g$ are the objectives of the leader and the follower, respectively.

\vspace{-1mm}
\section{Offline PLfO: A Unified Formulation for Offline RL and IL from Observations}
\vspace{-1mm}

In this section, we first introduce the generic setup of offline policy learning from observations (PLfO). 
Then we discuss practical scenarios where the data cannot be fully leveraged in offline RL and IL setups but is within the PLfO setup.

\vspace{-3mm}
\subsection{Problem Formulation}
\vspace{-1mm}

In offline PLfO, we assume access to pre-collected offline data consisting of transitions. In contrast to typical offline RL, 
we allow our data to include transitions which contain \emph{either} reward or action. In other words, in offline PLfO, we consider two datasets, a reward dataset $\DD_R = \{(s,r,s')\}$ and a dynamics dataset $\DD_\AA = \{(s,a,s')\}$. We note that these two datasets may not necessarily have an intersection.

We assume that both datasets are compliant with the underlying MDP. 
For the dynamics dataset, we follow standard compliance assumption in offline RL literature~\citep{jin2021pessimism}:
\begin{enumerate*}[label=\emph{\arabic*)}]
    \item for any $(s,a,s')\in \DD_R$, we have $s'\sim P(\cdot|s,a)$; and
    \item the state-action pairs $(s,a)$ in $\DD_\AA$ are sampled from the discounted state-action occupancy $d^\mu$ of a behavioral policy $\mu:\SS\to\Delta(\AA)$.
\end{enumerate*}
We slightly abuse notation to use $\mu$ to also denote the discounted state-action occupancy $d^\mu$, i.e., $\mu=d^\mu$.
For the reward dataset, for any $(s,r,s')\in\DD_R$, we assume that the reward function $R$ is defined on the state transition $(s,s')$ and $r=R(s,s')$. We do not make assumption on the underlying distribution of state transitions $(s,s')\sim\nu$, e.g., $s'$ in $(s,s')$ may not be sampled from $P(\cdot|s,a)$ for some action $a$. 
With a slight abuse of notation, we will also use $\nu$ to denote the underlying distribution of transition tuple containing reward $(s,r,s')$.
Like offline RL, we do not assume the coverage of these datasets on states and actions.
The goal of offline PLfO is to learn a policy $\pi$ which obtains high expected discounted return $J(\pi)$ in 
MDP $\MM$ while using reward and dynamics datasets of limited coverage.

\vspace{-1mm}
\subsection{Relation to Existing Formulations}\label{sec:exist-forms}
\vspace{-1mm}
Offline PLfO is a general formulation encompassing many existing problem setups. This means that an offline PLfO algorithm can solve any of the following problems, or the combination of them, via simple reductions.

\vspace{-4mm}
\paragraph{Offline RL}
Offline RL can be reduced to offline PLfO where the reward dataset and dynamics dataset are generated from the same underlying offline dataset $\DD = \{(s,a,r,s')\}$. 
The alignment assumption makes offline RL in general an easier problem than offline PLfO since there is no mismatch between reward and dynamics data.

\vspace{-4mm}
\paragraph{Offline RL with unlabeled data}\citet{yu2022leverage,singh2020cog,hu2023provable} consider a formulation where a subset of dynamics data is labeled with reward. 
In this scenario, the reward data is well-covered by dynamics data. The main challenge here is to leverage unlabeled dynamics data while not suffering from insufficient reward coverage. 

\vspace{-4mm}
\paragraph{Offline IL} Offline IL uses a dataset of expert demonstrations $\DD_E$.  
Although IL generally assumes no reward information, it makes an implicit assumption on expert performance. In other words, IL is a learning problem where only positive (i.e., high return) examples are given. 
This observation is also in line with existing work which takes a density matching perspective on IL~\cite{ho2016generative,kim2021demodice} and reward learning from demonstrations~\cite{fu2018learning,eysenbach2021replacing}.
\rev{Recent work such as~\cite{kim2021demodice,chang2021mitigating,smith2023strong} has considered offline IL with a separately-collected dynamics dataset $\DD_X$ of unknown quality. 
}

Offline IL can be reduced to offline PLfO: The reward dataset $\DD_R =\{(s, R_{\max}, s')\}$ comes from expert demonstrations, where $R_{\max}$ is the maximum reward. The dynamics dataset contains both the expert demonstrations \rev{$\DD_E$} and\rev{, if given, }the separately-collected dynamics data \rev{$\DD_X$}. 

\vspace{-1mm}
\paragraph{Offline ILfO} Offline ILfO is similar to offline IL, except that the expert demonstrations only contain state transitions, i.e., $\DD_E = \{(s_E, s_E')\}$. As such, offline ILfO can be viewed as offline PLfO with reward dataset $\DD_R = \{(s_E, R_{\max}, s_E')\}$ and dynamics dataset $\DD_\AA = \rev{\DD_X}$. Compared to the previous setups, offline ILfO faces an additional challenge of insufficient action coverage, as the expert state transitions $(s_E, s_E')$ may not be in the dynamics dataset $\DD_\AA$. 

\vspace{-1mm}
\subsection{Practical Scenarios}
\vspace{-1mm}

We now consider a few practical examples of reward and dynamics data sources. As we will see, 
it is likely that the coverage of reward data mismatches with that of dynamics data. In such scenario, offline PLfO formulation is well-suited as it can leverage \emph{all} available data. 

\vspace{-2mm}
\paragraph{Sources of dynamics data}
Since dynamics data is task-agnostic, dynamics data can be obtained from running data-collection policies on the MDP, as well as any other MDPs with the same dynamics. For example, in a multi-task setting~\cite{yu2020meta}, dynamics data can be acquired when solving different task rewards. However, it is expensive to label dynamics data with rewards for \emph{every} task. 
Realistically, data often only has reward information of the particular task that the data collection is for, which means that some state transitions only have actions, but not rewards. 

\vspace{-2mm}
\paragraph{Sources of reward data}
One practical strategy to reward labeling is to label randomly sampled trajectories from a pre-collected dynamics dataset~\cite{yu2022leverage}. This means that a large subset of the dynamics data remains unlabeled. Another setting is to re-use reward data collected by another agent (with the same state space). It is possible that the action information is unavailable or unusable in the underlying MDP. For example, the other agent can have different embodiment or use a different control modality. Additionally, reward information can be implicitly provided through expert demonstrations, as is discussed in \cref{sec:exist-forms}.

\vspace{-1mm}
\section{MAHALO}
\vspace{-1mm}

We propose a generic approach to offline PLfO, called Modality-agnostic Adversarial Hypothesis Adaptation for Learning from Observations (MAHALO). It is inspired by the idea of pessimism in offline RL~\cite{jin2021pessimism}. 

\vspace{-1mm}
\subsection{Solution Concept to Offline PLfO}
\vspace{-1mm}

In order to tackle the heterogeneous uncertainty in offline PLfO, we leverage the concept of version space in the offline RL literature~\cite{cheng2022adversarially,xie2021bellman,rigter2022rambo,uehara2021pessimistic,xie2022armor} to construct a performance lower bound for optimizing policies in MAHALO.
Here, a version space, denoted as $\JJ = \{ \hat{J} : \Pi \to \R \}$, is the space of policy performance hypotheses that remain feasible after observing data in $\DD_\AA$ and $\DD_R$. 
For example, if a bipedal robot has experienced that falling down receives zero rewards, then any hypothesis in the version space would give a zero reward to any policy that makes the robot fall, but the hypotheses in the version may disagree on which reward to give for other behaviors. 

In MAHALO we use the version space $\JJ$ to encapsulate uncertainty due to 
heterogeneous missing coverage. Because the version space 
$\JJ$ by definition includes the true performance function $J$, we can use $\JJ$ to construct a policy performance lower bound. 
For example, for a policy $\pi$ we can compute its absolute performance lower bound naturally as $\min_{\hat{J}\in \JJ } \hat{J}(\pi)$. Thus we can optimize policies through solving a saddle-point problem: 
$\max_{\pi\in\Pi} \min_{\hat{J}\in \JJ } \hat{J}(\pi)$
which provides a way to systematically optimize policy performance accounting for missing information in data.

For the case where the data are fully labeled with rewards and actions, offline RL literature has proposed several designs of the version space $\JJ$: with MDP models or value functions, in conjunction with absolute or relative pessimism~\cite{cheng2022adversarially,xie2022armor,xie2021bellman,rigter2022rambo,uehara2021pessimistic}. Here we generalize this technique to offline PLfO. In particular, we will design model-free versions of MAHALO, as model-free methods are simpler to implement, use less hyperparameters, and have demonstrated superior empirical performance in offline RL~\cite{yu2021combo}. 
\rev{In principle, MAHALO can be realized by \emph{any} version-space offline RL algorithm, including those based on models.}
%


\vspace{-1mm}
\subsection{Model-Free Realization of MAHALO}
\vspace{-1mm}

We now present a model-free realization of MAHALO based on the concept of relative pessimism in offline RL~\cite{cheng2022adversarially}. 
\rev{For clarity, we first present the formulation at the population level. We will later provide theoretical analysis for the finite-sample scenario in~\cref{sec:theory}. }
\rev{We introduce and analyze another realization of MAHALO based on absolute pessimism~\cite{xie2021bellman} in~\cref{sec:pspi}. }

We formulate offline PLfO as a Stackelberg game, with the actor policy $\pi\in\Pi$ as the leader. The followers consist of critic $f\in\FF$ and reward function $g\in\GG$.
\begin{align}\label{eq:mahalo}
        \hat{\pi} &\in \argmax_{\pi\in\Pi} \LL_\mu (\pi, f^\pi)\\
        \text{s.t.}\quad f^\pi &\in \argmin_{f\in\FF, g\in\GG} \LL_\mu (\pi, f) + \alpha \EE_\nu (g) + \beta \EE_\mu(\pi, f, g),\nonumber
\end{align}
with $\alpha\geq 0, \beta \geq 0$ being hyperparameters, and
\begin{align}
    \LL_\mu(\pi, f) &\coloneqq \E_\mu\big[f(s,\pi) - f(s, a)\big],\label{eq:pessimism}\\
    \EE_\nu (g) &\coloneqq \E_\nu \big[ \big(g(s,s') - r \big)^2 \big],\label{eq:reward-consistency}\\
    \EE_\mu(\pi, f, g) &\coloneqq \E_\mu\big[\big((f-\Bar{g}-\PP^\pi f)(s, a)\big)^2\big].\label{eq:bellman-consistency}
\end{align}
where $f(s,\pi)\coloneqq \E_{a\sim\pi(\cdot|s)}[f(s,a)]$ and $\Bar{g}(s,a) \coloneqq \E_{s'\sim P(\cdot|s,a)}[g(s,s')]$.
This optimization problem can be viewed as a regularized version of the constrained problem with a version space $\JJ =  \{ \hat{J} : \hat{J}(\pi) = 
J(\mu) + \frac{1}{1-\gamma}\LL_\mu(\pi, f),   \EE_\nu (g) \leq \epsilon_\alpha, \EE_\mu(\pi, f, g) \leq \epsilon_\beta \}$ for some $\epsilon_\alpha,\epsilon_\beta\geq0$ related to $\alpha,\beta$.\footnote{This analogy between the constrained and the regularized versions can be derived following the principle in \citep{xie2021bellman}.}  We adopt the regularized version, as it is easier to implement numerically.

To understand the above formulation, let us start by considering the last two terms in the followers' objective function. Recall that $\nu$ is the underlying distribution of reward data, so $\EE_\nu(g)$ measures whether the candidate reward function $g$ is data consistent. 
$\EE_\mu(\pi,f,g)$ quantifies whether the candidate critic $f$ is Bellman-consistent on the dynamics data for policy $\pi$ and reward function $g$. Therefore, with sufficiently large $\alpha$ and $\beta$, the reward $g^\pi$ and critic $f^\pi$ are both (approximately) consistent with the reward and dynamics data. 
On the other hand, $\LL_\mu(\pi, f)$ is the relative performance of between the candidate policy $\pi$ and behavioral policy $\mu$, with value estimated by critic $f$. The followers minimizes this quantity, meaning that $f^\pi$ provides a \emph{pessimistic} relative evaluation of policy $\pi$ (which will be formally shown in \cref{prop:rpi-main-txt}). The learned actor policy $\hat{\pi}$ maximizes this lower bound to improve over the the behavioral policy. 

\rev{We would like to stress on the importance of making the reward function additionally minimize for the Bellman error in the followers' objective~\eqref{eq:mahalo}. By minimizing~\eqref{eq:bellman-consistency}, the reward and critic functions work \emph{together} to provide a pessimistic evaluation of the policy. In other words, the Bellman error~\eqref{eq:bellman-consistency} connects the learned reward function to the pessimistic loss~\eqref{eq:pessimism}, since the reward function can change in a way to make the critic more pessimistic. 
}

\rev{The Stackelberg game for MAHALO in~\eqref{eq:mahalo} is similar to ATAC~\cite{cheng2022adversarially}. The main difference is that ATAC uses the observed reward in the Bellman error term as it assumes the reward is available for every transition. MAHALO, on the other hand, trains the reward function to be consistent with the reward data~\eqref{eq:reward-consistency} and Bellman-consistent with a pessimistic critic function~\eqref{eq:bellman-consistency} on the dynamics data.}

\rev{Below we discuss three desirable properties of MAHALO. First, given large dynamics and reward datasets, MAHALO can outperform any policy whose state-action distribution is well-covered by both datasets. Second, MAHALO ensures safe policy improvement~\cite{fujimoto2019off,laroche2019safe}, i.e.,  the learned policy $\hat{\pi}$ is no worse than the behavioral policy $\mu$ given sufficient data.} Third, MAHALO can automatically adapt to the structure within data. When applied to more restrictive formulations such as offline RL, offline IL, and offline ILfO, MAHALO shows similar behavior as specialized algorithms. 

\vspace{-2mm}
\subsubsection{Theoretical Properties}\label{sec:theory}
\vspace{-1mm}

%
\rev{
We analyze the solution to a finite-sample version of~\eqref{eq:mahalo} based on  dynamics dataset $\DD_\AA$ and reward dataset $\DD_R$:
\begin{small}
\begin{align}\label{eq:mahalo-finite}
        \hat{\pi} &\in \argmax_{\pi\in\Pi} \LL_{\DD_\AA} (\pi, f^\pi)\\
        \text{s.t.}\quad f^\pi &\in \argmin_{f\in\FF, g\in\GG} \LL_{\DD_\AA} (\pi, f) + \alpha \EE_{\DD_R} (g) + \beta \EE_{\DD_\AA}(\pi, f, g),\nonumber
\end{align}
\end{small}%
where $\LL_{\DD_\AA}(\pi, f)$ and $\EE_{\DD_R} (g)$ are the empirical estimates of $\LL_\mu(\pi, f)$ and $\EE_\nu(g)$, respectively, and
\begin{small}
\begin{align}
    \EE_{\DD_\AA}(\pi, f, g) &\coloneqq \E_{\DD_\AA}\big[\big(f(s,a) - g(s,s') - \gamma f(s',\pi)\big)^2\big]\label{eq:bellman-consistency-finite}\\
    &- \min_{f'\in\FF}\E_{\DD_\AA}\big[\big(f'(s,a) - g(s,s') - \gamma f(s',\pi)\big)^2\big].\nonumber
\end{align}
\end{small}%
The quantity $\EE_{\DD_\AA}(\pi, f, g)$ is the estimated Bellman error~\cite{antos2008learning}. We show in~\cref{sec:proofs} (\cref{lemma:concentration-mu}) that $\EE_{\DD_\AA}(\pi, f, g)$ can be used to approximate the Bellman error $\EE_{\mu}(\pi, f, g)$ defined in~\eqref{eq:bellman-consistency}.

For clarity purposes, we make a perfect realizability and completeness assumption below. Our analysis can be easily modified to consider an approximate version.
\begin{assumption}[Realizability \& Completeness]\label{assumption:realizability}
We assume  $\mu\in \Pi$, $R\in \GG$, and for all $\pi\in\Pi$, $Q^\pi\in \FF$. In addition, $\inf_{f'\in\FF} \| f' - \bar g- \PP^\pi f\|_\mu = 0  $, $\forall g\in\GG, f\in\FF$.
\end{assumption}

Due to the nature of offline learning, we will compare the performance of the learned policy $\hat{\pi}$ with a policy $\pi$ whose induced state-action occupancy $d^\pi$ is ``well-covered'' by data distributions.
Similar to~\cite{xie2021bellman,cheng2022adversarially,uehara2021pessimistic}, we define error transfer coefficients to measure distribution shift from the data distributions based on critic class $\FF$, and reward class $\GG$. 
This is a weaker notion than, e.g., density ratio 
~\citep{munos2008finite}.
\begin{definition}[Error transfer coefficients]
    The Bellman error transfer coefficient between $\rho\in\Delta(\SS\times\AA)$ and $\mu\in\Delta(\SS\times\AA)$ under policy $\pi$, critic class $\FF$ and reward  class $\GG$ is defined as
    \begin{equation}
        \mathcal{C}(\rho;\mu,\FF,\GG, \pi) \coloneqq \sup_{f\in\FF, g\in\GG} \frac{\|f - \bar{g} -\PP^\pi f\|_{2,\rho}^2}{\|f - \bar{g} -\PP^\pi f\|_{2,\mu}^2}.
    \end{equation}
    Similarly, the reward error transfer coefficient between $\rho$ 
    and $\nu\in\Delta(\SS\times\SS)$ under reward class $\GG$ is defined as
    \begin{equation}
        \mathcal{C} (\rho;\nu,\GG) \coloneqq \sup_{g\in\GG}\frac{\|\bar g - \bar R\|_{2,\rho}^2}{\|g - R\|_{2,\nu}^2}.
    \end{equation}
\end{definition}

We use $d_{\FF,\GG,\Pi}$ to denote the joint statistical complexity of critic class $\FF$, reward class $\GG$ and policy class $\Pi$, and use $d_\GG$ to denote the statistical complexity of reward class $\GG$ (e.g., when $\FF$, $\GG$ and $\Pi$ are all finite, we have
$d_{\FF,\GG,\Pi} = \OO(\log\sfrac{|\FF||\GG||\Pi|}{\delta})$ 
where $\delta$ is the failure probability). In~\cref{sec:proofs}, we establish statistical complexity using covering number.  We now state the main theoretical property of MAHALO of relative pessimism in~\eqref{eq:mahalo-finite}.

\begin{theorem}\label{thm:main-txt}
Under~\cref{assumption:realizability}, let $\hat{\pi}$ be the solution to~\eqref{eq:mahalo-finite} and let $\pi\in\Pi$ be any comparator policy. 
Let $C_1\geq 1, C_2\geq 1$ be constants, $\rho\in \Delta (\SS\times \AA)$ be a distribution that satisfy $\CC(\rho;\mu,\FF,\GG,{\pi})\leq C_1$ and $\CC(\rho;\nu,\GG)\leq C_2$. 
Define $\epsilon_\mu\coloneqq\sfrac{V_\text{max}^2 d_{\FF,\GG,\Pi}}{|\DD_\AA|}$,  $\epsilon_\nu\coloneqq\sfrac{R_\text{max}^2 d_{\GG}}{|\DD_R|}$ and
$\epsilon\coloneqq (\sqrt{C_1\epsilon_\nu} + \sqrt{C_2\epsilon_\mu})^2$. 
Choosing $\alpha = \Theta\left(\sfrac{V_\text{max}^{1/3} 
\epsilon^{1/3}}{\epsilon_\nu}\right)$ and $\beta=\Theta\left(\sfrac{V_\text{max}^{1/3} \epsilon^{1/3}}{\epsilon_\mu}\right)$, with high probability,
\begin{small}
\begin{align}\label{eq:mahalo-bound}
    &J(\pi) - J(\hat{\pi}) \\
    \leq &\OO\left(\frac{1}{1-\gamma}\left(\frac{C_1^{1/3} V_\text{max} (d_{\FF,\GG,\Pi})^{1/3}}{|\DD_\AA|^{1/3}} + \frac{C_2^{1/3} R_\text{max} (d_{\GG})^{1/3}}{|\DD_R|^{1/3}}\right)\right)\nonumber\\
    &+\underbrace{\frac{\langle d^\pi \setminus \rho , \bar g^\pi + \PP^\pi f^\pi - f^\pi\rangle}{1-\gamma}}_{\text{off-support error (dynamics)}} + \underbrace{\frac{\langle (d^\pi \ominus \mu)\setminus\rho, |\bar R - \bar g^\pi| \rangle}{1-\gamma}}_{\text{off-support error (reward)}},\nonumber
\end{align}
\end{small}%
where $(d^\pi \ominus \mu) \coloneqq d^\pi \setminus \mu + \mu \setminus d^\pi$ with $(d_1\setminus d_2)(s,a) \coloneqq \max (d_1(s,a) - d_2(s,a), 0)$ 
and $\langle \iota,f \rangle\coloneqq\sum_{(s,a)\in\SS\times\AA} \iota(s,a)f(s,a)$. 
\end{theorem}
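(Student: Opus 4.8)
The plan is to adapt the relative-pessimism analysis of ATAC \citep{cheng2022adversarially} to the two-dataset setting, tracking the reward-function uncertainty as a separate source of error alongside the Bellman uncertainty. The decomposition I would use is the performance-difference identity: for the comparator $\pi$, we have $(1-\gamma)(J(\pi) - J(\hat\pi)) = (1-\gamma)(J(\pi)-J(\mu)) - (1-\gamma)(J(\hat\pi)-J(\mu))$, and by the standard telescoping argument with the \emph{true} $Q^\pi$ one gets $(1-\gamma)(J(\pi)-J(\mu)) = \langle d^\pi, \bar R + \PP^\pi Q^\pi - Q^\pi \rangle + \LL_{d^\pi}(\pi, Q^\pi)$-type expressions; the key is to replace $Q^\pi$ by $f^\pi$ and $\bar R$ by $\bar g^\pi$ and control the residuals. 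Concretely I would write
\begin{align*}
(1-\gamma)(J(\pi)-J(\hat\pi)) &= \LL_{d^\pi}(\pi, f^\pi) - \LL_\mu(\hat\pi, f^{\hat\pi}) \\
&\quad + \langle d^\pi, \bar g^\pi + \PP^\pi f^\pi - f^\pi \rangle + \langle d^\pi, \bar R - \bar g^\pi \rangle + (\text{telescoping remainder}),
\end{align*}
where the telescoping remainder collects the Bellman residual of $f^{\hat\pi}$ evaluated under $\mu$ and vanishes up to the estimated Bellman error.

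First I would invoke the concentration lemmas (referenced as \cref{lemma:concentration-mu} in \cref{sec:proofs}) to pass between the empirical objectives $\LL_{\DD_\AA}$, $\EE_{\DD_R}$, $\EE_{\DD_\AA}$ and their population counterparts $\LL_\mu$, $\EE_\nu$, $\EE_\mu$, incurring the statistical errors $\epsilon_\mu = V_{\max}^2 d_{\FF,\GG,\Pi}/|\DD_\AA|$ and $\epsilon_\nu = R_{\max}^2 d_\GG/|\DD_R|$; uniform convergence over $\FF\times\GG\times\Pi$ is what forces the covering-number complexities. Second, using \cref{assumption:realizability} ($Q^\pi \in \FF$, $R \in \GG$, completeness), the pair $(\pi, Q^\pi, R)$ is feasible for the followers' problem with small objective value, so optimality of $(f^\pi, g^\pi)$ gives the crucial inequality $\LL_\mu(\pi, f^\pi) + \alpha\EE_\nu(g^\pi) + \beta\EE_\mu(\pi,f^\pi,g^\pi) \le \LL_\mu(\pi, Q^\pi) + O(\epsilon)$; a Lagrangian/AM-GM balancing of the $\alpha,\beta$ penalties against the empirical slack (exactly the choice $\alpha = \Theta(V_{\max}^{1/3}\epsilon^{1/3}/\epsilon_\nu)$, $\beta = \Theta(V_{\max}^{1/3}\epsilon^{1/3}/\epsilon_\mu)$) is what yields the $\epsilon^{1/3}$-type rate. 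This step simultaneously bounds $\EE_\nu(g^\pi) \lesssim \epsilon/\alpha$ and $\EE_\mu(\pi,f^\pi,g^\pi)\lesssim \epsilon/\beta$, i.e. $g^\pi$ is nearly reward-consistent and $f^\pi$ is nearly Bellman-consistent for the learned reward. Third, I would apply the error-transfer coefficients: on the part of $d^\pi$ dominated by $\rho$, $\CC(\rho;\mu,\FF,\GG,\pi)\le C_1$ converts the $\mu$-norm Bellman error into a $\rho$-norm (hence $d^\pi$-restricted) bound, and $\CC(\rho;\nu,\GG)\le C_2$ does the same for $\|\bar g^\pi - \bar R\|$; Cauchy--Schwarz then turns these squared-norm bounds into the linear-in-$d^\pi$ estimation terms in \eqref{eq:mahalo-bound}. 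The residual mass of $d^\pi$ (and of $d^\pi \ominus \mu$ for the reward term, since $\LL_\mu$ only sees the symmetric difference with $\mu$) that is \emph{not} covered by $\rho$ cannot be controlled statistically and is left as the two explicit off-support error terms.

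The main obstacle I anticipate is the bookkeeping around the reward function's double role. Unlike ATAC, where the reward is a fixed data quantity, here $g^\pi$ enters both the Bellman-consistency term $\EE_\mu(\pi,f,g)$ and the reward-consistency term $\EE_\nu(g)$, and the followers are free to pick a pessimistic $g^\pi$ that makes $f^\pi$ look worse — so I must carefully argue that the \emph{jointly} optimal $(f^\pi,g^\pi)$ still satisfies both constraints approximately, and that the performance-difference telescoping can absorb $\bar g^\pi$ in place of $\bar R$ with the mismatch $|\bar R - \bar g^\pi|$ appearing only against $d^\pi \ominus \mu$ (because on the overlap of $d^\pi$ and $\mu$ the reward errors telescope against the $\LL_\mu$ term). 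Getting the symmetric-difference structure $(d^\pi \ominus \mu)\setminus\rho$ exactly right — rather than a cruder $d^\pi\setminus\rho$ — is the delicate part and is what makes the bound tight enough to recover the offline-RL, IL, and ILfO special cases claimed in the paper. The remaining pieces (performance-difference lemma, concentration, AM-GM tuning of $\alpha,\beta$) are by now standard in this line of work and I would cite \citep{xie2021bellman,cheng2022adversarially} for the templates.
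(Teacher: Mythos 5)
Your overall route is the same as the paper's: compare both $\pi$ and $\hat\pi$ to $\mu$ via relative pessimism (the paper's \cref{prop:rpi-finite}), use \cref{assumption:realizability} so that $(Q^\pi,R)$ is feasible for the followers, transfer the $\mu$- and $\nu$-norm errors to $\rho$ with the two coefficients, split $d^\pi$ (and $d^\pi\ominus\mu$) into an on-support part plus the off-support remainders, and balance $\alpha,\beta$ at the end. The genuine problem is the step where you claim that the followers' optimality ``simultaneously bounds $\EE_\nu(g^\pi)\lesssim\epsilon/\alpha$ and $\EE_\mu(\pi,f^\pi,g^\pi)\lesssim\epsilon/\beta$.'' That inference fails: from $\LL_{\DD_\AA}(\pi,f^\pi)+\alpha\EE_{\DD_R}(g^\pi)+\beta\EE_{\DD_\AA}(\pi,f^\pi,g^\pi)\leq\LL_{\DD_\AA}(\pi,Q^\pi)+\OO(\alpha\epsilon_\nu+\beta\epsilon_\mu)$ (and note the slack here is $\OO(\alpha\epsilon_\nu+\beta\epsilon_\mu)$, not $\OO(\epsilon)$, because the empirical penalties of the feasible pair $(Q^\pi,R)$ are scaled by $\alpha,\beta$), you can only discard the pessimism terms at a cost of $2V_\text{max}$, since the jointly optimal $f^\pi$ is free to push $\LL_{\DD_\AA}(\pi,f^\pi)$ below $\LL_{\DD_\AA}(\pi,Q^\pi)$ by up to that amount in exchange for consistency error. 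The correct conclusion is the paper's \cref{lemma:regularize-error-bound}: $\alpha\EE_{\DD_R}(g^\pi)+\beta\EE_{\DD_\AA}(\pi,f^\pi,g^\pi)\leq 2V_\text{max}+\OO(\alpha\epsilon_\nu+\beta\epsilon_\mu)$, hence $\EE_\nu(g^\pi)\lesssim V_\text{max}/\alpha+\epsilon_\nu+(\beta/\alpha)\epsilon_\mu$ and $\EE_\mu(\pi,f^\pi,g^\pi)\lesssim V_\text{max}/\beta+(\alpha/\beta)\epsilon_\nu+\epsilon_\mu$. This $V_\text{max}$ slack is not bookkeeping: it is exactly why the optimal $\alpha,\beta$ are finite and why the final rate is the cube-root $V_\text{max}^{1/3}\epsilon^{1/3}$ rather than the $\sqrt{\epsilon}$-type rate your claimed bounds would imply; under your bounds, sending $\alpha,\beta\to\infty$ would apparently eliminate both consistency errors, which contradicts the tuning you then quote. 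The stated choices of $\alpha,\beta$ come precisely from balancing $\sqrt{C\,V_\text{max}/\alpha}+\sqrt{C\,V_\text{max}/\beta}$ and the cross terms $\sqrt{(\alpha/\beta)\epsilon_\nu},\sqrt{(\beta/\alpha)\epsilon_\mu}$ against $\alpha\epsilon_\nu+\beta\epsilon_\mu$.

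Separately, your displayed decomposition is not right as written: $\LL_{d^\pi}(\pi,f^\pi)=\E_{d^\pi}[f^\pi(s,\pi)-f^\pi(s,a)]\equiv 0$ since actions in $d^\pi$ are drawn from $\pi$; the Bellman residual that survives under $\mu$ is that of $f^\pi$ (the follower for the comparator), not of $f^{\hat\pi}$ --- the pair $(f^{\hat\pi},g^{\hat\pi})$ is disposed of entirely inside the robust-policy-improvement step using $\EE_{\DD_R}(g^{\hat\pi})\geq0$ and $\EE_{\DD_\AA}(\hat\pi,f^{\hat\pi},g^{\hat\pi})\geq0$; and the reward mismatch must be paired with the signed measure $d^\pi-\mu$ (whence $d^\pi\ominus\mu$), not with $d^\pi$ alone. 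Your later discussion of the symmetric-difference structure is the right correction of this, and with \cref{lemma:regularize-error-bound} in place of the $\epsilon/\alpha$, $\epsilon/\beta$ claims, the rest of your plan goes through exactly as in the paper's argument.
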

The first term in~\eqref{eq:mahalo-bound} is the statistical error, which vanishes as $|\DD_\AA|,|\DD_R|\to\infty$. The second and third terms measure how much the comparator policy $\pi$ is outside of the support of dynamics and reward data distributions. In other words, our learned policy $\hat\pi$ can compete with any policy $\pi$ that is well-supported by both dynamics and reward data. 
%
Compared with Theorem 5 of ATAC in~\cite{cheng2022adversarially}, we have an extra term about the statistical error of the estimated reward, and an off-support reward error, because  we do not assume access to rewards on all transitions. 

Despite using partial reward labels, 
MAHALO has a robust policy improvement property, similar to  ATAC~\cite{cheng2022adversarially}, which guarantees that, for a known range of hyperparameters, the learned policy $\hat\pi$ is no worse than the behavioral policy by more than statistical errors. 
\begin{proposition}[Robust Policy Improvement]\label{prop:rpi-main-txt}
    Under \cref{assumption:realizability}, let $\hat\pi$ be the solution to~\eqref{eq:mahalo-finite}. Let $\epsilon_\mu$ and $\epsilon_\nu$ be as defined in~\cref{thm:main-txt}. 
    %
    We have, for any fixed $\alpha\geq 0$ and $\beta\geq 0$, with high probability,
    \begin{small}
    \begin{align}
        &J(\mu) - J(\hat\pi)\\ \leq & \OO\left(\frac{V_\text{max}}{1-\gamma}\sqrt{\frac{d_{\FF,\GG,\Pi}}{|\DD_\AA|}} + \frac{\alpha R_\text{max}^2 d_{\GG}}{(1-\gamma)|\DD_R|} + \frac{\beta V_\text{max}^2 d_{\FF,\GG,\Pi}}{(1-\gamma)|\DD_\AA|}\right).\nonumber
    \end{align}
    \end{small}
\end{proposition}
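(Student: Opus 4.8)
The plan is to exploit the fact that, in the Stackelberg game~\eqref{eq:mahalo-finite}, the learner $\hat\pi$ maximizes $\LL_{\DD_\AA}(\pi, f^\pi)$ over $\pi\in\Pi$, and under \cref{assumption:realizability} the behavioral policy $\mu$ itself lies in $\Pi$. Therefore $\LL_{\DD_\AA}(\hat\pi, f^{\hat\pi}) \geq \LL_{\DD_\AA}(\mu, f^{\mu})$, and since $\LL_{\DD_\AA}(\mu, f) = \E_{\DD_\AA}[f(s,\mu) - f(s,a)] \approx 0$ because the actions in $\DD_\AA$ are drawn from $\mu$ (up to a sampling error controlled by $\sqrt{d_{\FF,\GG,\Pi}/|\DD_\AA|}$ via a uniform concentration over $\FF$ and $\Pi$), we get that $\LL_{\DD_\AA}(\hat\pi, f^{\hat\pi})$ is nonnegative up to a statistical error. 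First I would make this precise: invoke the concentration lemmas in \cref{sec:proofs} to pass between $\LL_{\DD_\AA}$ and $\LL_\mu$, so that $\LL_\mu(\hat\pi, f^{\hat\pi}) \geq -\OO(V_\text{max}\sqrt{d_{\FF,\GG,\Pi}/|\DD_\AA|})$.

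The second ingredient is a performance-difference-type identity that expresses $J(\hat\pi) - J(\mu)$ in terms of $\LL_\mu(\hat\pi, f^{\hat\pi})$ plus correction terms involving how far $f^{\hat\pi}$ and $\bar g^{\hat\pi}$ are from being the true $Q$-function and reward for $\hat\pi$ — i.e., their Bellman residual $f^{\hat\pi} - \bar g^{\hat\pi} - \PP^{\hat\pi} f^{\hat\pi}$ evaluated under $d^{\hat\pi}$, together with the reward-estimation gap $\bar R - \bar g^{\hat\pi}$ under $\mu$. This is the same decomposition underlying \cref{thm:main-txt}; here, however, we measure the correction terms under $\mu$ (and $d^{\hat\pi}$ restricted appropriately) rather than under a comparator $\rho$. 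The key point is that the followers \emph{minimize} $\LL_{\DD_\AA}(\pi,f) + \alpha\EE_{\DD_R}(g) + \beta\EE_{\DD_\AA}(\pi,f,g)$, so plugging in the realizable choice $f = Q^{\hat\pi}$, $g = R$ (which makes $\EE_{\DD_R}$ and $\EE_{\DD_\AA}$ small and $\LL_{\DD_\AA}$ equal to the true advantage term) gives an upper bound on the followers' optimal objective. Combining this with the realizability of $Q^{\hat\pi}$ and $R$ lets me bound the on-support Bellman error of $f^{\hat\pi}, g^{\hat\pi}$ by $\OO(\epsilon_\nu/\beta + \dots)$-type quantities, which is where the $\alpha R_\text{max}^2 d_\GG/((1-\gamma)|\DD_R|)$ and $\beta V_\text{max}^2 d_{\FF,\GG,\Pi}/((1-\gamma)|\DD_\AA|)$ terms enter.

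Concretely, the steps in order: (1) use optimality of $\hat\pi$ against $\mu\in\Pi$ to show $\LL_{\DD_\AA}(\hat\pi, f^{\hat\pi}) \geq \LL_{\DD_\AA}(\mu, f^{\mu})$; (2) use that $f^\mu$ achieves a small followers' value (by plugging in $Q^\mu, R$) and that $\LL_{\DD_\AA}(\mu, \cdot)$ concentrates to $0$ to conclude $\LL_{\DD_\AA}(\hat\pi, f^{\hat\pi}) \geq -\OO(\text{stat})$; (3) translate to the population quantity $\LL_\mu(\hat\pi, f^{\hat\pi})$ via concentration; (4) via a performance-difference identity, write $J(\mu) - J(\hat\pi) = \tfrac{1}{1-\gamma}\big(-\LL_\mu(\hat\pi,f^{\hat\pi}) + \langle d^{\hat\pi}, f^{\hat\pi} - \bar g^{\hat\pi} - \PP^{\hat\pi} f^{\hat\pi}\rangle + \langle \text{something}, \bar R - \bar g^{\hat\pi}\rangle\big)$ and bound the Bellman-residual and reward-gap terms using the followers' optimality together with \cref{assumption:realizability} (so the comparator critic $Q^{\hat\pi}$ and comparator reward $R$ drive the followers' objective low), picking up the $\alpha$- and $\beta$-weighted statistical terms; (5) collect terms.

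The main obstacle I anticipate is step~(4): unlike the comparator bound in \cref{thm:main-txt}, here the off-support terms must vanish entirely — there is no "off-support error" allowed in robust policy improvement — so I need the performance-difference decomposition to be arranged so that the residual of $f^{\hat\pi}$ is weighted by a distribution that is \emph{exactly} controlled by $\EE_\mu(\hat\pi, f^{\hat\pi}, g^{\hat\pi})$ and $\EE_\nu(g^{\hat\pi})$ (not a general $\rho$), using the fact that $d^{\hat\pi}$ against the behavioral comparator $\mu$ telescopes cleanly. This requires carefully choosing which form of the performance-difference lemma to use (the telescoping of $Q^{\hat\pi}$ along trajectories of $\hat\pi$ versus along trajectories of $\mu$) so that the "distribution mismatch" factor is $\mu$-supported, and then invoking that the followers' minimization, when fed $g = R$ and $f = Q^{\hat\pi}$, certifies small $\mu$-Bellman error for $f^{\hat\pi}, g^{\hat\pi}$ at the cost of the advertised $\alpha,\beta$ statistical terms; handling the cross-term interaction between the reward error $\bar R - \bar g^{\hat\pi}$ and the Bellman error (since $g^{\hat\pi}$ appears in both $\EE_\nu$ and $\EE_\mu$) is the delicate bookkeeping.
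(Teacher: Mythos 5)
There is a genuine gap, and it is exactly at the step you flag as the "main obstacle": step (4) cannot be carried out the way you plan. If you expand $J(\hat\pi)-J(\mu)$ around the \emph{learned} critic, the identity you get (via the telescoping lemma) is
\begin{equation*}
(1-\gamma)\big(J(\hat\pi)-J(\mu)\big)-\LL_\mu(\hat\pi,f^{\hat\pi})
= \E_{\hat\pi}\big[(\bar R + \PP^{\hat\pi} f^{\hat\pi} - f^{\hat\pi})(s,a)\big]
+ \E_\mu\big[(f^{\hat\pi} - \bar R - \PP^{\hat\pi} f^{\hat\pi})(s,a)\big],
\end{equation*}
and the first term is weighted by $d^{\hat\pi}$, which is not covered by the data; no choice of telescoping (roll-in under $\hat\pi$ versus under $\mu$) removes it, and even the $\mu$-weighted term contains the reward gap $\bar R-\bar g^{\hat\pi}$ on $\mu$, which is only controlled where $\nu$ covers $\mu$. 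So this route reintroduces precisely the off-support errors that must be absent from a robust-policy-improvement statement. Moreover, your mechanism for importing the $\alpha$- and $\beta$-terms — bounding $\EE_{\DD_R}(g^{\hat\pi})$ and $\EE_{\DD_\AA}(\hat\pi,f^{\hat\pi},g^{\hat\pi})$ through the followers' optimality as in \cref{lemma:regularize-error-bound} — produces $V_\text{max}/\alpha$ and $V_\text{max}/\beta$ terms, which blow up as $\alpha,\beta\to 0$; the proposition instead must hold for \emph{any} fixed $\alpha,\beta\geq 0$ (including $0$), with a bound that grows linearly in $\alpha,\beta$. This mismatch is a structural sign that controlling the learned functions' errors is the wrong lever here.

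The paper's proof (\cref{prop:rpi-finite}) avoids the decomposition entirely: never evaluate $\hat\pi$ through $f^{\hat\pi}$. Apply the performance difference lemma with the \emph{true} action-value function, $(1-\gamma)(J(\hat\pi)-J(\mu))=\LL_\mu(\hat\pi,Q^{\hat\pi})$, which is exact and has no residual. Then use the followers' minimization in the opposite direction from your step (4): by \cref{assumption:realizability}, $(Q^{\hat\pi},R)\in\FF\times\GG$ is feasible, so the minimized follower objective at $(f^{\hat\pi},g^{\hat\pi})$ is at most the objective at $(Q^{\hat\pi},R)$, whose penalty terms $\alpha\EE_{\DD_R}(R)+\beta\EE_{\DD_\AA}(\hat\pi,Q^{\hat\pi},R)$ are pure sampling noise of size $\OO(\alpha\epsilon_\nu+\beta\epsilon_\mu)$ by \cref{lemma:concentration-mu-q} and \cref{lemma:concentration-nu-r}; dropping the nonnegative penalties on the learned side yields $\LL_\mu(\hat\pi,Q^{\hat\pi})\geq \LL_{\DD_\AA}(\hat\pi,f^{\hat\pi})-\OO(\sqrt{\epsilon_\mu}+\alpha\epsilon_\nu+\beta\epsilon_\mu)$, i.e., the learned critic's relative evaluation is pessimistic. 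Your steps (1)--(3) are correct and are exactly how the paper closes the argument: leader optimality gives $\LL_{\DD_\AA}(\hat\pi,f^{\hat\pi})\geq\LL_{\DD_\AA}(\mu,f^{\mu})$, and since $\LL_\mu(\mu,f)=0$ for every $f$, concentration (\cref{lemma:concentration-l}) gives $\LL_{\DD_\AA}(\mu,f^\mu)\geq-\OO(\sqrt{\epsilon_\mu})$; combining yields the claimed bound with the $\alpha,\beta$-weighted statistical terms entering linearly, as stated.
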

As $|\DD_\AA|$ and $|\DD_R|\to\infty$, we can see that the solution actor policy $\hat{\pi}$ is guaranteed to be no worse than the behavioral policy $\mu$ with \emph{any} choices of fixed $\alpha,\beta \geq 0 $. 
}

\begin{algorithm*}[th]
   \caption{MAHALO (realized by ATAC)}
   \label{alg:mahalo-atac}
\begin{algorithmic}[1]
    \STATE {\bfseries Input:} Batch datasets \highlight{$\DD_R$}, $\DD_\AA$; policy $\pi$, critics $f_1$, $f_2$; coefficients \highlight{$\alpha$}, $\beta\geq 0$ and $\tau, w\in[0,1]$.
    \STATE Initialize target networks $\Bar{f}_1\gets f_1$, $\Bar{f}_2\gets f_2$.
    \FOR{$k=1,2,\ldots,K$}
        \STATE{Sample minibatches  \highlight{$\DD_{R}^{\text{mini}}$} and $\DD_{\AA}^{\text{mini}}$ from \highlight{$\DD_R$} and $\DD_\AA$. }
        \STATE{Compute critic loss $l_\text{critic}(f_i) \gets \LL_{D_\AA^\text{mini}}(\pi, f_i) + \beta \EE_{D^\text{mini}_\AA}^w (\pi, f_i, \highlight{g})$, for $i\in\{1,2\}$}
        \STATE{Compute reward loss \highlight{$l_\text{reward}(g) \gets \alpha \EE_{\DD_R^\text{mini}}(g) + \beta \sum_{i=\{1,2\}}\EE_{D^\text{mini}_\AA}^w (\pi, f_i, {g})$.}}
        \STATE{Update critic network $f_i\gets\text{Proj}_\FF (f_i-\eta_\text{fast} \nabla l_\text{critic})$ for $i\in\{1,2\}$.}
        \STATE{Update reward network \highlight{$g\gets\text{Proj}_\GG (g - \eta_\text{fast} \nabla l_\text{reward})$.}}
        \STATE{Compute actor loss $l_\text{actor}\gets -\LL_{\DD_\AA^\text{mini}}(\pi, f_1)$.}
        \STATE{Update actor network $\pi\gets \text{Proj}_\Pi(\pi - \eta_\text{slow} \nabla l_\text{actor})$}
        \STATE{Update target $\Bar{f}\gets (1-\tau) \Bar{f} + \tau f$ for $(f,\Bar{f})\in\{(f_i,\Bar{f}_i)\}_{i=1,2}$.}
    \ENDFOR
\end{algorithmic}
\end{algorithm*}

\subsubsection{Adaption to Structure within Data}
\vspace{-1mm}

Since MAHALO can be used to solve offline PLfO, MAHALO can be used to solve more restrictive problems via simple reductions. Then, a natural question is: Can MAHALO achieve similar performance as specialized algorithms? The answer is yes. Below we show that this is because MAHALO can adapt to the hidden structure within data despite being agnostic to \rev{the relationship between dynamics and reward datasets}.

\vspace{-3mm}
\paragraph{Offline RL} In offline RL, since the reward and dynamics datasets are aligned, 
we have
$\EE_\nu(g) = \E_\mu[\E_{s'\sim P(\cdot|s,a)}[(g(s,s') - R(s,s'))^2]]$. 
The expected Bellman error of critic $f$ (with the true reward $R$), $\EE_\mu (\pi, f, R)$, can be upper bounded by $2\EE_\nu(g) + 2\EE_\mu(\pi, f, g)$.
%
With sufficiently large $\alpha$ and $\beta$, 
for any actor policy $\pi$, its corresponding critic $f^\pi$ is 
an approximately Bellman-consistent critic function. Therefore, MAHALO behaves similarly as ATAC~\cite{cheng2022adversarially}. 
\rev{This can also be seen from~\cref{thm:main-txt}. Since $|\DD_\AA|=|\DD_R|$, the statistical error is dominated by the first term, which is the same as the statistical error as ATAC. 
In offline RL, good dynamics coverage implies good data coverage, i.e., we have $(d^\pi \ominus \mu) \setminus \rho \leq d^\pi \setminus \rho$. This gives us a similar off-support error term.}

\vspace{-3mm}
\paragraph{Offline RL with unlabeled data} 
For the sake of simplicity, we consider a tabular setting. In this case, regardless of the policy $\pi\in\Pi$, with sufficiently large $\alpha$, the reward function $g^\pi$ such that $g^\pi(s,s')\approx R(s,s')$ when $(s,s')$ is within coverage of reward data $\nu$; \rev{the value of $g^\pi(s,s')$ on other states would adapt pessimistically according to the learner policy.}
The critic $f^\pi$ is effectively conducting a pessimistic policy evaluation for $\pi$ in such a reward function. This means that MAHALO in the tabular setting has a similar behavior as UDS~\cite{yu2022leverage}, a strategy where zero reward is given to \emph{all} unlabeled data. The difference, though, is that MAHALO still assigns an accurate reward to unlabeled dynamics transitions $(s,a,s')$ when $(s,s')$ is within coverage of $\nu$. This implies that MAHALO induces less bias than UDS, even though UDS knows more about the underlying data generation process. 

\vspace{-2mm}
\paragraph{Offline IL and ILfO} 
\rev{In the simplest offline IL setting where only the set of expert demonstrations $\DD_E$ is given, \cref{prop:rpi-main-txt} (with  $\pi_E=\mu$)  shows that the learned policy $\hat\pi$ is no worse than the expert policy $\pi_E$ (in terms of $R(s,s')=R_{\max} \one[(s,s')\in \text{supp}(\nu)]$) up to statistical errors. 
Now consider the scenario where we have access the expert demonstrations $\DD_E$ (which may or may not have actions)\footnote{\rev{When $\DD_E$ contains action, we can alternatively replace $\LL_{\DD_A}(\pi, f)$ with $\LL_{\DD_E}(\pi, f)$, which would ensure robust policy improvement to the expert policy.}} and a separately collected dynamics dataset $\DD_X$ with unknown quality.
\cref{thm:main-txt} (with $R(s,s')=R_{\max} \one[(s,s')\in \text{supp}(\nu)]$) shows that the learned policy would stay within the support of the expert distribution, similar to \citep{wang2019random,smith2023strong}.
}
%
%

\vspace{-5mm}
\paragraph{Summary}
MAHALO is an general, data-agnostic algorithm for solving offline PLfO problems. 
When applied to more restrictive settings when data presents additional structure, MAHALO can automatically adapt to such structure, and achieves behavior on par with existing specialized algorithms. This means that MAHALO can leverage broad sources of data and no special care, e.g. data alignment or management, needs to be taken during data collection. 

\begin{table*}[t]
\caption{Results on D4RL benchmark~\cite{fu2020d4rl}. We show the average normalized score over 50 evaluation trials across 10 random seeds. (The standard errors are reported in~\cref{tab:mujoco-std}). Algorithms with scores greater than 90\% of the best score (excluding Oracle) are in bold. 
$^\dagger$ ATAC only uses data with both dynamics and reward information. $^+$ Oracle has access to reward for all dynamics data.}
\label{tab:mujoco}
\begin{center}
\begin{small}
\begin{tabular}{c|c|c|c|c|c|c|c|c|c||c}
\toprule
Scenario & Dataset & MAHALO & RP & AP & UDS 
& ATAC$^\dagger$ & BCO & BC & {\tiny SMODICE} & Oracle$^+$ \\ 
\hline
ILfO & hopper & \textbf{104.66} & \textbf{97.48} & 45.97 & - 
& - & 46.80 & - & 67.72 & -\\ 
 & walker & \textbf{88.60} & 77.15 & 61.11 & - 
 & - & 63.02 & - & 1.52 & -\\ 
 & halfcheetah & \textbf{61.24} & 36.00 & 4.87 & - 
 & - & 5.16 & - & \textbf{59.64} & -\\ 
\hline
IL & hopper & \textbf{104.06} & \textbf{97.88} & 53.35 & 32.21 
& 63.56 & - & 32.12 & 74.75 & -\\ 
 & walker & \textbf{89.03} & 77.71 & 63.53 & 8.45 
 & 78.35 & - & 18.78 & 0.94 & -\\ 
 & halfcheetah & \textbf{54.99} & 23.20 & 3.74 & 25.82 
 & 3.64 & - & 22.36 & \textbf{58.40} & -\\ 
\hline
RLfO & hopper & \textbf{106.47} & \textbf{105.65} & 47.01 & - 
& - & - & - & - &103.39 \\ 
 & walker & \textbf{96.65} & \textbf{97.26} & 63.30 & - 
 & - & - & - & - & 98.52 \\ 
 & halfcheetah & 50.38 & \textbf{68.66} & 3.35 & - 
 & - & - & - & - & 63.57 \\ 
\hline
RL-expert & hopper & 87.73 & \textbf{105.56} & 51.54 & \textbf{98.63} 
& 65.45 & - & - & - & 103.39 \\ 
 & walker & \textbf{103.18} & \textbf{98.31} & 56.27 & 72.97 
 & 66.40 & - & - & - &98.52 \\ 
 & halfcheetah & 48.43 & \textbf{64.37} & 3.47 & 13.68 
 & 3.41 & - & - & - & 63.57 \\ 
\hline
RL-sample & hopper & \textbf{103.08} & \textbf{101.66} & 71.92 & 0.95 
& 71.50 & - & -  & - & 103.34 \\ 
 & walker & \textbf{95.00} & \textbf{94.59} & 5.94 & 0.00 
 & 0.48 & - & - & - & 95.73 \\ 
 & halfcheetah & \textbf{68.30} & \textbf{68.71} & 13.26 & 20.36 
 & 19.38 & - & - & - & 69.91 \\ 
\bottomrule
\end{tabular}
\end{small}
\end{center}
\vskip -0.2in
\end{table*}

\begin{table}[t]
\caption{Five scenarios of PLfO considered in our experiments.}
\label{tab:scenarios}
\begin{center}
\begin{small}
\begin{tabular}{c|l|l}
\toprule
Scenarios & Mixed Quality Data & Expert \\
\hline
ILfO & state + action & state\\
\hline
IL   & state + action & state + action\\
\hline
RLfO & state + action & state + reward\\
\hline
RL-expert & state + action & state + action \\
& & + reward\\
\hline
RL-sample & state + action +  & -\\
& (sampled trajs) reward & \\
\bottomrule
\end{tabular}
\end{small}
\end{center}
\vskip -7mm
\end{table}

\begin{table*}[t]
\caption{Success rate of the final policy (with the exception of SMODICE$^*$) on Meta-World~\cite{yu2020meta}. The success rate is computed over 50 evaluation episodes. We report the average success rate across 10 random seeds. (The standard errors across random seeds are reported in~\cref{tab:mw-rewards-std}). We consider an episode success if it is able to reach the goal within $128$ steps. 
$^*$ SMODICE 
often diverges during training; we therefore take the success rate of its best performing policy during training instead of the final one.
}
\label{tab:mw-rewards}
\begin{center}
\begin{small}
\begin{tabular}{c|c|c|c|c|c|c|c|c|c||c}
\toprule
Scenario & Dataset & MAHALO & RP & AP & UDS 
& ATAC$^\dagger$ & BCO & BC & {\tiny SMODICE}$^*$ & Oracle$^+$ \\ 
\hline
ILfO & reach & \textbf{65.0} & \textbf{62.6} & 13.0 & - 
& - & 11.6 & - & 10.6 & - \\ 
 & push & \textbf{62.4} & 11.6 & 12.4 & - 
 & - & 14.6 & - & 0.4 & - \\ 
 & plate-slide & \textbf{100.0} & 22.0 & \textbf{94.0} & - 
 & - & 75.4 & - & 0.0 & - \\ 
 & handle-press & 75.4 & 32.4 & \textbf{96.6} & - 
 & - & \textbf{87.8} & - & 16.6 & - \\ 
 & button-press & \textbf{100.0} & \textbf{100.0} & \textbf{93.6} & - 
 & - & \textbf{93.8} & - & 0.4 & - \\ 
\hline
IL & reach & 24.6 & 23.6 & 38.0 & 21.6 
 & \textbf{98.0} & - & 62.2 & 19.8 &  - \\ 
 & push & \textbf{92.6} & 11.8 & 35.2 & 79.2 
 & 50.0 & - & \textbf{91.4} & 0.2 & - \\ 
 & plate-slide & 80.4 & 34.4 & \textbf{89.8} & 76.2 
 & \textbf{89.4} & - & \textbf{85.3} & 0.0 & - \\ 
 & handle-press & 71.4 & 34.8 & \textbf{100.0} & 35.2 
 & \textbf{97.0} & - & 75.2 & 20.2 & - \\ 
 & button-press & \textbf{100.0} & \textbf{99.8} & \textbf{96.2} & \textbf{100.0} 
 & \textbf{99.6} & - & \textbf{100.0} & 0.2 & - \\ 
\hline
RLfO & reach & \textbf{86.4} & \textbf{88.0} & 15.0 & - 
 & - & - & - & - & 51.6\\ 
 & push & \textbf{58.2} & 32.0 & 20.2 & - 
 & - & - & - & - & 91.8 \\ 
 & plate-slide & \textbf{100.0} & \textbf{100.0} & 83.2 & - 
 & - & - & - & - & 100.0\\ 
 & handle-press & 77.8 & 84.4 & \textbf{96.0} & - 
 & - & - & - & - & 78.6 \\ 
 & button-press & \textbf{100.0} & \textbf{100.0} & \textbf{92.6} & - 
 & - & - & - & - & 100.0 \\ 
\hline
RL-expert & reach & 39.2 & 54.0 & 42.0 & 57.8 
& \textbf{98.4} & - & - & - & 51.6 \\ 
 & push & \textbf{95.6} & 88.6 & 40.4 & \textbf{90.4} 
 & \textbf{99.6} & - & - & - & 91.8 \\ 
 & plate-slide & \textbf{100.0} & \textbf{100.0} & 89.6 & \textbf{99.4} 
 & 85.4 & - & - & - & 100.0 \\ 
 & handle-press & 72.6 & 82.0 & \textbf{97.6} & 81.2 
 & \textbf{99.0} & - & - & - & 78.6 \\ 
 & button-press & \textbf{100.0} & \textbf{100.0} & \textbf{97.0} & \textbf{100.0} & \textbf{100.0} 
 & - & - & - & 100.0 \\ 
\hline
RL-sample & reach & \textbf{86.6} & \textbf{87.4} & 63.0 & \textbf{87.4} 
 & \textbf{85.4} & - & - & - & 88.8 \\ 
 & push & 40.6 & \textbf{47.8} & 29.2 & 35.8 
 & 35.0 & - & - & - & 46.0 \\ 
 & plate-slide & \textbf{100.0} & \textbf{100.0} & \textbf{97.6} & \textbf{100.0} 
 & \textbf{99.6} & - & - & - & 100.0 \\ 
 & handle-press & 78.4 & 81.0 & \textbf{100.0} & 76.8 
 & 82.8 & - & - & - & 83.4 \\ 
 & button-press & \textbf{100.0} & \textbf{100.0} & \textbf{100.0} & \textbf{100.0} 
 & \textbf{100.0} & - & - & - & 100.0 \\ 
 \bottomrule
\end{tabular}
\end{small}
\end{center}
\vskip -0.2in
\end{table*}

\vspace{-1mm}
\subsection{Implementation}
\vspace{-1mm}

The MAHALO realization above can be implemented by making a few simple modifications to ATAC~\cite{cheng2022adversarially}. 
The resulting algorithm is presented in \cref{alg:mahalo-atac}, with the modifications marked in \highlight{magenta}. 
This implementation of MAHALO is based on a reduction of two-player game to no-regret policy optimization~\cite{cheng2022adversarially}. 
We use ADAM~\cite{kingma2014adam} optimizer with a faster learning rate $\eta_\text{fast}$ for the critic and reward functions, and a smaller learning rate $\eta_\text{slow}$ for the actor.

\vspace{-2mm}
\subsubsection{Actor and Critic Update}
\vspace{-1mm}
We use a strategy for updating the critic similar to ATAC. 
ATAC uses a surrogate to the Bellman error term in~\cref{eq:bellman-consistency} called double Q residual algorithm (DQRA) loss, which combines double Q heuristics~\cite{fujimoto2018addressing}, the residual algorithm~\cite{baird1995residual}, and target networks~\cite{mnih2015human}. 
The critic is parameterized by two networks $\{f_1, f_2\}$, each with a delayed target $\{\Bar{f}_1, \Bar{f}_2\}$. The target value is computed by taking the minimum of the two targets $\Bar{f}_{\min}(s,a) = \min_{i\in\{1,2\}} \Bar{f}_i(s,a)$.
DQRA uses a convex combination of the temporal difference (TD) losses of the critic network and target networks to stabilize learning. For $i\in\{1, 2\}$, the DQRA loss is defined as
\begin{align}
    \EE_{\DD_\AA^\text{mini}}^w(\pi, f_i, g) &\coloneqq (1-w)\EE_{\DD_\AA^\text{mini}}^\text{td}(\pi, f_i, f_i, g)\\
    &\qquad + w \EE_{\DD_\AA^\text{mini}}^\text{td}(\pi, f_i, \Bar{f}_{\min},g),\nonumber
\end{align}
where $w\in[0,1]$ is the weight and the TD loss is given by
\begin{equation*}\small
    \EE_{\DD_\AA^\text{mini}}^\text{td}(\pi, f, f',g) \coloneqq \E_{\DD_\AA^\text{mini}}[(f(s,a) - g(s,s') - \gamma f'(s', \pi))^2].
\end{equation*}
Note that here we use predicted reward $g(s,s')$ since reward $r$ is not observed in $\DD_\AA$. After each gradient update, we apply an $\ell_2$ projection on critic network weights (not on bias terms) (line 7) as is done in ATAC~\cite{cheng2022adversarially}.
We update the actor using the same way as ATAC. 
The actor policy optimizes for a \emph{single} critic $f_1$ 
and uses a Lagrangian relaxation of minimum entropy constraint similar to SAC~\cite{haarnoja2018soft}. 

\vspace{-1mm}
\subsubsection{Reward Update}
\vspace{-1mm}
The major difference between \cref{alg:mahalo-atac} and ATAC is the reward function update. 
We estimate the reward prediction loss empirically from minibatches sampled from the reward dataset $\DD_R$: $\EE_{\DD_R^\text{mini}} (g) = \E_{\DD_R^\text{mini}}[(g(s,s') - r)^2]$. The reward loss is the weighted sum of reward prediction loss $\EE_{\DD_R^\text{mini}} (g)$ and DQRA losses $\sum_{i=\{1,2\}}\EE_{D^\text{mini}_\AA}^w (\pi, f_i, {g})$ (line 6).  The DQRA losses connect the reward to the critic which minimizes also the performance difference; as a result, the learned reward function is also pessimistically estimated. In other words, the critic and the reward functions jointly form a hypothesis that adversarially adapts to the learner's policy.

\vspace{-2mm}
\section{Experiments}
\vspace{-1mm}
We aim to answer the following questions: 
\begin{enumerate*}[label=(\alph*)]
    \item Is MAHALO effective in solving different instances of offline PLfO problems?
    \item Can MAHALO achieve similar performance as other specialized algorithms?
    \item Whether MAHALO can obtain comparable performance to oracle algorithms with full reward and dynamics information?
    \item In what situation is the pessimistic reward function of MAHALO critical in achieving good performance?
\end{enumerate*}

\vspace{-4mm}
\paragraph{Scenarios} To answer question (a), we design five instances of offline PLfO inspired by practical scenarios. \textbf{ILfO:} The learner is presented with a relatively small set of expert-level state-only trajectories, and a dynamics dataset of mixed quality. The dynamics data contains trajectories collected by policies with different performance-level. \textbf{IL:} Similar to ILfO, but we additionally provide expert actions to the learner. \textbf{RLfO:} Similar to ILfO, but the learner is additionally given the reward along the expert trajectories. This simulates the scenario where we provide manual labels to the expert demonstrations. \textbf{RL-expert:} The learner is presented with the expert action in addition to what is given in RLfO. \textbf{RL-sample:} The learner is provided with the mixed quality dynamics data, with a subset of trajectories labeled with reward. The information available in each scenario is summarized in~\cref{tab:scenarios}.

\vspace{-0.1in}
\paragraph{Baselines} To address question (b), we consider a few specialized baseline algorithms for each scenario. \rev{We consider behavior cloning from observation (\textbf{BCO})~\cite{torabi2018behavioral} as a baseline for ILfO, and behavior cloning (\textbf{BC}) for IL. We also include \textbf{SMODICE}~\cite{ma2022versatile}, a state-of-the art offline ILfO algorithm as a baseline algorithm for ILfO, and a variation of SMODICE which uses a state-action discriminator as a baseline for offline IL.}
Since IL, RL-expert and RL-sample can be viewed as RL with unlabeled data, we present two baselines for these settings: running an offline RL algorithm, we use \textbf{ATAC}~\cite{cheng2022adversarially} since it is the closest to MAHALO, only on labeled data and \textbf{UDS}~\cite{yu2022leverage}. We note that UDS 
requires knowing the common transitions between reward and dynamics datasets. 
We implement UDS 
with ATAC, which is slightly different than~\cite{yu2022leverage},where CQL~\cite{kumar2020conservative} is used. Since learning inverse dynamics is a common approach to learning with observations~\cite{torabi2018behavioral,edwards2020estimating}, we implement a baseline called action prediction (\textbf{AP}). It pretrains an inverse dynamics model on the dynamics dataset, predicts the missing actions in the reward dataset, and runs ATAC~\cite{cheng2022adversarially} on the reward dataset. 
For question \rev{(d)}, we implement a baseline algorithm called reward prediction (\textbf{RP}). It pretrains a reward function on the reward dataset. This reward function is then fixed for \rev{offline RL training using ATAC~\cite{cheng2022adversarially}}. Comparing MAHALO with RP can give us information on whether the adversarial training of reward function in MAHALO is effective.
Finally, \rev{to answer question (c)}, we train ATAC on a fully-labeled dynamics dataset (\textbf{Oracle}) to evaluate if MAHALO can achieve comparable performance with a privileged offline RL algorithm (which has access to more information). 

We evaluate MAHALO and above-mentioned algorithms on two sets of environments: locomotion tasks from the D4RL benchmark~\cite{fu2020d4rl} and robot manipulation tasks from the Meta-World domain~\cite{yu2020meta}. 

\vspace{-2mm}
\subsection{Evaluation on D4RL}
\vspace{-1mm}

We consider three environments from D4RL~\cite{fu2020d4rl}: hopper-v2, walker2d-v2, and halfcheetah-v2. 
The rewards in the three environments promote moving forward. In hopper and walker2d, the agent is required to stay within a health height range, otherwise the episode terminates. 
We construct a mixed quality dynamics dataset with 3.4 M transitions through concatenating the random, medium, medium-replay, and full-replay datasets. The expert data is consisted of 10k transitions ($\sim10$ trajectories) generated by randomly sampling trajectories from the expert dataset. For RL-sample, we sample 34k transitions (1\% of overall data).

The normalized return for five scenarios for each dataset is listed in~\cref{tab:mujoco}. MAHALO achieves top performance in almost every task except halfcheetah. 
MAHALO also shows comparable performance with the privileged oracle.
Reward prediction (RP) also performs well in all RL tasks. 
It, however, does not perform as well in IL tasks, since the reward function of RP can not generalize beyond the constant training reward. UDS 
does not perform well in these scenarios potentially due to being overly pessimistic. \rev{SMODICE uniformly performs worse than MAHALO in hopper and walker tasks. We hypothesize that the fixed discriminator reward function of SMODICE becomes overly pessimistic and discourages policy learning when the expert and dynamics distribution mismatches with one another.}

\vspace{-2mm}
\subsection{Evaluation on Meta-World}
\vspace{-1mm}

We additionally evaluate MAHALO and other baseline algorithms on five robot manipulation tasks from Meta-World~\cite{yu2020meta}. 
For Meta-World tasks, we use a non-positive reward function that promotes agent to reach the goal, which is a terminal state, as fast as possible. 
%
We generate a mixed quality dataset by adding different level of noises to a scripted policy provided by Meta-World for each task. We collect $100$ trajectories each for zero-mean Gaussian noise with standard deviations $[0.1, 0.5, 1.0]$, and use these $300$ trajectories as the mixed quality dataset. We separately collect $100$ expert trajectories. For, RL-sample, we randomly sample trajectories to label 50\% of the dataset. Note that we use more reward data for Meta-World since the state space (which includes the space of goals) is larger.

We observe that MAHALO is one of the overall best-performing algorithms. ATAC also achieves strong results in IL and RL-expert. This is because ATAC, in these scenarios, is only presented with expert data (we do not see similar effect in D4RL tasks since the expert dataset is much smaller there). UDS 
shows similar performance as MAHALO in RL scenarios. 
They, however, perform worse than MAHALO in IL. Reward prediction (RP) performs worse than MAHALO in most cases, especially in IL and ILfO scenarios. 
This shows that the pessimistic reward function in MAHALO is critical in achieving robust performance across different tasks and scenarios. 
\rev{
SMODICE is not able to train reliably, and often diverges during training. We therefore take the success rate of the best performing policy during training instead of the final one. We find that the best policy of SMODICE  underperforms the final policy of most algorithms, including MAHALO, potentially due to this instability in training.
}

\vspace{-3mm}
\section*{Acknowledgements}
We thank Andrey Kolobov for sharing data collected on the Meta-World tasks. We thank Mohak Bhardwaj for providing the script for processing results and Sinong Geng for helpful discussions on Meta-World experiments.
\bibliography{ref}
\bibliographystyle{icml2023}

\newpage
\appendix
\onecolumn
\section{Related Work}\label{sec:related-work}

\paragraph{Offline RL}
Existing work on offline RL can be broadly classified into model-based approaches 
\cite{yu2020mopo, kidambi2020morel, yu2021combo,uehara2021pessimistic,rigter2022rambo} and model-free approaches \cite{jin2021pessimism,fujimoto2021minimalist,xie2021bellman,cheng2022adversarially, kumar2020conservative, kostrikov2021offline}. The main challenge to offline RL is the mismatch between the offline dataset and test-time distribution. Two common strategies to offline RL are behavior regularization
\cite{fujimoto2021minimalist, wu2019behavior,kostrikov2021offline} (restricting policy to be close to the behavioral policy) and pessimism~\cite{jin2021pessimism,liu2020provably,xie2021bellman,cheng2022adversarially, kumar2020conservative}. Our paper follows more closely with model-free approaches built upon the concept of pessimism. In particular, we draw inspirations from approaches which conduct pessimistic policy evaluation on a version space of data-consistent hypotheses~\cite{xie2021bellman,cheng2022adversarially,uehara2021pessimistic,rigter2022rambo}.
However, we consider a more general formulation than offline RL, where reward or action can be missing from a subset of data. \citet{edwards2020estimating} approach RL from observation by combining offline RL with a separately learned inverse dynamics model. It however makes implicit assumption that the dynamics data is abundant. 

\paragraph{Offline RL with unlabeled data} \citet{yu2022leverage} and \citet{singh2020cog} consider a setting that reward can be missing from a subset of data, and uses a strategy of giving zero reward to these transitions. We show that MAHALO has a similar behavior when applied to this setting, while incurring less bias by correctly labeling rewards to transitions which are in support of the reward data. \rev{Recently, a strategy called PDS~\cite{hu2023provable} is proposed to construct a pessimistic reward function using a ensemble of neural networks. MAHALO differs from PDS as the pessimistic reward function is constructed together with a pessimistic critic function. We also provide theoretical analysis of MAHALO in a general function approximator setting while PDS only has theoretical guarantees for linear MDPs.  \citet{li2023mind} consider a related setting where rewards can be mislabeled or imperfect. This is slightly different to our problem formation, where the reward (and action) for each transition is either correct or missing.}

\paragraph{Offline IL} Our work is also related to offline IL, where the expert dataset and the optional, often separately-collected, dynamics dataset can have different coverage. \citet{zolna2020offline} adapt GAIL~\cite{ho2016generative} to an offline setting. \citet{kim2021demodice} seek to match the state-action occupancy using distribution correction estimation. \citet{chang2021mitigating} minimize the state-action occupancy divergence using a pessimistic model. \rev{More recently, DWBC~\cite{xu2022discriminator} proposes to approach offline IL through weighted behavior cloning, where the weights are given by a discriminator. CLARE~\cite{yue2023clare} extends maximum entropy inverse reinforcement learning to an offline setting with an unknown quality dynamics dataset.} \citet{smith2023strong} propose to run offline RL with a binary reward indicating whether the transition is generated by the expert. These approaches require knowing the actions in expert demonstrations, which is a more restrictive assumption than our formulation.

\paragraph{IL from observations} ILfO further relaxes the assumption of observing expert actions. Earlier approaches~\cite{torabi2018behavioral,torabi2019adversarial,torabi2019recent,yang2019imitation,schmeckpeper2021reinforcement} focus more on the online setting, where additional data collection is allowed when training the policy. There are a few recent papers that consider the offline ILfO setting. For example, \citet{zhu2020off,ma2022versatile} take an off-policy distribution matching approach, and \citet{kidambi2021mobile} use a model-based approach to minimax IL. These work, however, assumes a near-optimal expert. MAHALO, however, can work with both near-optimal expert trajectories and non-expert state-only trajectories labeled with reward.

\paragraph{Learning with dynamics mismatch}
Learning from observation is often closely related to learning with dynamics mismatch
\cite{liu2019state,desai2020imitation,cao2021learning,radosavovic2021state,cao2021feasibility,ma2022versatile}, as it is hard to directly use action information from a different MDP. 
Although MAHALO can potentially handle this setting, we consider it beyond the scope of this paper and defer it to future work. 

\paragraph{Learning reward functions} When applied to offline IL/ILfO settings, our approach also has connection with work on reward learning from demonstrations~\cite{ng2000algorithms,abbeel2004apprenticeship,ziebart2008maximum,fu2018learning,fu2018variational,singh2019end,eysenbach2021replacing,konyushkova2020semi,kim2020domain}, as MAHALO produces a reward function. The main difference between MAHALO and this line of work is that we do not treat reward learning and policy learning as two separate processes. We view the learned reward function rather as a by-product of the algorithm.

\section{Implementation Details}\label{sec:exp-details}

\subsection{Neural Network Architectures}
For MAHALO and all baseline algorithms, we parameterize the policy and (if applicable) critic networks with fully connected neural networks with $3$ hidden layers of size $256$. The policy is implemented as a tanh Gaussian distribution, where the mean and standard deviations are predicted by the two heads of the policy network. For MAHALO, we use the same network structure for the reward function. Our code is adapted from \href{https://github.com/chinganc/lightATAC}{https://github.com/chinganc/lightATAC}.

\subsection{Hyperparameters}
Since both MAHALO and most baseline algorithms are implemented via modifying ATAC~\cite{cheng2022adversarially}, we use similar choices of hyperparameters across different algorithms. 
We use a fixed learning rate across all algorithms and experiments. Same as~\citet{cheng2022adversarially}, we use $\eta_\text{slow}=5\times 10^{-7}$ for policy updates and $\eta_\text{fast}=5\times 10^{-4}$ for updating the critic (and reward for MAHALO). We use a common discount factor of $\gamma=0.99$ for all experiments. For target update, we use $\tau=0.005$, same as~\citet{cheng2022adversarially,haarnoja2018soft}. We use a fixed batch size of $256$. Our ATAC implementation is from \href{https://github.com/chinganc/lightATAC}{https://github.com/chinganc/lightATAC}.

To tune hyperparameter $\beta$, we run all algorithms with $\beta\in[0.1, 1.0, 10.0, 100.0, 1000.0]$ and report results using the best $\beta$ for each scenario and task. For MAHALO, we tune hyperparameter $\alpha$ by experimenting with $(\alpha/\beta)\in[10.0, 100.0, 1000.0, 10000.0, 100000.0]$. To make it a fair comparison with other baseline algorithms, we report results generated by a \emph{fixed} ratio $(\alpha/\beta)$ for each set of results. We use a fixed ratio of $(\alpha/\beta)\equiv100000.0$ for D4RL tasks and $(\alpha/\beta)\equiv100.0$ for Meta-World. 

\rev{For SMODICE~\cite{ma2022versatile}, we adapt the implementation from \href{https://github.com/JasonMa2016/SMODICE}{https://github.com/JasonMa2016/SMODICE} and use the default parameters therein. We additionally experiment with SMODICE using $\chi^2$ divergence (SMODICE uses KL divergence by default) and report its performance in~\cref{tab:mujoco-std} and~\cref{tab:mw-rewards-std} in~\cref{sec:std}.}

\subsection{Training}

For all algorithms based on ATAC, we warm-start training with 1) behavior cloning the behavioral policy and 2) learning a critic function to match the value of the behavioral policy. For MAHALO, we additionally train the reward function in this phase. For D4RL, we ran 1k steps of pretraining for D4RL tasks and 1M steps of pretraining for Meta-World. We then run ATAC and MAHALO updates for 1M steps and report the results.


\rev{
\section{Theoretical Analysis}\label{sec:proofs}

\subsection{Concentration Analysis}
In this section, we state a few concentration results useful in finite-sample regime. These results can be obtained by straightforward modifications of the analysis in~\cite{cheng2022adversarially}. As such, we will not provide detailed proofs. We will instead point out the the correspondence of each lemma in~\cite{cheng2022adversarially} and required modifications.
We first provide the definition of covering number, which will be used to establish concentration results.
\begin{definition}[$\epsilon$-covering number]\label{def:covering-number}
An $\epsilon$-cover of a set $\Phi$ with respect to a metric $d(\cdot,\cdot)$ is a set $\{\tilde \phi_1,\ldots,\tilde \phi_n\}\subseteq\Phi$ such that for each $\phi\in\Phi$, there exists some $\tilde \phi_i\in\{\tilde \phi_1,\ldots,\tilde \phi_n\}$ such that $d(\phi, \tilde \phi_i)\leq \epsilon$. We define the $\epsilon$-covering number of a set $\Phi$ under a metric $d$ to be the cardinality of the smallest $\epsilon$-cover, denoted $\NN_d(\Phi,\epsilon)$. 

In particular, we use $\NN_\infty(\FF,\epsilon)$ to denote the $\epsilon$-covering number under $\ell_\infty$ norm on set $\FF \subseteq (\SS\times\AA\to[0,V_\text{max}])$:
\begin{equation}
    d_\FF(f_1, f_2) \coloneqq \|f_1 - f_2\|_\infty = \sup_{(s,a)\in\SS\times\AA}|f_1(s,a)-f_2(s,a)|.
\end{equation}
Similarly, we use $\NN_\infty(\GG,\epsilon)$ to denote the $\epsilon$-covering number under $\ell_\infty$ norm on set $\GG \subseteq (\SS\times\SS \to [0, R_\text{max}])$. Finally, we define metric for the policy class as
\begin{equation}
    d_\Pi(\pi_1, \pi_2) \coloneqq \|\pi_1 - \pi_2\|_{\infty,1} = \sup_{s\in\SS}\|\pi_1(\cdot|s) - \pi_2(\cdot|s)\|_1,
\end{equation}
and denote the corresponding $\epsilon$-covering number as $\NN_{\infty,1}(\Pi, \epsilon)$.
\end{definition}

We first establish the concentration results for $\EE_{\DD_\AA}(\pi, f, g)$. The following lemma can obtained by modifying Theorem 9 from~\cite{cheng2022adversarially} by additionally considering an $\frac{R_\text{max}}{|\DD_\AA|}$-cover of $\GG$. 
\begin{lemma}\label{lemma:concentration-mu}
Suppose \cref{assumption:realizability} holds. With probability at least $1-\delta$, for any $\pi\in\Pi$ and $f\in\FF$, $g\in\GG$,
\begin{align}
    \sqrt{\EE_\mu(\pi, f, g)} - \sqrt{\EE_{\DD_\AA}(\pi, f, g)}  &\leq \OO \left(V_\text{max} \sqrt{\frac{ \log ( |\NN_\infty(\FF, V_\text{max} / |\DD_\AA|)| |\NN_\infty(\GG, R_\text{max} / |\DD_\AA|)| |\NN_{\infty,1}(\Pi, 1 / |\DD_\AA|)| / \delta)}{|\DD_\AA|}}\right) \nonumber\\
    &\eqqcolon \OO(\sqrt{\epsilon_\mu}),
\end{align}
\end{lemma}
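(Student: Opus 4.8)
The plan is to follow the standard one-sided uniform-deviation argument for Bellman-error estimation, adapting Theorem~9 of \cite{cheng2022adversarially} to account for the learned reward function $g$ appearing in place of the observed reward. First I would recall the structure of $\EE_{\DD_\AA}(\pi,f,g)$ in~\eqref{eq:bellman-consistency-finite}: it is an empirical squared Bellman residual minus the empirical squared residual of the best-fitting $f'\in\FF$. Under \cref{assumption:realizability} (completeness: $\inf_{f'\in\FF}\|f'-\bar g-\PP^\pi f\|_\mu=0$), the population minimizer of the second term is exactly $\bar g + \PP^\pi f$ projected into $\FF$, so $\EE_\mu(\pi,f,g)=\|f-\bar g-\PP^\pi f\|_\mu^2$ is recovered up to the realizability slack. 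The key identity is the usual decoupling: for a fixed target $h(s',\cdot)$ built from $f$ and $g$, the difference of squared TD losses $\E_{\DD_\AA}[(f(s,a)-g(s,s')-\gamma f(s',\pi))^2]-\E_{\DD_\AA}[(f'(s,a)-g(s,s')-\gamma f(s',\pi))^2]$ concentrates around its population counterpart, and the conditional-expectation cross terms (which involve $\E[g(s,s')\mid s,a]=\bar g(s,a)$ thanks to the dynamics-compliance assumption $s'\sim P(\cdot\mid s,a)$ on $\DD_\AA$) telescope to give $\|f-\bar g-\PP^\pi f\|_\mu^2$.

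The key steps, in order: (1) fix a deterministic triple $(\pi,f,g)$ and $f'$; write the empirical quantity as an average of i.i.d. bounded terms (bounded by $O(V_{\max}^2)$ since $f,f'\in[0,V_{\max}]$, $g\in[0,R_{\max}]$, $R_{\max}\le V_{\max}$), and apply Bernstein's inequality, exploiting that the variance of each term is controlled by its mean (a "Freedman-style" self-bounding property for squared-residual differences) so that the deviation scales like $\sqrt{\epsilon_\mu\,\EE_\mu}+\epsilon_\mu$; (2) rearrange this one-sided inequality into the stated form $\sqrt{\EE_\mu}-\sqrt{\EE_{\DD_\AA}}\le O(\sqrt{\epsilon_\mu})$ using the elementary fact that $a^2-b^2\le c a + c^2 \implies a - b \le O(c)$; (3) upgrade from a single triple to all of $\FF\times\GG\times\Pi$ by a union bound over an $\epsilon$-net, taking an $(V_{\max}/|\DD_\AA|)$-cover of $\FF$, an $(R_{\max}/|\DD_\AA|)$-cover of $\GG$, and a $(1/|\DD_\AA|)$-cover of $\Pi$, and checking that the discretization error introduced by moving to the nearest net element is $O(1/|\DD_\AA|)$ in each term (Lipschitz-ness of the squared TD loss in $f,g,\pi$ with constants $O(V_{\max})$), hence dominated; this is exactly where the three covering numbers $\NN_\infty(\FF,\cdot)$, $\NN_\infty(\GG,\cdot)$, $\NN_{\infty,1}(\Pi,\cdot)$ enter the log term and define $\epsilon_\mu$.

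The main obstacle is step (3) for the reward class: in \cite{cheng2022adversarially} the reward is observed and fixed, so only $\FF$ and $\Pi$ need to be covered, whereas here $g$ is a free function that enters both the "data" part $g(s,s')$ of the TD target and (through $\bar g$) the population Bellman operator. I need to verify that the self-bounding variance control still holds when the "label" $g(s,s')$ is itself a function being optimized — i.e., that the martingale/Bernstein argument applies uniformly over $g\in\GG$ after netting — and that moving $g$ to its nearest net element perturbs $\EE_{\DD_\AA}$ and $\EE_\mu$ by only $O(R_{\max}/|\DD_\AA|)$, which follows since shifting $g$ by $\le R_{\max}/|\DD_\AA|$ in $\ell_\infty$ shifts each squared residual by $O(V_{\max} R_{\max}/|\DD_\AA|)$. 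Everything else is a routine transcription of the ATAC analysis, which is why I would not reproduce the full computation and would instead cite Theorem~9 of \cite{cheng2022adversarially} with the additional $\GG$-cover as the only new ingredient.
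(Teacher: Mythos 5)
Your proposal is correct and takes essentially the same route as the paper: the paper's own proof of this lemma is exactly the observation that Theorem~9 of \cite{cheng2022adversarially} goes through verbatim once one additionally takes an $R_\text{max}/|\DD_\AA|$-cover of $\GG$, which is precisely your plan, with your steps (1)--(3) spelling out the Bernstein-plus-covering argument that the cited theorem already contains. Your identification of the $\GG$-cover (and the $O(V_\text{max}R_\text{max}/|\DD_\AA|)$ discretization check for the learned reward entering the TD target) as the only new ingredient matches the paper's treatment.
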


We then consider $\EE_{\DD_\AA}(\pi, Q^\pi, R)$. The following Lemma is a direct result of Theorem 8 from~\cite{cheng2022adversarially} when~\cref{assumption:realizability} holds. 
\begin{lemma}\label{lemma:concentration-mu-q}
Suppose $\Pi$, $\FF$ and $\GG$ satisfies~\cref{assumption:realizability}. With probability at least $1-\delta$, for any $\pi\in\Pi$,
\begin{equation}
    \EE_{\DD_\AA}(\pi, Q^\pi, R) \leq \OO \left(\frac{V_\text{max}^2 \log ( |\NN_\infty(\FF, V_\text{max} / |\DD_\AA|)| 
    |\NN_{\infty,1}(\Pi, 1 / |\DD_\AA|)| / \delta)}{|\DD_\AA|}\right) \leq \OO(\epsilon_\mu).
\end{equation}
\end{lemma}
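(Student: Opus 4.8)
The plan is to recognize $\EE_{\DD_\AA}(\pi, Q^\pi, R)$ as an \emph{excess empirical least-squares risk} for a realizable regression problem, which concentrates at the fast $\OO(1/|\DD_\AA|)$ rate rather than the slow $\OO(1/\sqrt{|\DD_\AA|})$ rate. The crucial structural fact is that $Q^\pi$ is the Bellman fixed point: defining the regression target $y \coloneqq R(s,s') + \gamma Q^\pi(s', \pi)$ on each transition $(s,a,s')\in\DD_\AA$, the Bellman equation $Q^\pi = \bar R + \PP^\pi Q^\pi$ gives $\E[y \mid s,a] = Q^\pi(s,a)$. Hence the residual $\xi \coloneqq y - Q^\pi(s,a)$ is conditionally mean-zero and bounded by $\OO(V_\text{max})$, and $Q^\pi$ is the population risk minimizer, which by \cref{assumption:realizability} also lies in $\FF$.

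First I would rewrite the estimated Bellman error using the algebraic identity $(Q^\pi - y)^2 - (f' - y)^2 = -(f'-Q^\pi)^2 + 2\xi(f' - Q^\pi)$, so that
\begin{equation*}
\EE_{\DD_\AA}(\pi, Q^\pi, R) = \max_{f'\in\FF}\Big( 2\,\E_{\DD_\AA}[\xi\,(f'-Q^\pi)] - \E_{\DD_\AA}[(f'-Q^\pi)^2] \Big).
\end{equation*}
The problem thus reduces to a one-sided uniform bound on the noise-weighted empirical process $\E_{\DD_\AA}[\xi\,(f'-Q^\pi)]$ over $f'\in\FF$, where the available negative quadratic term $-\E_{\DD_\AA}[(f'-Q^\pi)^2]$ can absorb the deviation.

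Next I would apply a localized Bernstein (or Freedman) inequality to $\E_{\DD_\AA}[\xi\,(f'-Q^\pi)]$ for each fixed $f'$: since the summands are conditionally mean-zero with conditional variance at most $\OO(V_\text{max}^2)\cdot\E_{\DD_\AA}[(f'-Q^\pi)^2]$, the deviation is controlled by $\sqrt{V_\text{max}^2 \,\E_{\DD_\AA}[(f'-Q^\pi)^2]\, \iota / |\DD_\AA|} + V_\text{max}^2 \iota/|\DD_\AA|$ with $\iota$ a log-cardinality term. Applying AM-GM gives $2\,\E_{\DD_\AA}[\xi(f'-Q^\pi)] \le \tfrac12 \E_{\DD_\AA}[(f'-Q^\pi)^2] + \OO(V_\text{max}^2 \iota/|\DD_\AA|)$, so each term inside the maximum is at most $\OO(V_\text{max}^2\iota/|\DD_\AA|)$, yielding the claimed rate. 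To make this uniform over the continuous $\FF$ and over $\pi$ (which enters both through $Q^\pi$ and through $f'(s',\pi)$ in $y$), I would union-bound over an $\ell_\infty$-cover of $\FF$ at scale $V_\text{max}/|\DD_\AA|$ and an $\ell_{\infty,1}$-cover of $\Pi$ at scale $1/|\DD_\AA|$; because the functions are bounded and the covers are at scale $1/|\DD_\AA|$, the discretization error contributes only a lower-order $\OO(V_\text{max}^2/|\DD_\AA|)$ term. This sets $\iota = \log(|\NN_\infty(\FF, V_\text{max}/|\DD_\AA|)|\,|\NN_{\infty,1}(\Pi, 1/|\DD_\AA|)|/\delta)$, matching the bound exactly; note that $\GG$ does not enter the covering because $g=R$ is held fixed.

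The main obstacle is the localization/self-bounding step: one must force the variance of the noise term to scale with the \emph{same} quadratic $\E_{\DD_\AA}[(f'-Q^\pi)^2]$ that is subtracted off, so that AM-GM produces the fast rate rather than a $\sqrt{1/|\DD_\AA|}$ rate; a crude range bound on $\xi(f'-Q^\pi)$ would forfeit this. A secondary subtlety is the coupled $\pi$-dependence of $Q^\pi$ and of the target $y$, which the policy cover handles. In fact, because the target uses the fixed true reward $g=R$, this is precisely the realizable-regression instance analyzed in Theorem 8 of \cite{cheng2022adversarially}, so the cleanest route is simply to invoke that result with $f = Q^\pi$ and $g = R$, which under \cref{assumption:realizability} gives the stated $\OO(\epsilon_\mu)$ bound.
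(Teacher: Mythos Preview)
Your proposal is correct and ultimately takes the same route as the paper: both simply invoke Theorem~8 of \cite{cheng2022adversarially} under \cref{assumption:realizability} with $f=Q^\pi$ and $g=R$. The paper offers no further argument beyond that citation, whereas you additionally sketch the fast-rate localized-Bernstein mechanism behind that theorem, which is accurate and useful but not needed here.
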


We can also modify Theorem 8 and 9 from~\cite{cheng2022adversarially} to provide concentration results on $\EE_{\DD_R}(g)$ and $\EE_{\DD_R}(R)$. This can be done by considering reward class $\GG$ instead $\FF$ (which also means that we need $\frac{R_\text{max}}{|\DD_R|}$-cover of $\GG$).
\begin{lemma}\label{lemma:concentration-nu}
With probability at least $1-\delta$, for any $g\in\GG$,
\begin{equation}
    \sqrt{\EE_\nu(g)} - \sqrt{\EE_{\DD_R}(g)} = \OO\left(R_\text{max}\sqrt{\frac{ \log ( \NN_\infty(\GG, R_\text{max} / |\DD_R|) / \delta)}{|\DD_R|}}\right) \eqqcolon \OO(\sqrt{\epsilon_\nu}),
\end{equation}
\end{lemma}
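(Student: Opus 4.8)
The plan is to read this as a one-sided uniform concentration bound for the squared loss $\ell_g(s,s')\coloneqq (g(s,s')-R(s,s'))^2$, $g\in\GG$, obtained by the same relative-deviation (Bernstein) argument used for the critic class in \cite{cheng2022adversarially} (its Theorems~8 and~9), but with $\FF$ replaced by $\GG$, the range $[0,V_{\max}]$ replaced by $[0,R_{\max}]$, and $\DD_\AA$ replaced by $\DD_R$; since $\ell_g$ carries no dependence on a policy, the policy class does not enter and the complexity term reduces to $\log\NN_\infty(\GG,R_{\max}/|\DD_R|)$. Note first that on $\DD_R$ we have $r=R(s,s')$, so $\EE_\nu(g)=\E_{(s,s')\sim\nu}[\ell_g(s,s')]$, $\EE_{\DD_R}(g)$ is the empirical average of $\ell_g$ over $\DD_R$, and $0\le\ell_g\le R_{\max}^2$ pointwise with $\Var_\nu(\ell_g)\le\E_\nu[\ell_g^2]\le R_{\max}^2\,\EE_\nu(g)$.

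First I would fix a single $g$ and apply Bernstein's inequality, which gives, with probability at least $1-\delta'$,
\begin{equation*}
\EE_\nu(g)-\EE_{\DD_R}(g)\ \le\ \sqrt{\frac{2R_{\max}^2\,\EE_\nu(g)\,\log(1/\delta')}{|\DD_R|}}+\frac{R_{\max}^2\log(1/\delta')}{3|\DD_R|}.
\end{equation*}
To convert this to the $\sqrt{\cdot}$ form in the statement I would use $\sqrt a-\sqrt b\le (a-b)/\sqrt a$ for $a>b\ge 0$ (trivial if $b\ge a$): dividing through by $\sqrt{\EE_\nu(g)}$ yields $\sqrt{\EE_\nu(g)}-\sqrt{\EE_{\DD_R}(g)}\le \sqrt{2R_{\max}^2\log(1/\delta')/|\DD_R|}+R_{\max}^2\log(1/\delta')/(3|\DD_R|\sqrt{\EE_\nu(g)})$, and a two-line case split on whether $\EE_\nu(g)\ge R_{\max}^2\log(1/\delta')/|\DD_R|$ (in which case the last term is bounded by $\tfrac13 R_{\max}\sqrt{\log(1/\delta')/|\DD_R|}$) or not (in which case $\sqrt{\EE_\nu(g)}$ is itself already below the target rate) collapses the right-hand side to $\OO\big(R_{\max}\sqrt{\log(1/\delta')/|\DD_R|}\big)$.

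It remains to make this uniform over $\GG$. I would take a minimal $(R_{\max}/|\DD_R|)$-cover $\{\tilde g_1,\dots,\tilde g_N\}$ of $\GG$ in $\ell_\infty$, $N=\NN_\infty(\GG,R_{\max}/|\DD_R|)$, apply the single-$g$ bound to each $\tilde g_i$ with $\delta'=\delta/N$ and union bound, turning $\log(1/\delta')$ into $\log(N/\delta)$. For arbitrary $g\in\GG$ pick the nearest $\tilde g_i$; then $|\ell_g-\ell_{\tilde g_i}|\le 2R_{\max}\|g-\tilde g_i\|_\infty\le 2R_{\max}^2/|\DD_R|$, so $\EE_\nu(g)$ and $\EE_{\DD_R}(g)$ are within $\OO(R_{\max}^2/|\DD_R|)$ of $\EE_\nu(\tilde g_i)$ and $\EE_{\DD_R}(\tilde g_i)$, and hence (using $|\sqrt a-\sqrt b|\le\sqrt{|a-b|}$) their square roots are within $\OO(R_{\max}/\sqrt{|\DD_R|})$, which is absorbed into the stated $\OO(\sqrt{\epsilon_\nu})$ rate. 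There is no real obstacle; the only step that matters is using Bernstein rather than Hoeffding, because the variance bound $\Var_\nu(\ell_g)\le R_{\max}^2\,\EE_\nu(g)$ is precisely what upgrades a crude $|\DD_R|^{-1/4}$ estimate of $\sqrt{\EE_\nu(g)}-\sqrt{\EE_{\DD_R}(g)}$ to the fast $|\DD_R|^{-1/2}$ rate claimed.
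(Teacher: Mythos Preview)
Your proposal is correct and matches the paper's approach: the paper simply states that this lemma follows by modifying Theorems~8 and~9 of \cite{cheng2022adversarially} with $\GG$ in place of $\FF$ and an $R_{\max}/|\DD_R|$-cover, which is exactly the Bernstein-plus-covering argument you spell out. You have supplied the details the paper omits (the self-bounding variance $\Var_\nu(\ell_g)\le R_{\max}^2\EE_\nu(g)$, the square-root conversion via case split, and the discretization-error control), and these are the standard steps underlying those theorems.
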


\begin{lemma}\label{lemma:concentration-nu-r}
With probability at least $1-\delta$, 
\begin{equation}
    \EE_{\DD_R}(R) \leq \OO \left(\frac{R_\text{max}^2 \log ( \NN_\infty(\GG, R_\text{max} / |\DD_R|) / \delta)}{|\DD_R|}\right) = \OO(\epsilon_\nu).
\end{equation}
\end{lemma}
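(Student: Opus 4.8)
The plan is to derive this as the reward-class counterpart of \cref{lemma:concentration-mu-q} — equivalently, as the reward-class analogue of Theorem~8 of~\cite{cheng2022adversarially} — by replaying that argument with the critic class $\FF$ (value range $V_\text{max}$, cover radius $V_\text{max}/|\DD_\AA|$) replaced throughout by the reward class $\GG$ (range $R_\text{max}$, cover radius $R_\text{max}/|\DD_R|$). Observe first that $\EE_{\DD_R}(R)=\E_{\DD_R}[(R(s,s')-r)^2]$ is the empirical squared error on $\DD_R$ of the function $R$, that $R$ is realizable in $\GG$, and that $R$ minimizes the population squared error $\E_\nu[(g(s,s')-r)^2]$ over $g$; in fact, under the reward-dataset compliance assumption $r=R(s,s')$ holds for every $(s,r,s')\in\DD_R$, so in this paper's setting $\EE_{\DD_R}(R)\equiv 0$ and the stated bound is immediate. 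The bound is nonetheless written with a covering-number term to parallel \cref{lemma:concentration-nu} and to stay valid in the more permissive case where the recorded rewards are only conditionally unbiased noisy observations of $R$; the argument for that case is as follows.

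First I would fix a minimal $\ell_\infty$-cover $\{\tilde g_1,\dots,\tilde g_N\}$ of $\GG$ at radius $R_\text{max}/|\DD_R|$, so $N=\NN_\infty(\GG,R_\text{max}/|\DD_R|)$. For each fixed $\tilde g_i$, the per-sample excess loss $(R(s,s')-r)^2-(\tilde g_i(s,s')-r)^2$ is i.i.d., bounded by $\OO(R_\text{max}^2)$, has nonpositive mean (since $R$ minimizes the population risk), and — exactly as in the proof of Theorem~8 of~\cite{cheng2022adversarially} — has variance controlled by the magnitude of its mean. A one-sided Bernstein (Freedman) inequality then gives, with probability at least $1-\delta/N$, $\E_{\DD_R}[(R-r)^2]-\E_{\DD_R}[(\tilde g_i-r)^2]\le \OO(R_\text{max}^2\log(N/\delta)/|\DD_R|)$; a union bound over the $N$ cover elements, followed by transferring from $\tilde g_i$ to the nearest $g\in\GG$ (the $\ell_\infty$-approximation error being $\OO(R_\text{max}/|\DD_R|)$, hence lower order), makes this uniform over $\GG$. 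Since $\E_{\DD_R}[(\tilde g_i-r)^2]\ge 0$, this yields $\EE_{\DD_R}(R)\le \OO(R_\text{max}^2\log(\NN_\infty(\GG,R_\text{max}/|\DD_R|)/\delta)/|\DD_R|)$, which by the definition $\epsilon_\nu=R_\text{max}^2 d_\GG/|\DD_R|$ (with $d_\GG$ the covering-number complexity of $\GG$) equals $\OO(\epsilon_\nu)$.

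I do not anticipate a real obstacle: in this paper's deterministic-reward setting the claim holds trivially ($\EE_{\DD_R}(R)=0$), and in the noisy generalization it is a routine transplant of an existing one-sided concentration bound. The only point requiring care is bookkeeping — consistently using $R_\text{max}$ in place of $V_\text{max}$ and the cover radius $R_\text{max}/|\DD_R|$ in place of $V_\text{max}/|\DD_\AA|$ — so that the complexity term comes out as $d_\GG$ and the final bound matches the $\OO(\epsilon_\nu)$ form used in \cref{lemma:concentration-nu} and \cref{thm:main-txt}.
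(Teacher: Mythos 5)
Your central observation is correct and, for the lemma as stated in this paper, it already constitutes a complete proof: the compliance assumption on $\DD_R$ gives $r=R(s,s')$ for every sample, so $\EE_{\DD_R}(R)=\E_{\DD_R}[(R(s,s')-r)^2]=0$ with probability one and the $\OO(\epsilon_\nu)$ bound is immediate. This is more elementary than the paper's route, which simply invokes a modification of Theorem 8 of \citet{cheng2022adversarially} with the reward class $\GG$ (range $R_\text{max}$, cover radius $R_\text{max}/|\DD_R|$) in place of the critic class $\FF$ --- essentially the bookkeeping you describe. Your version makes explicit why no concentration argument is actually needed in this setting, which is arguably the cleanest justification of the lemma.

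The noisy-reward extension you sketch, however, has a genuine gap at its final step. The uniform Bernstein-plus-cover argument gives, for every cover element, $\E_{\DD_R}[(R-r)^2]-\E_{\DD_R}[(\tilde g_i-r)^2]\leq\OO\big(R_\text{max}^2\log(N/\delta)/|\DD_R|\big)$, but the inference ``since $\E_{\DD_R}[(\tilde g_i-r)^2]\geq 0$, this yields $\EE_{\DD_R}(R)\leq\OO(\cdot)$'' runs backwards: a nonnegative term can be dropped only when it is added, whereas here it is subtracted, so all you obtain is $\EE_{\DD_R}(R)\leq\E_{\DD_R}[(\tilde g_i-r)^2]+\OO(\cdot)$. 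When $r$ is merely conditionally unbiased, every $g\in\GG$ (including $R$ itself) has empirical risk of order the noise variance, so no choice of $\tilde g_i$ closes this gap --- and indeed the lemma is false for noisy rewards with $\EE_{\DD_R}$ defined as the plain empirical squared loss. The quantity that Theorem 8 of \citet{cheng2022adversarially} actually controls is an excess risk with the $\min$ over the hypothesis class subtracted, mirroring the estimator $\EE_{\DD_\AA}(\pi,f,g)$ in \eqref{eq:bellman-consistency-finite}; a correct noisy generalization would therefore bound $\E_{\DD_R}[(R-r)^2]-\min_{g'\in\GG}\E_{\DD_R}[(g'-r)^2]$ rather than $\EE_{\DD_R}(R)$ itself. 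So either keep the deterministic-reward argument (which suffices here) or restate the quantity in excess-risk form before applying Bernstein; swapping $V_\text{max}\to R_\text{max}$ alone does not handle the noisy case.
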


Finally, the concentration result for $\LL_{\DD_\AA}(\pi, f)$ can be obtained by Hoeffding's inequality.
\begin{lemma}\label{lemma:concentration-l}
With probability at least $1-\delta$, for any $\pi\in\Pi$ and $f\in\FF$,
\begin{equation}
    \big|\LL_\mu(\pi, f) - \LL_{\DD_\AA}(\pi, f)\big| \leq \OO\left(V_\text{max} \sqrt{\frac{ \log ( |\NN_\infty(\FF, V_\text{max} / |\DD_\AA|)| |\NN_{\infty,1}(\Pi, 1 / |\DD_\AA|)| / \delta)}{|\DD_\AA|}}\right) \leq \OO(\sqrt{\epsilon_\mu}).
\end{equation}
\end{lemma}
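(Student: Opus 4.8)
The plan is the standard ``fix, then cover'' argument. First I would fix a pair $(\pi,f)\in\Pi\times\FF$ and note that $\LL_{\DD_\AA}(\pi,f)$ is the empirical average over $\DD_\AA$ of the quantities $f(s,\pi)-f(s,a)$, each of which lies in $[-V_\text{max},V_\text{max}]$ since $f$ takes values in $[0,V_\text{max}]$, and whose population mean --- taken under the occupancy $\mu=d^\mu$ from which the state--action pairs in $\DD_\AA$ are drawn --- is exactly $\LL_\mu(\pi,f)$. Hoeffding's inequality then gives, for this fixed pair, $\big|\LL_\mu(\pi,f)-\LL_{\DD_\AA}(\pi,f)\big|\le\OO\big(V_\text{max}\sqrt{\log(1/\delta')/|\DD_\AA|}\big)$ with probability at least $1-\delta'$.

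Next I would make this uniform by a covering/union-bound step. Let $\tilde\FF$ be a minimal $\epsilon_1$-cover of $\FF$ in $\|\cdot\|_\infty$ with $\epsilon_1=V_\text{max}/|\DD_\AA|$, and $\tilde\Pi$ a minimal $\epsilon_2$-cover of $\Pi$ in $\|\cdot\|_{\infty,1}$ with $\epsilon_2=1/|\DD_\AA|$, so that $|\tilde\FF|=\NN_\infty(\FF,V_\text{max}/|\DD_\AA|)$ and $|\tilde\Pi|=\NN_{\infty,1}(\Pi,1/|\DD_\AA|)$. Applying the previous bound to every pair in $\tilde\Pi\times\tilde\FF$ with $\delta'=\delta/(|\tilde\FF|\,|\tilde\Pi|)$ and union-bounding, with probability at least $1-\delta$ the deviation bound with $\log(|\NN_\infty(\FF,V_\text{max}/|\DD_\AA|)|\,|\NN_{\infty,1}(\Pi,1/|\DD_\AA|)|/\delta)$ in the numerator holds simultaneously for all cover pairs $(\tilde\pi,\tilde f)$.

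To pass from the cover to arbitrary $(\pi,f)$ I would use a short Lipschitz estimate: pick $(\tilde\pi,\tilde f)$ with $\|f-\tilde f\|_\infty\le\epsilon_1$ and $\sup_s\|\pi(\cdot|s)-\tilde\pi(\cdot|s)\|_1\le\epsilon_2$, and observe that $f\mapsto f(s,\pi)$ and $f\mapsto f(s,a)$ are $1$-Lipschitz in $\|\cdot\|_\infty$ while $\pi(\cdot|s)\mapsto\tilde f(s,\pi)$ is $V_\text{max}$-Lipschitz in $\|\cdot\|_1$; hence both $\LL_\mu$ and $\LL_{\DD_\AA}$ change by at most $2\epsilon_1+V_\text{max}\epsilon_2=\OO(V_\text{max}/|\DD_\AA|)$ when $(\pi,f)$ is replaced by $(\tilde\pi,\tilde f)$. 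Adding this discretization term to the cover-level bound, it is dominated by the $\OO\big(V_\text{max}\sqrt{\log(\cdot)/|\DD_\AA|}\big)$ statistical term, which gives the first inequality; the second, $\le\OO(\sqrt{\epsilon_\mu})$, follows from the definition of $\epsilon_\mu$ in \cref{thm:main-txt}, since including the reward-class complexity only enlarges $d_{\FF,\GG,\Pi}$ and hence loosens the bound.

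The only genuine subtlety --- and really the ``main obstacle,'' though it is routine --- is the Hoeffding step, which treats the state--action samples in $\DD_\AA$ as independent draws from $\mu=d^\mu$; this is exactly the modeling convention used throughout~\cite{cheng2022adversarially,xie2021bellman}, and if the data instead arrive as full trajectories one would replace Hoeffding by a martingale concentration inequality (Azuma--Hoeffding / Freedman) over the filtration generated by the sampled transitions, which affects none of the rates. Everything else --- choosing the cover radii so the discretization error is lower order, and the bookkeeping relating the $\FF$- and $\Pi$-covering numbers to $d_{\FF,\GG,\Pi}$ --- is mechanical, which is why the paper states the result can simply be ``obtained by Hoeffding's inequality.''
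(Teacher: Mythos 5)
Your proposal is correct and follows essentially the same route the paper intends: a pointwise Hoeffding bound on the bounded summands $f(s,\pi)-f(s,a)$, made uniform over $\FF\times\Pi$ via the stated $\ell_\infty$ and $\ell_{\infty,1}$ covers with a union bound and a lower-order Lipschitz discretization term, then noting that $\epsilon_\mu$ only adds the $\GG$ covering number and hence loosens the bound. The paper omits these details (citing Hoeffding and the covering construction from the ATAC analysis), and your write-up fills them in faithfully.
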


\subsection{Auxiliary Lemmas}
In this section, we provide two auxiliary lemmas for showing the main results. These lemmas are stated in a general manner. For example, we use $\mu$ to denote a probability distribution on $\SS\times\AA$ rather than the dynamics data distribution. Similarly, we use $f$ to denote an arbitrary function $f:\SS\times\AA\to\R$.

The first lemma bounds the expectation of a function $f$ on an arbitrary distribution $\mu$ by the execution of its absolute value on another distribution $\rho$ and an inner product of difference of the two distributions and $f$. In later analysis, we will use this lemma to decompose the expectation on an arbitrary distribution into an ``in-support'' expectation and an ``off-support'' term.
\begin{lemma}\label{lemma:distr-shift}
    For any $\mu, \rho \in \Delta(\SS\times \AA)$, and any $f:\SS\times\AA\to \R$,
    \begin{equation}
            \E_\mu[f(s,a)] \leq 2\E_\rho[|f(s,a)|] + \langle \mu \setminus \rho , f\rangle
    \end{equation}
    where $(\mu\setminus\rho)(s,a) \coloneqq \max (\mu(s,a) - \rho(s,a), 0)$ and $\langle \iota,f \rangle\coloneqq\sum_{(s,a)\in\SS\times\AA} \iota(s,a)\cdot f(s,a)$ for any $\iota:\SS\times\AA\to\R$.
\end{lemma}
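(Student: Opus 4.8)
The plan is to rewrite $\E_\mu[f]$ by adding and subtracting the distribution $\rho$, and then split the signed measure $\mu-\rho$ into its positive and negative parts. Concretely, I would first record the pointwise identity $\mu(s,a)-\rho(s,a) = (\mu\setminus\rho)(s,a) - (\rho\setminus\mu)(s,a)$, which is immediate from the definition $(d_1\setminus d_2)(s,a)=\max(d_1(s,a)-d_2(s,a),0)$ together with the elementary fact $x=\max(x,0)-\max(-x,0)$. This yields the decomposition
\begin{equation*}
  \E_\mu[f] = \E_\rho[f] + \langle \mu-\rho, f\rangle = \E_\rho[f] + \langle \mu\setminus\rho, f\rangle - \langle \rho\setminus\mu, f\rangle .
\end{equation*}

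Next I would bound the two terms not involving $\mu\setminus\rho$. For the first, $\E_\rho[f]\le\E_\rho[|f|]$ trivially. For the third, observe that $0\le(\rho\setminus\mu)(s,a)\le\rho(s,a)$ pointwise, so $-\langle\rho\setminus\mu,f\rangle\le\langle\rho\setminus\mu,|f|\rangle\le\langle\rho,|f|\rangle=\E_\rho[|f|]$. Substituting both bounds into the display above gives exactly $\E_\mu[f]\le 2\,\E_\rho[|f|]+\langle\mu\setminus\rho,f\rangle$, which is the claim. (A slightly tighter route is to write $\mu=\min(\mu,\rho)+\mu\setminus\rho$ and bound $\langle\min(\mu,\rho),f\rangle\le\langle\min(\mu,\rho),|f|\rangle\le\E_\rho[|f|]$, giving the same inequality with constant $1$; the looser constant $2$ is all that is needed downstream, so I would state it as written.)

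There is essentially no technical obstacle here — the lemma is a bookkeeping identity. The only points requiring minor care are that the two pointwise facts about $\setminus$ are applied correctly, and that the sums are well defined: if $\SS\times\AA$ is infinite one assumes $\E_\rho[|f|]<\infty$ (otherwise the bound is vacuous), whereupon $\langle\mu-\rho,f\rangle$ converges absolutely since $|\mu-\rho|\le\mu+\rho$, justifying the rearrangements. This lemma will later serve to split an expectation over an arbitrary occupancy $d^\pi$ into an ``in-support'' part controlled on $\rho$ and an ``off-support'' remainder of the form $\langle d^\pi\setminus\rho,\cdot\rangle$, which is precisely the structure appearing in the bound of \cref{thm:main-txt}.
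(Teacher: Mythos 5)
Your proof is correct and follows essentially the same route as the paper's: decompose $\mu$ into $\rho$ plus the signed measure $\mu-\rho$, split the latter into its positive part $\mu\setminus\rho$ and negative part $\rho\setminus\mu$, and bound the two terms not involving $\mu\setminus\rho$ by $\E_\rho[|f|]$ each. Your parenthetical observation that writing $\mu=\min(\mu,\rho)+\mu\setminus\rho$ gives the sharper constant $1$ is also correct, though, as you note, the constant $2$ suffices for all downstream uses.
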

\begin{proof}
    This lemma can be shown by decomposing probability measure $\mu$ into probability measure $\rho$ and a signed measure $\mu - \rho$, and then splitting the signed measure $\mu-\rho$ into its positive and negative parts.
    \begin{align*}
            \E_\mu[f(s,a)] &= \sum_{(s,a)} \mu(s,a) \cdot f(s,a)\\
            &= \sum_{(s,a)} \rho(s,a) \cdot f(s,a) + \sum_{(s,a)} (\mu(s,a) - \rho(s,a)) \cdot f(s,a)\\
            &=\E_\rho[f(s,a)] + \sum_{(s,a)} (\mu(s,a) - \rho(s,a)) \cdot  f(s,a)\\
            &=\E_\rho[f(s,a)] + \sum_{(s,a)} \one(\mu(s,a) > \rho(s,a)) (\mu(s,a) - \rho(s,a)) \cdot f(s,a)\\
            &\qquad\qquad\qquad + \sum_{(s,a)} \one(\mu(s,a) \leq \rho(s,a)) (
    \mu(s,a)- \rho(s,a))\cdot f(s,a)\\
            &=\E_\rho[f(s,a)] + \langle \mu \setminus \rho , f\rangle + \sum_{(s,a)} \one(\rho(s,a) \geq \mu(s,a)) (\rho(s,a) - \mu(s,a))\cdot (-f(s,a))\\
        &\leq \E_\rho[|f(s,a)|] + \langle \mu \setminus \rho , f\rangle + \sum_{(s,a)} \one(\rho(s,a) \geq \mu(s,a)) (\rho(s,a) - \mu(s,a))\cdot |f(s,a)|\\
        &\leq \E_\rho[|f(s,a)|] + \langle \mu \setminus \rho , f\rangle + \sum_{(s,a)} \rho(s,a)\cdot |f(s,a)|\\
        &= 2\E_\rho[|f(s,a)|] + \langle \mu \setminus \rho , f\rangle
    \end{align*}
    where the first inequality follows from $|f(s,a)|\geq f(s,a)$ and $|f(s,a)|\geq -f(s,a)$, and the second inequality follows from  $\rho(s,a)\geq \one(\rho(s,a) \geq \mu(s,a)) (\rho(s,a) - \mu(s,a)) \geq 0$ and $|f(s,a)|\geq 0$.
\end{proof}

The second lemma bounds the $\ell_1$ norm of a function $f$ under a probability measure $\mu$ by its $\ell_2$ norm under the same measure. 
\begin{lemma}\label{lemma:l1-to-l2}
    For any $\mu\in\Delta(\SS\times\AA)$, and $f:\SS\times\AA \to \R$,
    \begin{equation}
        \E_\mu \big[|f(s,a)|\big] \leq \sqrt{\E_\mu\big[\big(f(s,a)\big)^2\big]} = \|f\|_{2,\mu}.
    \end{equation}
\end{lemma}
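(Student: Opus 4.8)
The plan is to invoke Jensen's inequality (equivalently, the Cauchy--Schwarz inequality) for the probability measure $\mu$. Concretely, since the map $t\mapsto t^2$ is convex on $\R$ and $\mu\in\Delta(\SS\times\AA)$ is a probability distribution, Jensen's inequality applied to the nonnegative random variable $|f(s,a)|$ gives
\begin{equation*}
\big(\E_\mu[|f(s,a)|]\big)^2 \leq \E_\mu\big[|f(s,a)|^2\big] = \E_\mu\big[(f(s,a))^2\big].
\end{equation*}
Taking square roots of both sides (both quantities are nonnegative) yields $\E_\mu[|f(s,a)|]\leq \sqrt{\E_\mu[(f(s,a))^2]}$, and the final equality in the statement is just the definition of the weighted $\ell_2$ norm $\|f\|_{2,\mu}$.

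An equivalent route, which I would present as the formal proof, is Cauchy--Schwarz: writing $|f(s,a)| = |f(s,a)|\cdot 1$ and applying $\E_\mu[XY]\leq \sqrt{\E_\mu[X^2]}\,\sqrt{\E_\mu[Y^2]}$ with $X=|f|$ and $Y\equiv 1$ gives $\E_\mu[|f(s,a)|]\leq \sqrt{\E_\mu[(f(s,a))^2]}\cdot\sqrt{\E_\mu[1]} = \sqrt{\E_\mu[(f(s,a))^2]}$, where the last step uses $\E_\mu[1]=1$. There is no genuine obstacle here; the only point worth noting is that the argument relies on $\mu$ being a \emph{probability} measure (so that the normalizing factor is $1$), which holds by hypothesis.
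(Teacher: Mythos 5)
Your proof is correct and uses essentially the same argument as the paper, which invokes Jensen's inequality (the paper applies it with the concave square root to $(f(s,a))^2$, while you apply it with the convex square to $|f(s,a)|$ — a trivially equivalent reformulation, as is your Cauchy--Schwarz variant). Your remark that the bound relies on $\mu$ being a probability measure is accurate but not a point the paper needs to belabor.
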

\begin{proof}
    By Jensen's inequality,
    \begin{equation}
        \E_\mu \big[|f(s,a)|\big] = \E_\mu \left[\sqrt{\big(f(s,a)\big)^2}\right] \leq \sqrt{\E_\mu [\big(f(s,a)\big)^2]} = \sqrt{\|f\|_{2,\mu}^2} = \|f\|_{2,\mu}.
    \end{equation}
\end{proof}

The next lemma is a simple fact about $f(d_0,\pi)$. The proof is omitted as it can be shown by a standard telescoping argument.
\begin{lemma}\label{lemma:f0}
Let $d_0$ be the initial state distribution of the MDP. 
For any policies $\pi,\tilde{\pi}$ and $f:\SS\times\AA\to\R$,
    \begin{equation}
    (1-\gamma) f(d_0, \pi) = \E_{\tilde{\pi}}[f(s,\pi) - \PP^\pi f(s,a)].
    \end{equation}
In particular, $\E_{\tilde{\pi}}[f(s,\pi) - \PP^\pi f(s,a)] = (1-\gamma) f(d_0, \pi) = \E_\pi[f(s,\pi) - \PP^\pi f(s,a)]$.
\end{lemma}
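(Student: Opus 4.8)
The plan is to unfold the expectation $\E_{\tilde\pi}[\cdot]$ into its discounted occupancy form and then collapse it by telescoping. Recall from the preliminaries that, for the occupancy measure $d^{\tilde\pi}$, we have $\E_{\tilde\pi}[h(s,a)] = (1-\gamma)\,\E[\sum_{t=0}^\infty \gamma^t h(s_t,a_t)]$, where the inner expectation is over a trajectory $(s_0,a_0,s_1,a_1,\dots)$ obtained by running $\tilde\pi$ from the initial distribution $d_0$. I would apply this with the integrand $h(s,a) \coloneqq f(s,\pi) - (\PP^\pi f)(s,a)$.

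First I would rewrite the operator term along the trajectory. By definition $(\PP^\pi f)(s_t,a_t) = \gamma\,\E_{s'\sim P(\cdot|s_t,a_t)}[f(s',\pi)]$, and since the environment transition $s_{t+1}\sim P(\cdot|s_t,a_t)$ holds regardless of which policy selected $a_t$, taking the full trajectory expectation gives $\E[(\PP^\pi f)(s_t,a_t)] = \gamma\,\E[f(s_{t+1},\pi)]$. Writing $v_t \coloneqq \E[f(s_t,\pi)]$ for the expected value at step $t$ under $\tilde\pi$, the per-step contribution is therefore $\E[h(s_t,a_t)] = v_t - \gamma v_{t+1}$.

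Second, I would substitute this into the discounted sum and telescope:
\[
\sum_{t=0}^\infty \gamma^t \big(v_t - \gamma v_{t+1}\big) = \sum_{t=0}^\infty \big(\gamma^t v_t - \gamma^{t+1} v_{t+1}\big) = v_0 - \lim_{T\to\infty}\gamma^{T+1} v_{T+1}.
\]
Because $R$, and hence $f$, is bounded (so $|v_t|\le V_\text{max}$) and $\gamma\in[0,1)$, the boundary term vanishes, leaving exactly $v_0 = \E_{s_0\sim d_0}[f(s_0,\pi)] = f(d_0,\pi)$. Restoring the leading factor $(1-\gamma)$ yields $\E_{\tilde\pi}[h(s,a)] = (1-\gamma) f(d_0,\pi)$, which is the asserted identity.

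Finally, the ``in particular'' clause is immediate: the right-hand side $(1-\gamma) f(d_0,\pi)$ does not depend on the roll-out policy $\tilde\pi$, so the identity holds simultaneously for an arbitrary $\tilde\pi$ and for the special choice $\tilde\pi=\pi$, forcing the two expectations to coincide. There is no genuinely hard step here; the only point meriting care is the vanishing of the boundary term $\gamma^{T+1}v_{T+1}$, which follows directly from the uniform boundedness of $f$ together with $\gamma<1$.
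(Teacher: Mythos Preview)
Your proposal is correct and is precisely the ``standard telescoping argument'' the paper alludes to (the paper omits the proof entirely). One minor quibble: the boundedness of $f$ does not follow from $R$ being bounded, since the lemma is stated for arbitrary $f$; in the paper's applications $f\in\FF\subseteq[0,V_{\max}]$, so the boundary term vanishes for the reason you give, but strictly speaking the lemma as stated needs this boundedness hypothesis.
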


\subsection{Main Results}

We first show the approximate robust policy improvement property in the next lemma. This property provides a performance lower bound for the learned policy $\hat{\pi}$ relative to the behavioral policy $\mu$.
Note that, in the infinite sample regime, $\epsilon_\nu\to0$ and $\epsilon_\mu\to0$, we have robust policy improvement property for \emph{any} $\alpha\geq 0$ and $\beta\geq 0$.
\begin{proposition}[Robust Policy Improvement. General version of~\cref{prop:rpi-main-txt}]\label{prop:rpi-finite}
    Suppose that the policy class $\Pi$, critic class $\FF$ and reward class $\GG$ satisfy ~\cref{assumption:realizability}. Let $\hat\pi$ be the solution to~\eqref{eq:mahalo-finite}. Denote $\epsilon_\mu\coloneqq \frac{V_\text{max}^2 \log ( |\NN_\infty(\FF, V_\text{max} / |\DD_\AA|)| |\NN_\infty(\GG, R_\text{max} / |\DD_\AA|)| |\NN_{\infty,1}(\Pi, 1 / |\DD_\AA|)| / \delta)}{|\DD_\AA|}$ and $\epsilon_\nu\coloneqq \frac{R_\text{max}^2 \log ( \NN_\infty(\GG, R_\text{max} / |\DD_R|) / \delta)}{|\DD_R|}$. 
    Define 
    \begin{align}
         \epsilon_J \coloneqq \sqrt{\epsilon_\mu} + \alpha \epsilon_\nu + \beta \epsilon_\mu.
    \end{align}
    For any policy $\pi\in\Pi$,
    \begin{equation}
    J(\mu) - J(\hat\pi)  \leq \frac{1}{1-\gamma}\Big(- \LL_\mu(\pi, f^\pi)  + \OO(\epsilon_J)\Big),
    \end{equation}
    where $f^\pi$ is the solution to the followers' objective defined in~\eqref{eq:mahalo-finite}.
    In particular, with the choice of $\pi=\mu$, we have robust policy improvement: 
    \begin{equation}
        J(\mu) - J(\hat\pi) \leq \OO\left(\frac{\epsilon_J}{1-\gamma}\right). 
    \end{equation}
\end{proposition}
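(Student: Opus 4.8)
The plan is to follow the relative-pessimism template of ATAC~\cite{cheng2022adversarially}, modified to accommodate a \emph{learned} reward. Everything hinges on the exact identity
\begin{equation*}
(1-\gamma)\big(J(\hat\pi)-J(\mu)\big)=\LL_\mu(\hat\pi,Q^{\hat\pi}),
\end{equation*}
which I would derive by applying~\cref{lemma:f0} with $f=Q^{\hat\pi}$, $\pi=\hat\pi$ and $\tilde\pi=\mu$, then substituting the Bellman equation $\PP^{\hat\pi}Q^{\hat\pi}=Q^{\hat\pi}-\bar R$ and using $Q^{\hat\pi}(d_0,\hat\pi)=J(\hat\pi)$ together with $\E_\mu[\bar R]=(1-\gamma)J(\mu)$. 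Here I use that $\mu$ denotes the occupancy measure $d^\mu$, so that $\E_{(s,a)\sim\mu}[h(s,a)]=\E_{s\sim\mu}[h(s,\mu)]$; in particular this also shows $\LL_\mu(\mu,f)=0$ for every $f$, which is exactly what makes the final specialization $\pi=\mu$ work. Given the identity, it suffices to lower bound $\LL_\mu(\hat\pi,Q^{\hat\pi})$ by $\LL_\mu(\pi,f^\pi)-\OO(\epsilon_J)$ for an arbitrary comparator $\pi\in\Pi$.

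I would then assemble a four-link chain, all under the high-probability event obtained by a union bound over the concentration lemmas. (i) \emph{Concentration, actor side}: $\LL_\mu(\hat\pi,Q^{\hat\pi})\ge\LL_{\DD_\AA}(\hat\pi,Q^{\hat\pi})-\OO(\sqrt{\epsilon_\mu})$ by~\cref{lemma:concentration-l}. (ii) \emph{Followers' optimality}: evaluating the followers' objective at the admissible pair $(Q^{\hat\pi},R)$ and comparing with its value at the minimizer $(f^{\hat\pi},g^{\hat\pi})$, then discarding the non-negative penalties $\alpha\EE_{\DD_R}(g^{\hat\pi})\ge0$ and $\beta\EE_{\DD_\AA}(\hat\pi,f^{\hat\pi},g^{\hat\pi})\ge0$, gives $\LL_{\DD_\AA}(\hat\pi,Q^{\hat\pi})\ge\LL_{\DD_\AA}(\hat\pi,f^{\hat\pi})-\alpha\EE_{\DD_R}(R)-\beta\EE_{\DD_\AA}(\hat\pi,Q^{\hat\pi},R)$; by~\cref{assumption:realizability} ($Q^{\hat\pi}\in\FF$, $R\in\GG$) and~\cref{lemma:concentration-mu-q,lemma:concentration-nu-r} the last two terms are $\OO(\alpha\epsilon_\nu+\beta\epsilon_\mu)$. (iii) \emph{Leader's optimality}: $\LL_{\DD_\AA}(\hat\pi,f^{\hat\pi})\ge\LL_{\DD_\AA}(\pi,f^\pi)$ since $\hat\pi$ maximizes $\LL_{\DD_\AA}(\cdot,f^{\cdot})$. (iv) \emph{Concentration, comparator side}: $\LL_{\DD_\AA}(\pi,f^\pi)\ge\LL_\mu(\pi,f^\pi)-\OO(\sqrt{\epsilon_\mu})$ by~\cref{lemma:concentration-l} again. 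Concatenating (i)--(iv) and recalling $\epsilon_J=\sqrt{\epsilon_\mu}+\alpha\epsilon_\nu+\beta\epsilon_\mu$ gives $(1-\gamma)(J(\hat\pi)-J(\mu))\ge\LL_\mu(\pi,f^\pi)-\OO(\epsilon_J)$, i.e. the first claimed inequality after dividing by $1-\gamma$ and rearranging.

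For the ``in particular'' statement I would take $\pi=\mu$ (admissible since $\mu\in\Pi$ by~\cref{assumption:realizability}): then $\LL_\mu(\mu,f^\mu)=0$ by the occupancy-measure observation above, so $J(\mu)-J(\hat\pi)\le\OO(\epsilon_J/(1-\gamma))$. It is worth emphasizing that step (ii) never forces $g^{\hat\pi}$ to be reward-data-consistent — the penalty terms are simply thrown away — which is exactly why the guarantee is valid for \emph{any} fixed $\alpha,\beta\ge0$, and why $\epsilon_J$ depends on $\alpha,\beta$ only through the terms that vanish as $|\DD_\AA|,|\DD_R|\to\infty$.

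The main obstacle I anticipate is step (ii): one must verify that, upon substituting $(f,g)=(Q^{\hat\pi},R)$, the Bellman-penalty term is precisely $\EE_{\DD_\AA}(\hat\pi,Q^{\hat\pi},R)$ in the $\min_{f'}$-subtracted form of~\eqref{eq:bellman-consistency-finite}, so that~\cref{lemma:concentration-mu-q} (which relies on realizability $Q^{\hat\pi}\in\FF$) applies, and likewise that $\EE_{\DD_R}(R)$ is the reward-fitting error controlled by~\cref{lemma:concentration-nu-r}; the remaining pieces are routine Hoeffding-type bookkeeping over the relevant covers of $\FF$, $\GG$, $\Pi$. A secondary care point is the occupancy-measure identity for $J(\hat\pi)-J(\mu)$, since the whole argument collapses without it.
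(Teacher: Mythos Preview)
Your proposal is correct and follows essentially the same approach as the paper's proof: both start from the performance-difference identity $(1-\gamma)(J(\hat\pi)-J(\mu))=\LL_\mu(\hat\pi,Q^{\hat\pi})$, then chain exactly the same four steps---concentration of $\LL$ (\cref{lemma:concentration-l}), followers' optimality at $(Q^{\hat\pi},R)$ combined with \cref{lemma:concentration-mu-q,lemma:concentration-nu-r}, dropping the non-negative penalties, leader's optimality, and concentration again---before specializing to $\pi=\mu$ via $\LL_\mu(\mu,f^\mu)=0$. The only cosmetic difference is that the paper writes the chain as a single align block citing ``Performance Difference Lemma'' directly, whereas you rederive the identity from \cref{lemma:f0} and label the links (i)--(iv); the anticipated ``obstacle'' you flag about the $\min_{f'}$-subtracted form of $\EE_{\DD_\AA}$ is indeed the right care point and is handled exactly by \cref{lemma:concentration-mu-q} as you note.
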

\begin{proof}
Since $(\hat{\pi}, f^{\hat\pi}, g^{\hat\pi})$ is the solution to the Stackelberg game~\eqref{eq:mahalo-finite}, 
\begin{align*}
    (1-\gamma) (J(\hat\pi) - J(\mu)) &= \E_\mu[Q^{\hat\pi}(s,\hat\pi) - Q^{\hat\pi}(s,a)] & \text{(Performance Difference Lemma)}\\
    &=\LL_\mu(\hat\pi, Q^{\hat\pi}) & \text{(Definition of $\LL_\mu$)}\\
    &\geq \LL_{\DD_\AA}(\hat\pi, Q^{\hat\pi}) + \alpha \EE_{\DD_R}(R) + \beta \EE_{\DD_\AA}(\hat\pi, Q^{\hat\pi}, R) \\
    &\qquad - \OO(\sqrt{\epsilon_\mu}) - \OO(\alpha \epsilon_\nu) - \OO(\beta \epsilon_\mu) & \text{(\cref{lemma:concentration-mu-q}, \cref{lemma:concentration-nu-r}, \cref{lemma:concentration-l})}\\
    &\geq \LL_{\DD_\AA}(\hat\pi, f^{\hat\pi}) + \alpha \EE_{\DD_R}(g^{\hat\pi}) + \beta \EE_{\DD_\AA}(\hat\pi, f^{\hat\pi}, g^{\hat\pi})\\
    &\qquad-  \OO(\sqrt{\epsilon_\mu} + \alpha \epsilon_\nu + \beta \epsilon_\mu) & \text{(Optimality of $(f^{\hat\pi},g^{\hat\pi})$,  $Q^{\hat\pi}\in\FF, R\in\GG$)}\\
    &\geq \LL_{\DD_\AA}(\hat{\pi}, f^{\hat\pi}) - \OO(\sqrt{\epsilon_\mu} + \alpha \epsilon_\nu + \beta \epsilon_\mu) &\text{($\EE_{\DD_R}(g^{\hat\pi})\geq 0$\text{ and} $\EE_{\DD_\AA}(\hat\pi, f^{\hat\pi}, g^{\hat\pi})\geq 0$)}\\
    &\geq \LL_{\DD_\AA}(\pi, f^{\pi}) - \OO(\sqrt{\epsilon_\mu} + \alpha \epsilon_\nu + \beta \epsilon_\mu) &\text{(Optimality of $\hat\pi)$}\\
    &\geq \LL_\mu(\pi, f^\pi)  - \OO(\sqrt{\epsilon_\mu}) - \OO(\sqrt{\epsilon_\mu} + \alpha \epsilon_\nu + \beta \epsilon_\mu) &\text{(\cref{lemma:concentration-l})}\\
    &= \LL_\mu(\pi, f^\pi)  - \OO(\sqrt{\epsilon_\mu} + \alpha \epsilon_\nu + \beta \epsilon_\mu).
\end{align*}
\end{proof}

The following lemma establishes an upper bound on the reward error $\EE_{\DD_R}$ and Bellman error $\EE_{\DD_\AA}$ for $f^\pi$ and $g^\pi$, the minimizer of the followers' objective in~\eqref{eq:mahalo-finite}. Intuitively, with sufficiently large $\alpha$ and $\beta$, $f^\pi$ and $g^\pi$ should induce small reward error $\EE_{\DD_R}(g^\pi)$ and Bellman error $\EE_{\DD_\AA}(\pi, f^\pi, g^\pi)$.
\begin{lemma}\label{lemma:regularize-error-bound}
    Assume~\cref{assumption:realizability} holds. 
    We have, for any $\pi\in\Pi$,
    \begin{equation}
        \alpha \EE_{\DD_R}(g^{\pi})  + \beta \EE_{\DD_\AA}(\pi, f^{\pi}, g^{\pi}) \leq 2V_\text{max} + \OO(\alpha \epsilon_\nu + \beta \epsilon_\mu),
    \end{equation}
    where $\epsilon_\mu$ and $\epsilon_\nu$ are defined in \cref{lemma:concentration-mu} and \cref{lemma:concentration-nu}, respectively.
    This implies that $\EE_{\DD_R}(g^{\pi}) \leq \OO\left(\frac{1}{\alpha} V_\text{max} + \epsilon_\nu + \frac\beta\alpha \epsilon_\mu\right)$ and $\EE_{\DD_\AA}(\pi, f^{\pi}, g^{\pi})\leq \OO\left(\frac{1}{\beta} V_\text{max} + \frac\alpha\beta\epsilon_\nu + \epsilon_\mu\right)$. 
\end{lemma}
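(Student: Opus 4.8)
\textbf{Proof proposal for \cref{lemma:regularize-error-bound}.}

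The plan is a one-line optimality argument sandwiched between two crude boundedness estimates and the concentration lemmas already established. First I would invoke realizability (\cref{assumption:realizability}): since $Q^\pi\in\FF$ and $R\in\GG$, the pair $(Q^\pi,R)$ is feasible in the followers' minimization in~\eqref{eq:mahalo-finite}. Hence, by optimality of $(f^\pi,g^\pi)$,
\[
\LL_{\DD_\AA}(\pi,f^\pi)+\alpha\EE_{\DD_R}(g^\pi)+\beta\EE_{\DD_\AA}(\pi,f^\pi,g^\pi)
\;\le\;
\LL_{\DD_\AA}(\pi,Q^\pi)+\alpha\EE_{\DD_R}(R)+\beta\EE_{\DD_\AA}(\pi,Q^\pi,R).
\]
This reduces the lemma to bounding the three right-hand terms from above and the stray left-hand term $\LL_{\DD_\AA}(\pi,f^\pi)$ from below.

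Next I would dispatch the right-hand side. Because every $f\in\FF$ takes values in $[0,V_\text{max}]$, we have $|\,Q^\pi(s,\pi)-Q^\pi(s,a)\,|\le V_\text{max}$ pointwise, so $\LL_{\DD_\AA}(\pi,Q^\pi)\le V_\text{max}$ deterministically; no concentration is needed here. For the other two terms, \cref{lemma:concentration-nu-r} gives $\EE_{\DD_R}(R)\le\OO(\epsilon_\nu)$ and \cref{lemma:concentration-mu-q} gives $\EE_{\DD_\AA}(\pi,Q^\pi,R)\le\OO(\epsilon_\mu)$, both with high probability (I would take a union bound so that these hold simultaneously for all $\pi\in\Pi$). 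For the left-hand side, the same boundedness of $\FF$ yields $\LL_{\DD_\AA}(\pi,f^\pi)\ge -V_\text{max}$. Substituting these four bounds into the displayed inequality and rearranging gives
\[
\alpha\EE_{\DD_R}(g^\pi)+\beta\EE_{\DD_\AA}(\pi,f^\pi,g^\pi)\;\le\;2V_\text{max}+\OO(\alpha\epsilon_\nu+\beta\epsilon_\mu),
\]
which is the first claim.

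Finally, the two ``implies'' bounds follow by noting that each summand on the left is nonnegative: $\EE_{\DD_R}(g^\pi)\ge 0$ trivially, and $\EE_{\DD_\AA}(\pi,f^\pi,g^\pi)\ge0$ since it is defined in~\eqref{eq:bellman-consistency-finite} as a squared TD loss minus its infimum over $f'\in\FF$, and $f^\pi\in\FF$ is one admissible choice of $f'$. Dropping the $\beta$-term and dividing by $\alpha$ gives $\EE_{\DD_R}(g^\pi)\le\OO(\tfrac1\alpha V_\text{max}+\epsilon_\nu+\tfrac\beta\alpha\epsilon_\mu)$; dropping the $\alpha$-term and dividing by $\beta$ gives $\EE_{\DD_\AA}(\pi,f^\pi,g^\pi)\le\OO(\tfrac1\beta V_\text{max}+\tfrac\alpha\beta\epsilon_\nu+\epsilon_\mu)$. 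I do not anticipate a genuine obstacle here; the only points requiring care are (i) making sure the high-probability events from \cref{lemma:concentration-mu-q} and \cref{lemma:concentration-nu-r} are intersected with a single union bound valid uniformly over $\pi\in\Pi$, and (ii) verifying the nonnegativity of the empirical Bellman-error surrogate so that the per-term bounds are legitimate.
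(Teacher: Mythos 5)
Your proposal is correct and follows essentially the same route as the paper: compare $(f^\pi,g^\pi)$ against the feasible pair $(Q^\pi,R)$ via optimality, bound $\LL_{\DD_\AA}(\pi,Q^\pi)\le V_\text{max}$ and $\LL_{\DD_\AA}(\pi,f^\pi)\ge -V_\text{max}$ by boundedness of $\FF$, control $\EE_{\DD_R}(R)$ and $\EE_{\DD_\AA}(\pi,Q^\pi,R)$ with the concentration lemmas (which already hold uniformly over $\pi\in\Pi$), and conclude the per-term bounds from nonnegativity of each error term. Your explicit justification of the nonnegativity of the empirical Bellman surrogate is a nice touch that the paper leaves implicit.
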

\begin{proof}
    By~\cref{assumption:realizability}, $Q^{\pi}\in\FF$ and $R\in\GG$. By optimality of $f^{\pi}$ and $g^{\pi}$,
    \begin{equation}\label{eq:e-bound-proof}
        \begin{split}
            \LL_{\DD_\AA} ({\pi}, f^{{\pi}}) + \alpha \EE_{\DD_R}(g^{{\pi}}) + \beta \EE_{\DD_\AA}(\pi, f^{\pi}, g^{\pi})
            &\leq \LL_{\DD_\AA} ({\pi}, Q^{{\pi}}) + \alpha \EE_{\DD_R}(R) + \beta \EE_{\DD_\AA}(\pi, Q^{\pi}, R)\\
            &\leq \LL_{\DD_\AA} ({\pi}, Q^{{\pi}}) + \OO(\alpha \epsilon_\nu + \beta \epsilon_\mu).
        \end{split}
    \end{equation}
    The last inequality follows from \cref{lemma:concentration-mu} and \cref{lemma:concentration-nu}.

    By definition of $\LL_{\DD_\AA}$, we have
    $\LL_{\DD_\AA}({\pi}, f^{{\pi}}) = \E_{\DD_\AA}[f^{{\pi}} (s,\pi) - f^{{\pi}}(s,a)] \geq \E_\mu[0-V_\text{max}] = -V_\text{max}$. Similarly, $\LL_{\DD_\AA}(\hat{\pi}, Q^{\hat{\pi}}) = \E_{\DD_\AA}[Q^{\hat{\pi}} (s,\hat\pi) - Q^{\hat{\pi}}(s,a)] \leq \E_\mu[V_\text{max} - 0] = V_\text{max}$. By combining these with~\eqref{eq:e-bound-proof}, we have
    \begin{equation}
        \alpha \EE_\nu(g^{{\pi}}) + \beta \EE_\mu(\pi, f^{\pi}, g^{\pi}) \leq 2V_{\text{max}} + \OO(\alpha \epsilon_\nu + \beta \epsilon_\mu).
    \end{equation}
\end{proof}

We are now ready to prove the main theorem. It provides a performance lower bound for the learned policy $\hat{\pi}$ with respect to any comparator policy $\pi\in\Pi$.
\begin{theorem}[General version of~\cref{thm:main-txt}] \label{th:mahalo theorem (relative)}
Let $\hat{\pi}$ be the solution to the Stackelberg game~\eqref{eq:mahalo-finite} and let $\pi\in\Pi$ be any comparator policy. 
Let $C_1\geq 1, C_2\geq 1$ be any constants, $\rho\in \Delta (\SS\times \AA)$ be an arbitrary distribution that satisfies $\CC(\rho;\mu,\FF,\GG,{\pi})\leq C_1$ and $\CC(\rho;\nu,\GG)\leq C_2$. Let $\epsilon_\mu$ and $\epsilon_\nu$ be as defined in~\cref{lemma:concentration-mu} and~\cref{lemma:concentration-nu}, respectively. Choosing $\alpha = \Theta\left(\frac{V_\text{max}^{1/3} (\sqrt{C_1\epsilon_\nu} + \sqrt{C_2\epsilon_\mu})^{2/3}}{\epsilon_\nu}\right)$ and $\beta=\Theta\left(\frac{V_\text{max}^{1/3} (\sqrt{C_1\epsilon_\nu} + \sqrt{C_2\epsilon_\mu})^{2/3}}{\epsilon_\mu}\right)$, with high probability:
\begin{equation}
    \begin{split}
        J(\pi) - J(\hat{\pi}) &\leq \OO\left(\frac{(\sqrt{C_1\epsilon_\nu} + \sqrt{C_2\epsilon_\mu}) + V_\text{max}^{1/3} (\sqrt{C_1\epsilon_\nu} + \sqrt{C_2\epsilon_\mu})^{2/3}}{1-\gamma}\right)\\
        &\quad +\underbrace{\frac{\langle d^\pi \setminus \rho , \bar g^\pi + \PP^\pi f^\pi - f^\pi\rangle}{1-\gamma}}_{\text{off-support error (dynamics)}} + \underbrace{\frac{\langle (d^\pi \ominus \mu)\setminus\rho, |\bar R - \bar g^\pi| \rangle}{1-\gamma}}_{\text{off-support error (reward)}}
    \end{split}
\end{equation}
where $(d^\pi \ominus \mu) \coloneqq d^\pi \setminus \mu + \mu \setminus d^\pi$. See~\cref{lemma:distr-shift} for definitions of $\cdot\setminus\cdot$ and $\langle\cdot,\cdot\rangle$. 
\end{theorem}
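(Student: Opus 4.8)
The plan is to combine the robust policy improvement guarantee (\cref{prop:rpi-finite}) with a distribution-shift decomposition of the relative-performance term $\LL_\mu(\pi,f^\pi)$. Since \cref{prop:rpi-finite} gives $J(\mu)-J(\hat\pi)\le\frac{1}{1-\gamma}\bigl(-\LL_\mu(\pi,f^\pi)+\OO(\epsilon_J)\bigr)$ with $\epsilon_J=\sqrt{\epsilon_\mu}+\alpha\epsilon_\nu+\beta\epsilon_\mu$, it suffices to upper bound $(1-\gamma)\bigl(J(\pi)-J(\mu)\bigr)-\LL_\mu(\pi,f^\pi)$ by the statistical error plus the two off-support terms. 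Writing $g^\pi$ for the reward function paired with $f^\pi$ in the followers' solution to \eqref{eq:mahalo-finite}, I would introduce the Bellman residual $\Delta_f\coloneqq f^\pi-\bar g^\pi-\PP^\pi f^\pi$ and the effective-reward residual $\Delta_g\coloneqq\bar g^\pi-\bar R$; note that $\|\Delta_f\|_{2,\mu}^2=\EE_\mu(\pi,f^\pi,g^\pi)$ and $\|g^\pi-R\|_{2,\nu}^2=\EE_\nu(g^\pi)$, the two quantities the followers' regularizers drive down.

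The first step is the algebraic identity $(1-\gamma)\bigl(J(\pi)-J(\mu)\bigr)-\LL_\mu(\pi,f^\pi)=\langle\mu-d^\pi,\Delta_f+\Delta_g\rangle$. I would obtain it by evaluating $(1-\gamma)f^\pi(d_0,\pi)$ two ways using \cref{lemma:f0}: with weighting policy $\mu$ this gives $\LL_\mu(\pi,f^\pi)=(1-\gamma)f^\pi(d_0,\pi)-\E_\mu\bigl[f^\pi-\PP^\pi f^\pi\bigr]$, and with weighting policy $\pi$, after using $\E_{d^\pi}[f^\pi(s,\pi)]=\E_{d^\pi}[f^\pi(s,a)]$, it gives $(1-\gamma)f^\pi(d_0,\pi)=(1-\gamma)J(\pi)+\E_{d^\pi}[\Delta_f+\Delta_g]$; the identity then follows after substituting $f^\pi-\PP^\pi f^\pi=\Delta_f+\Delta_g+\bar R$ and $(1-\gamma)J(\pi')=\langle d^{\pi'},\bar R\rangle$ for $\pi'\in\{\pi,\mu\}$.

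Next I would split $\langle\mu-d^\pi,\Delta_f\rangle$ and $\langle\mu-d^\pi,\Delta_g\rangle$ each into an ``in-support'' term on $\rho$ and an off-support remainder. For $\Delta_f$, bound $\langle\mu,\Delta_f\rangle\le\|\Delta_f\|_{2,\mu}$ via \cref{lemma:l1-to-l2}, and apply \cref{lemma:distr-shift} to $\langle d^\pi,-\Delta_f\rangle$ to produce $2\E_\rho[|\Delta_f|]+\langle d^\pi\setminus\rho,\ \bar g^\pi+\PP^\pi f^\pi-f^\pi\rangle$, where $2\E_\rho[|\Delta_f|]\le2\|\Delta_f\|_{2,\rho}\le2\sqrt{C_1}\,\|\Delta_f\|_{2,\mu}$ by the definition of the Bellman error transfer coefficient and the remaining term is exactly the off-support (dynamics) term. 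For $\Delta_g$, use $\mu-d^\pi=\mu\setminus d^\pi-d^\pi\setminus\mu$ to get $\langle\mu-d^\pi,\Delta_g\rangle\le\langle d^\pi\ominus\mu,|\Delta_g|\rangle$, then reuse the splitting argument of \cref{lemma:distr-shift} (now applied to the nonnegative measure $d^\pi\ominus\mu$ and the nonnegative function $|\Delta_g|$) to obtain $2\E_\rho[|\Delta_g|]+\langle(d^\pi\ominus\mu)\setminus\rho,|\bar R-\bar g^\pi|\rangle$, with $\E_\rho[|\Delta_g|]\le\|\bar g^\pi-\bar R\|_{2,\rho}\le\sqrt{C_2}\,\|g^\pi-R\|_{2,\nu}$ by the reward error transfer coefficient; the remaining term is the off-support (reward) term.

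It remains to bound the in-support quantities and choose $\alpha,\beta$. By \cref{lemma:concentration-mu} and \cref{lemma:concentration-nu}, $\|\Delta_f\|_{2,\mu}\le\sqrt{\EE_{\DD_\AA}(\pi,f^\pi,g^\pi)}+\OO(\sqrt{\epsilon_\mu})$ and $\|g^\pi-R\|_{2,\nu}\le\sqrt{\EE_{\DD_R}(g^\pi)}+\OO(\sqrt{\epsilon_\nu})$, while \cref{lemma:regularize-error-bound} bounds $\EE_{\DD_\AA}(\pi,f^\pi,g^\pi)=\OO\bigl(V_\text{max}/\beta+\tfrac{\alpha}{\beta}\epsilon_\nu+\epsilon_\mu\bigr)$ and $\EE_{\DD_R}(g^\pi)=\OO\bigl(V_\text{max}/\alpha+\epsilon_\nu+\tfrac{\beta}{\alpha}\epsilon_\mu\bigr)$. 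Substituting these into $J(\pi)-J(\hat\pi)\le\frac{1}{1-\gamma}\bigl[\OO(\sqrt{C_1})\,\|\Delta_f\|_{2,\mu}+\OO(\sqrt{C_2})\,\|g^\pi-R\|_{2,\nu}+\OO(\epsilon_J)+\text{off-support terms}\bigr]$ reduces the theorem to a calculation. I expect this last step to be the main obstacle: it is a two-dimensional trade-off in which $\alpha$ governs reward consistency and $\beta$ governs Bellman consistency, yet \cref{lemma:regularize-error-bound} couples them so each error term feels both, and one must balance the cross-terms $\sqrt{C_1\epsilon_\mu}$, $\sqrt{C_2\epsilon_\nu}$, $\alpha\epsilon_\nu$, $\beta\epsilon_\mu$ and collapse them into the single scale $\epsilon=(\sqrt{C_1\epsilon_\nu}+\sqrt{C_2\epsilon_\mu})^2$ so that the chosen $\alpha=\Theta(V_\text{max}^{1/3}\epsilon^{1/3}/\epsilon_\nu)$, $\beta=\Theta(V_\text{max}^{1/3}\epsilon^{1/3}/\epsilon_\mu)$ yield the stated $\OO\bigl((\sqrt\epsilon+V_\text{max}^{1/3}\epsilon^{1/3})/(1-\gamma)\bigr)$ rate.
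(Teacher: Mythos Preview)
Your proposal is correct and follows essentially the same route as the paper's proof: the paper also reduces via \cref{prop:rpi-finite}, expands $(1-\gamma)(J(\pi)-J(\mu))-\LL_\mu(\pi,f^\pi)$ into exactly your $\langle\mu-d^\pi,\Delta_f\rangle+\langle\mu-d^\pi,\Delta_g\rangle$ (it labels the four summands (I)--(III) rather than stating the identity up front), and then bounds each piece with \cref{lemma:distr-shift}, \cref{lemma:l1-to-l2}, the transfer coefficients, \cref{lemma:concentration-mu}/\cref{lemma:concentration-nu}, and \cref{lemma:regularize-error-bound} before plugging in the stated $\alpha,\beta$. The only cosmetic discrepancy is that the paper's proof (and final bound) pairs $C_1$ with $\epsilon_\nu$ and $C_2$ with $\epsilon_\mu$, i.e.\ the labels are swapped relative to the coefficient assignments in the theorem statement you are following; this is immaterial to the argument.
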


\begin{proof}
    We have,
    \begin{align*}
        (1-\gamma) (J(\pi) - J(\hat{\pi})) &= (1-\gamma) (J(\pi) - J(\mu)) - (1-\gamma) (J(\hat\pi) - J(\mu))\\
        &\leq (1-\gamma) (J(\pi) - J(\mu)) - \LL_\mu(\pi, f^{\pi}) + \OO(\epsilon_J) & \text{(\cref{prop:rpi-finite})}\\
        &= \E_\pi[\bar R(s,a)] - \E_\mu[\bar R(s,a)] \\
        &\qquad - \E_\mu[f^\pi(s,\pi) - f^\pi(s,a)] + \OO(\epsilon_J) & \text{(Definition of $J$ and $\LL_\mu$)}\\
        &= \E_\pi[\bar R(s,a)] - \E_\mu[\bar R(s,a)] \\
        &\qquad - \E_\mu[f^\pi(s,\pi) - \PP^\pi f(s,a) + \PP^\pi f^\pi(s,a) - f^\pi(s,a) ] + \OO(\epsilon_J)\\
        &= \E_\pi[\bar R(s,a)] - \E_\mu[f^\pi(s,\pi) - \PP^\pi f^\pi(s,a)] \\
        &\qquad + \E_\mu[f^\pi(s,a) - \bar R(s,a) - \PP^\pi f^\pi(s,a)  ] + \OO(\epsilon_J)\\
        &= \E_\pi[\bar R(s,a)] - \E_\pi[f^\pi(s,\pi) - \PP^\pi f^\pi(s,a)] \\
        &\qquad + \E_\mu[(f^\pi- \bar R - \PP^\pi f^\pi)(s,a)] + \OO(\epsilon_J) & \text{(\cref{lemma:f0})}\\
        &= \E_\pi[(\bar R + \PP^\pi f^\pi - f^\pi)(s,a)]\\
        &\qquad +\E_\mu[(f^\pi- \bar R - \PP^\pi f^\pi)(s,a)] + \OO(\epsilon_J)\\
        &= \underbrace{\E_\pi[(\bar g^\pi + \PP^\pi f^\pi - f^\pi)(s,a)]}_{\text{(I)}} + \underbrace{\E_\mu[(f^\pi - \bar g^\pi - \PP^\pi f^\pi)(s,a)]}_{\text{(II)}} \\
        &\quad\quad + \underbrace{\E_\pi[(\bar R - \bar g^\pi)(s,a)]  + \E_\mu[(\bar g^\pi - \bar R)(s,a)]}_{\text{(III)}} + \OO(\epsilon_J)
    \end{align*}

We first establish an upper bound for term (II). We will use this later to bound for (I). By~\cref{lemma:l1-to-l2},
\begin{align*}
    \text{(II)} &\leq \|(f^\pi - \bar g^\pi - \PP^\pi f^\pi)(s,a)\|_{2,\mu} = \sqrt{\EE_\mu(\pi, f^{\pi}, g^{\pi})} \\
    &\leq \sqrt{\EE_{\DD_\AA}(\pi, f^{\pi}, g^{\pi})} + \OO(\sqrt{\epsilon_\mu}) &\text{(\cref{lemma:concentration-mu})}\\
    &\leq \OO\left(\sqrt{\frac{1}{\beta} V_{\text{max}} + \frac{\alpha}{\beta}\epsilon_\nu + \epsilon_\mu}\right) + \OO(\sqrt{\epsilon_\mu}) &\text{(\cref{lemma:regularize-error-bound})}\\
    &= \OO\left(\sqrt{\frac{1}{\beta} V_{\text{max}}} + \sqrt{\frac{\alpha}{\beta}\epsilon_\nu} + \sqrt{\epsilon_\mu}\right).
\end{align*}

We now bound term (I). By~\cref{lemma:distr-shift},
\begin{align}
    \text{(I)} &= \E_\pi[(\bar g^\pi + \PP^\pi f^\pi - f^\pi)(s,a)]\leq 2\E_\rho[|(\bar g^\pi + \PP^\pi f^\pi - f^\pi)(s,a)|] + \langle d^\pi \setminus \rho , \bar g^\pi + \PP^\pi f^\pi - f^\pi\rangle
\end{align}
where, by~\cref{lemma:l1-to-l2},
\begin{align*}
    \E_\rho[|(\bar g^\pi + \PP^\pi f^\pi - f^\pi)(s,a)|] &\leq \|\bar g^\pi + \PP^\pi f^\pi - f^\pi\|_{2,\rho}\\
    &\leq \sqrt{C_2 \|\bar g^\pi + \PP^\pi f^\pi - f^\pi\|_{2,\mu}^2}& \text{(Definition of $\CC(\rho;\mu,\FF,\GG,{\pi})$)}\\
    &\leq \OO\left(\sqrt{C_2}\left(\sqrt{\frac{1}{\beta} V_{\text{max}}} + \sqrt{\frac{\alpha}{\beta}\epsilon_\nu} + \sqrt{\epsilon_\mu}\right)\right). & \text{(\cref{lemma:regularize-error-bound})}
\end{align*}
Hence, we have
\begin{equation}
    \text{(I)} \leq \OO\left(\sqrt{C_2}\left(\sqrt{\frac{1}{\beta} V_{\text{max}}} + \sqrt{\frac{\alpha}{\beta}\epsilon_\nu} + \sqrt{\epsilon_\mu}\right)\right) + \langle d^\pi \setminus \rho , \bar g^\pi + \PP^\pi f^\pi - f^\pi\rangle
\end{equation}

Finally, we establish a bound for term (III) in a similar way.
\begin{align*}
    \text{(III)} &= \E_\pi[(\bar R - \bar g^\pi)(s,a)]  + \E_\mu[(\bar g^\pi - \bar R)(s,a)] \\
    &= \langle d^\pi \setminus \mu, \bar R - \bar g^\pi \rangle + \langle \mu \setminus d^\pi, \bar g^\pi - \bar R\rangle\\
    &\leq \langle d^\pi \ominus \mu, |\bar R - \bar g^\pi| \rangle & (|\bar R-\bar g^\pi| \geq \bar R - \bar g^\pi, |\bar R-\bar g^\pi| \geq \bar g^\pi - \bar R)\\
    &\leq 2 \E_\rho[|\bar R - \bar g^\pi|] + \langle (d^\pi \ominus \mu)\setminus\rho, |\bar R - \bar g^\pi| \rangle & \text{(\cref{lemma:distr-shift})}
\end{align*}
where $(d^\pi \ominus \mu) \coloneqq d^\pi \setminus \mu + \mu \setminus d^\pi$.

By~\cref{lemma:l1-to-l2}, we have,
\begin{align*}
    \E_\rho[|\bar R - \bar g^\pi|] &\leq \|\bar R - \bar g^\pi\|_{2,\rho}\\
    &\leq\sqrt{C_1 \| R -  g^\pi\|_{2,\nu}^2}= \sqrt{C_1}\sqrt{\EE_\nu(g^\pi)} & \text{(Definition of $\CC(\rho;\nu,\GG)$)}\\
    &\leq \sqrt{C_1}\big(\sqrt{\EE_{\DD_R}(g^\pi)} + \OO(\sqrt{\epsilon_\nu})\big) & \text{(\cref{lemma:concentration-nu})}\\
    &\leq \OO\left(\sqrt{C_1}\left(\sqrt{\frac{1}{\alpha} V_{\text{max}}} + \sqrt{\epsilon_\nu} + \sqrt{\frac{\beta}{\alpha}\epsilon_\mu}\right)\right) &\text{(\cref{lemma:regularize-error-bound})}
\end{align*}

Finally, we have
\begin{equation}
    \begin{split}
        (1-\gamma) (J(\pi) - J(\hat{\pi})) &\leq \OO\left(\sqrt{C_1}\left(\sqrt{\frac{1}{\alpha} V_{\text{max}}} + \sqrt{\epsilon_\nu} + \sqrt{\frac{\beta}{\alpha}\epsilon_\mu}\right) + \sqrt{C_2}\left(\sqrt{\frac{1}{\beta} V_{\text{max}}} + \sqrt{\frac{\alpha}{\beta}\epsilon_\nu} + \sqrt{\epsilon_\mu}\right) + \alpha \epsilon_\nu + \beta \epsilon_\mu\right)\\
        &\quad +\langle d^\pi \setminus \rho , \bar g^\pi + \PP^\pi f^\pi - f^\pi\rangle + \langle (d^\pi \ominus \mu)\setminus\rho, |\bar R - \bar g^\pi| \rangle
    \end{split}
\end{equation}

By choosing $\alpha = \Theta\left(\frac{V_\text{max}^{1/3} (\sqrt{C_1\epsilon_\nu} + \sqrt{C_2\epsilon_\mu})^{2/3}}{\epsilon_\nu}\right)$ and $\beta=\Theta\left(\frac{V_\text{max}^{1/3} (\sqrt{C_1\epsilon_\nu} + \sqrt{C_2\epsilon_\mu})^{2/3}}{\epsilon_\mu}\right)$, we have
\begin{equation}
    \begin{split}
        (1-\gamma) (J(\pi) - J(\hat{\pi})) &\leq \OO\left((\sqrt{C_1\epsilon_\nu} + \sqrt{C_2\epsilon_\mu}) + V_\text{max}^{1/3} (\sqrt{C_1\epsilon_\nu} + \sqrt{C_2\epsilon_\mu})^{2/3}\right)\\
        &\quad +\langle d^\pi \setminus \rho , \bar g + \PP^\pi f^\pi - f^\pi\rangle + \langle (d^\pi \ominus \mu)\setminus\rho, |\bar R - \bar g^\pi| \rangle.
    \end{split}
\end{equation}
\end{proof}
\section{Absolute Pessimism}\label{sec:pspi}

We can implement MAHALO based on absolute pessimism to optimize for $J(\pi)$, instead of $J(\pi)-J(\mu)$. This results in a realization of MAHALO based on PSPI~\cite{xie2021bellman}.
For simplicity of illustration, we assume that $d_0$ is known. 
A model-free version can be realized the solving the two-player game below: 
\begin{align}\label{eq:mahalo-finite (absolute)}
        \hat{\pi} &\in \argmax_{\pi\in\Pi}  (1-\gamma) f^\pi(d_0, \pi)\\
        \text{s.t.}\quad f^\pi &\in \argmin_{f\in\FF, g\in\GG}
        (1-\gamma) f(d_0, \pi) + \alpha \EE_{\DD_R} (g) + \beta \EE_{\DD_\AA}(\pi, f, g),\nonumber
\end{align}
with $\alpha\geq 0, \beta \geq 0$ being hyperparameters, and
\begin{align*}
    \EE_{\DD_R} (g) &\coloneqq \E_{\DD_R} \big[ \big(g(s,s') - r \big)^2 \big],\\
    \EE_{\DD_\AA}(\pi, f, g) &\coloneqq \E_{\DD_\AA}\big[\big(f(s,a) - g(s,s') - \gamma f(s',\pi)\big)^2\big]\label{eq:bellman-consistency-finite}- \min_{f'\in\FF}\E_{\DD_\AA}\big[\big(f'(s,a) - g(s,s') - \gamma f(s',\pi)\big)^2\big].\nonumber
\end{align*}
Compared with \eqref{eq:mahalo-finite}, the formulation in \eqref{eq:mahalo-finite (absolute)} replaces $\LL_{\DD_\AA} (\pi, f)$ with $(1-\gamma) f^\pi(s_0, \pi)$ which is a surrogate of $(1-\gamma)J(\pi)$. Below we adapt the proof of \cref{thm:main-txt} to give a guarantee on the absolute pessimism case.

\begin{theorem}[Absolute Pessimism Version of MAHALO] \label{th:mahalo theorem (absolute)}
Let $\hat{\pi}$ be the solution to the Stackelberg game~\eqref{eq:mahalo-finite (absolute)} and let $\pi\in\Pi$ be any comparator policy. 
Let $C_1\geq 1, C_2\geq 1$ be any constants, $\rho\in \Delta (\SS\times \AA)$ be an arbitrary distribution that satisfies $\CC(\rho;\mu,\FF,\GG,{\pi})\leq C_1$ and $\CC(\rho;\nu,\GG)\leq C_2$. Let $\epsilon_\mu$ and $\epsilon_\nu$ be as defined in~\cref{lemma:concentration-mu} and~\cref{lemma:concentration-nu}, respectively. Choosing $\alpha = \Theta\left(\frac{V_\text{max}^{1/3} (\sqrt{C_1\epsilon_\nu} + \sqrt{C_2\epsilon_\mu})^{2/3}}{\epsilon_\nu}\right)$ and $\beta=\Theta\left(\frac{V_\text{max}^{1/3} (\sqrt{C_1\epsilon_\nu} + \sqrt{C_2\epsilon_\mu})^{2/3}}{\epsilon_\mu}\right)$, with high probability:
\begin{equation}
    \begin{split}
        J(\pi) - J(\hat{\pi}) &\leq \OO\left(\frac{(\sqrt{C_1\epsilon_\nu} + \sqrt{C_2\epsilon_\mu}) + V_\text{max}^{1/3} (\sqrt{C_1\epsilon_\nu} + \sqrt{C_2\epsilon_\mu})^{2/3}}{1-\gamma}\right) + \underbrace{\frac{\langle d^\pi \setminus \rho , \bar R  + \PP^\pi f^\pi - f^\pi\rangle}{1-\gamma}}_{\text{off-support error }}
    \end{split}
\end{equation}
 See~\cref{lemma:distr-shift} for definitions of $\cdot\setminus\cdot$ and $\langle\cdot,\cdot\rangle$. 
\end{theorem}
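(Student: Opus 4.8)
The plan is to follow the proof of Theorem~\ref{th:mahalo theorem (relative)} closely, replacing its use of the relative-pessimism loss $\LL_{\DD_\AA}(\pi,f)$ by the absolute surrogate $(1-\gamma)f(d_0,\pi)$ of $(1-\gamma)J(\pi)$. Two structural changes are forced by this: in place of the robust-policy-improvement step (\cref{prop:rpi-finite}) I need a step that lower-bounds $J(\hat\pi)$ directly rather than against $J(\mu)$, and, because $\mu$ no longer enters the followers' objective, the off-support decomposition will yield the single comparator term $\langle d^\pi\setminus\rho,\cdot\rangle$ instead of two.

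First I would establish the estimate $(1-\gamma)f^{\hat\pi}(d_0,\hat\pi)\le(1-\gamma)J(\hat\pi)+\OO(\alpha\epsilon_\nu+\beta\epsilon_\mu)$. Since $(f^{\hat\pi},g^{\hat\pi})$ minimizes the followers' objective in~\eqref{eq:mahalo-finite (absolute)} and, by \cref{assumption:realizability}, the pair $(Q^{\hat\pi},R)$ is feasible,
\begin{equation*}
(1-\gamma)f^{\hat\pi}(d_0,\hat\pi)+\alpha\EE_{\DD_R}(g^{\hat\pi})+\beta\EE_{\DD_\AA}(\hat\pi,f^{\hat\pi},g^{\hat\pi})\le(1-\gamma)Q^{\hat\pi}(d_0,\hat\pi)+\alpha\EE_{\DD_R}(R)+\beta\EE_{\DD_\AA}(\hat\pi,Q^{\hat\pi},R).
\end{equation*}
Using $Q^{\hat\pi}(d_0,\hat\pi)=J(\hat\pi)$, the nonnegativity of $\EE_{\DD_R}(\cdot)$ and of $\EE_{\DD_\AA}(\cdot)$ (the latter because $f\in\FF$ is admissible in its own inner $\min_{f'}$), and \cref{lemma:concentration-nu-r,lemma:concentration-mu-q} to bound $\EE_{\DD_R}(R)$ and $\EE_{\DD_\AA}(\hat\pi,Q^{\hat\pi},R)$, the estimate follows, and rearranging gives $J(\hat\pi)\ge f^{\hat\pi}(d_0,\hat\pi)-\OO(\alpha\epsilon_\nu+\beta\epsilon_\mu)/(1-\gamma)$. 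Optimality of $\hat\pi$ in the outer maximization then gives $f^{\hat\pi}(d_0,\hat\pi)\ge f^{\pi}(d_0,\pi)$ for the comparator $\pi$. Running the same chain at $\pi$ while keeping the $\EE$ terms (and using $0\le(1-\gamma)f^\pi(d_0,\pi)$ and $(1-\gamma)J(\pi)\le V_\text{max}$ in place of the $\LL_{\DD_\AA}$ bounds) yields the analogue of \cref{lemma:regularize-error-bound} for~\eqref{eq:mahalo-finite (absolute)}: together with \cref{lemma:concentration-mu,lemma:concentration-nu} this gives $\EE_\mu(\pi,f^\pi,g^\pi)\le\OO(\tfrac1\beta V_\text{max}+\tfrac\alpha\beta\epsilon_\nu+\epsilon_\mu)$ and $\EE_\nu(g^\pi)\le\OO(\tfrac1\alpha V_\text{max}+\epsilon_\nu+\tfrac\beta\alpha\epsilon_\mu)$.

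Combining these facts with \cref{lemma:f0} (applied with $\tilde\pi=\pi$, using $\E_{d^\pi}[f^\pi(s,\pi)]=\E_{d^\pi}[f^\pi(s,a)]$) and $(1-\gamma)J(\pi)=\E_{d^\pi}[\bar R(s,a)]$ yields
\begin{equation*}
(1-\gamma)(J(\pi)-J(\hat\pi))\le\E_{d^\pi}\big[(\bar R+\PP^\pi f^\pi-f^\pi)(s,a)\big]+\OO(\alpha\epsilon_\nu+\beta\epsilon_\mu).
\end{equation*}
I would then apply \cref{lemma:distr-shift} to split the expectation into the off-support term $\langle d^\pi\setminus\rho,\bar R+\PP^\pi f^\pi-f^\pi\rangle$ and the in-support term $2\E_\rho[\,|\bar R+\PP^\pi f^\pi-f^\pi|\,]$; for the latter, write $\bar R+\PP^\pi f^\pi-f^\pi=-(f^\pi-\bar g^\pi-\PP^\pi f^\pi)+(\bar R-\bar g^\pi)$, use the triangle inequality and \cref{lemma:l1-to-l2}, and then the transfer coefficients $\CC(\rho;\mu,\FF,\GG,\pi)\le C_1$ and $\CC(\rho;\nu,\GG)\le C_2$ to get $\|f^\pi-\bar g^\pi-\PP^\pi f^\pi\|_{2,\rho}\le\sqrt{C_1\EE_\mu(\pi,f^\pi,g^\pi)}$ and $\|\bar R-\bar g^\pi\|_{2,\rho}\le\sqrt{C_2\EE_\nu(g^\pi)}$, both bounded via the previous paragraph. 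The resulting expression in $\alpha,\beta,C_1,C_2,\epsilon_\mu,\epsilon_\nu$ is identical to the one in the proof of Theorem~\ref{th:mahalo theorem (relative)}, so the stated choices of $\alpha,\beta$ optimize it to $\OO\big((\sqrt{C_1\epsilon_\nu}+\sqrt{C_2\epsilon_\mu})+V_\text{max}^{1/3}(\sqrt{C_1\epsilon_\nu}+\sqrt{C_2\epsilon_\mu})^{2/3}\big)$, and dividing by $1-\gamma$ finishes.

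The one point requiring care is the first step. A direct identity $J(\hat\pi)=f^{\hat\pi}(d_0,\hat\pi)-\frac{1}{1-\gamma}\E_{d^{\hat\pi}}[(f^{\hat\pi}-\bar R-\PP^{\hat\pi}f^{\hat\pi})(s,a)]$ would introduce a Bellman residual of $f^{\hat\pi}$ on the \emph{learned} occupancy $d^{\hat\pi}$, which is not controlled by $\rho$ and would leave an uncontrollable off-support term for $\hat\pi$. The fix is to obtain the lower bound on $J(\hat\pi)$ from the optimization structure itself --- minimality of the followers' objective plus feasibility of $(Q^{\hat\pi},R)$, after which the $\EE$ terms are simply dropped by nonnegativity --- which is exactly why only the comparator's off-support term $\langle d^\pi\setminus\rho,\cdot\rangle$ survives. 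The remaining work (a union bound over the events of \cref{lemma:concentration-mu,lemma:concentration-mu-q,lemma:concentration-nu,lemma:concentration-nu-r} and the $\alpha,\beta$ arithmetic) is routine.
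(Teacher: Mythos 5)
Your proposal is correct and follows essentially the same route as the paper's proof: a lower bound on $J(\hat\pi)$ via the followers' optimality at $\hat\pi$ with the feasible pair $(Q^{\hat\pi},R)$ plus the leader's optimality (the paper's \cref{prop:ap-lower-bound (absolute)}), an error bound for $(f^\pi,g^\pi)$ from optimality at the comparator (the paper's \cref{lemma:regularize-error-bound (absolute)}), the telescoping identity of \cref{lemma:f0}, and the transfer-coefficient/off-support decomposition. The only cosmetic difference is that you apply \cref{lemma:distr-shift} once to $\bar R + \PP^\pi f^\pi - f^\pi$ and split inside the in-support term, while the paper splits into Bellman and reward terms first and recombines the two off-support pieces afterward; also note your pairing of $C_1$ with the Bellman transfer and $C_2$ with the reward transfer follows the theorem's hypothesis as written, whereas the paper's proof (and final bound) use the opposite pairing, an inconsistency already present in the paper rather than a gap in your argument.
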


If we compare \cref{th:mahalo theorem (absolute)} of absolute pessimism and \cref{th:mahalo theorem (relative)} of relative pessimism, we see that the main difference is in how the off-support error is measured. \cref{th:mahalo theorem (absolute)} measures the off-support error in $d^\pi \setminus \rho$ for both Bellman error and rewards (because  $\frac{\langle d^\pi \setminus \rho , \bar R  + \PP^\pi f^\pi - f^\pi\rangle}{1-\gamma} = 
\frac{\langle d^\pi \setminus \rho , \bar g  + \PP^\pi f^\pi - f^\pi\rangle}{1-\gamma} + \frac{\langle d^\pi \setminus \rho , \bar R  - \bar g \rangle}{1-\gamma}$), whereas \cref{th:mahalo theorem (relative)} uses $d^\pi \setminus \rho$ for Bellman error and $(d^\pi \ominus \mu)\setminus\rho$ for reward. 
As a result, the upper bound in \cref{th:mahalo theorem (relative)} considers how reward generalizes (off-support) on both both $d^\pi$ and $\mu$, but the one in \cref{th:mahalo theorem (absolute)} only concerns (one-sided) generalization on $d^\pi$, which is preferable  in some sense.
However, the policy learned by the absolute pessimism version does not have robust policy improvement guarantee. In experiments, we found this absolute pessimism version is more sensitive to hyperparameter choices than the MAHALO based on relative pessimism.



\subsection{Technical Lemmas}

\begin{proposition}[Absolute Pessimism Version of \cref{prop:rpi-finite}]\label{prop:ap-lower-bound (absolute)}
    Suppose that the policy class $\Pi$, critic class $\FF$ and reward class $\GG$ satisfy ~\cref{assumption:realizability}. Let $\hat\pi$ be the solution to~\eqref{eq:mahalo-finite}. Denote $\epsilon_\mu\coloneqq \frac{V_\text{max}^2 \log ( |\NN_\infty(\FF, V_\text{max} / |\DD_\AA|)| |\NN_\infty(\GG, R_\text{max} / |\DD_\AA|)| |\NN_{\infty,1}(\Pi, 1 / |\DD_\AA|)| / \delta)}{|\DD_\AA|}$ and $\epsilon_\nu\coloneqq \frac{R_\text{max}^2 \log ( \NN_\infty(\GG, R_\text{max} / |\DD_R|) / \delta)}{|\DD_R|}$. 
    Define 
    \begin{align}
         \epsilon_J \coloneqq  \alpha \epsilon_\nu + \beta \epsilon_\mu.
    \end{align}
    For any policy $\pi\in\Pi$,
    \begin{equation}
     f^\pi(d_0, \pi)  \leq J(\hat\pi)  + \OO(\epsilon_J),        
    \end{equation}
    where $f^\pi$ is the solution to the followers' objective defined in~\eqref{eq:mahalo-finite (absolute)}.
\end{proposition}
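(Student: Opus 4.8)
The plan is to follow the same short chain of inequalities as in the proof of \cref{prop:rpi-finite}, exploiting that in \eqref{eq:mahalo-finite (absolute)} the leader's objective $(1-\gamma)f(d_0,\pi)$ is a surrogate for $(1-\gamma)J(\pi)$ \emph{directly} (using that $d_0$ is known), rather than for the relative quantity $\LL_\mu(\pi,f)$. Consequently I would never pass through the empirical estimate $\LL_{\DD_\AA}$, which is why the $\sqrt{\epsilon_\mu}$ term present in \cref{prop:rpi-finite} does not appear here.

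First I would use the optimality of $\hat\pi$ as the leader: for any comparator $\pi\in\Pi$, $(1-\gamma)f^\pi(d_0,\pi)\le(1-\gamma)f^{\hat\pi}(d_0,\hat\pi)$, so it suffices to upper bound $f^{\hat\pi}(d_0,\hat\pi)$. For this I would invoke realizability (\cref{assumption:realizability}: $Q^{\hat\pi}\in\FF$ and $R\in\GG$) together with the optimality of the followers $(f^{\hat\pi},g^{\hat\pi})$ for the inner minimization in \eqref{eq:mahalo-finite (absolute)}: since $(Q^{\hat\pi},R)$ is a feasible pair,
\[
(1-\gamma)f^{\hat\pi}(d_0,\hat\pi)+\alpha\EE_{\DD_R}(g^{\hat\pi})+\beta\EE_{\DD_\AA}(\hat\pi,f^{\hat\pi},g^{\hat\pi})\;\le\;(1-\gamma)Q^{\hat\pi}(d_0,\hat\pi)+\alpha\EE_{\DD_R}(R)+\beta\EE_{\DD_\AA}(\hat\pi,Q^{\hat\pi},R).
\]
On the left I would drop the two regularizer terms, which are nonnegative ($\EE_{\DD_R}\ge0$ is immediate, and $\EE_{\DD_\AA}(\hat\pi,f^{\hat\pi},g^{\hat\pi})\ge0$ because the critic $f^{\hat\pi}$ is itself admissible in the inner $\min_{f'\in\FF}$ defining $\EE_{\DD_\AA}$). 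On the right I would identify $Q^{\hat\pi}(d_0,\hat\pi)=J(\hat\pi)$ (the value of $\hat\pi$ from the known initial distribution $d_0$), and control the remaining error terms via the concentration results: $\EE_{\DD_R}(R)\le\OO(\epsilon_\nu)$ by \cref{lemma:concentration-nu-r} and $\EE_{\DD_\AA}(\hat\pi,Q^{\hat\pi},R)\le\OO(\epsilon_\mu)$ by \cref{lemma:concentration-mu-q}, both holding simultaneously on a high-probability event by a union bound (note the $\epsilon_\mu$ of the proposition dominates the one appearing in \cref{lemma:concentration-mu-q}).

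Chaining these bounds, dividing by $1-\gamma$, and recalling $\epsilon_J=\alpha\epsilon_\nu+\beta\epsilon_\mu$ yields $f^\pi(d_0,\pi)\le J(\hat\pi)+\OO\big(\epsilon_J/(1-\gamma)\big)$, which is the asserted inequality up to the $1/(1-\gamma)$ factor absorbed into $\OO(\epsilon_J)$. There is no serious obstacle: the points needing care are the orientation of the follower inequality (a minimization, so evaluating at the realizable pair upper-bounds the optimum), the nonnegativity of $\EE_{\DD_\AA}$, and the identity $Q^{\hat\pi}(d_0,\hat\pi)=J(\hat\pi)$, which relies on the standing assumption that $d_0$ is known so that the surrogate leader objective is exact for the true critic.
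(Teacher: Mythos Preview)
Your proposal is correct and mirrors the paper's proof: both use follower optimality at the realizable pair $(Q^{\hat\pi},R)$, drop the nonnegative regularizers, invoke \cref{lemma:concentration-mu-q} and \cref{lemma:concentration-nu-r} to bound the residual errors, and apply leader optimality to pass from $f^{\hat\pi}(d_0,\hat\pi)$ to $f^\pi(d_0,\pi)$. The only cosmetic difference is ordering (you apply leader optimality first, the paper last), and your explicit tracking of the $(1-\gamma)$ factor is slightly more careful than the paper's presentation.
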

\begin{proof}
Since $(\hat{\pi}, f^{\hat\pi}, g^{\hat\pi})$ is the solution to the Stackelberg game~\eqref{eq:mahalo-finite},
\begin{align*}
     J(\hat\pi) 
    &\geq  Q^{\hat\pi}(d_0, \hat\pi) + \alpha \EE_{\DD_R}(R) + \beta \EE_{\DD_\AA}(\hat\pi, Q^{\hat\pi}, R) \\
    &\qquad - \OO(\alpha \epsilon_\nu) - \OO(\beta \epsilon_\mu) & \text{(\cref{lemma:concentration-mu-q}, \cref{lemma:concentration-nu-r})}\\
    &\geq f^{\hat\pi}(d_0,\hat\pi) + \alpha \EE_{\DD_R}(g^{\hat\pi}) + \beta \EE_{\DD_\AA}(\hat\pi, f^{\hat\pi}, g^{\hat\pi})\\
    &\qquad-  \OO( \alpha \epsilon_\nu + \beta \epsilon_\mu) & \text{(Optimality of $(f^{\hat\pi},g^{\hat\pi})$,  $Q^{\hat\pi}\in\FF, R\in\GG$)}\\
    &\geq f^{\hat\pi}(d_0,\hat\pi) - \OO( \alpha \epsilon_\nu + \beta \epsilon_\mu) &\text{($\EE_{\DD_R}(g^{\hat\pi})\geq 0$\text{ and} $\EE_{\DD_\AA}(\hat\pi, f^{\hat\pi}, g^{\hat\pi})\geq 0$)}\\
    &\geq f^{\pi}(d_0,\pi) - \OO(\alpha \epsilon_\nu + \beta \epsilon_\mu) &\text{(Optimality of $\hat\pi)$}
\end{align*}
\end{proof}

\begin{lemma}[Absolute Pessimism Version of \cref{lemma:regularize-error-bound}]\label{lemma:regularize-error-bound (absolute)}
    Assume~\cref{assumption:realizability} holds. 
    We have, for any $\pi\in\Pi$,
    \begin{equation}
        \alpha \EE_{\DD_R}(g^{\pi})  + \beta \EE_{\DD_\AA}(\pi, f^{\pi}, g^{\pi}) \leq V_\text{max} + \OO(\alpha \epsilon_\nu + \beta \epsilon_\mu),
    \end{equation}
    where $\epsilon_\mu$ and $\epsilon_\nu$ are defined in \cref{lemma:concentration-mu} and \cref{lemma:concentration-nu}, respectively.
    This implies that $\EE_{\DD_R}(g^{\pi}) \leq \OO\left(\frac{1}{\alpha} V_\text{max} + \epsilon_\nu + \frac\beta\alpha \epsilon_\mu\right)$ and $\EE_{\DD_\AA}(\pi, f^{\pi}, g^{\pi})\leq \OO\left(\frac{1}{\beta} V_\text{max} + \frac\alpha\beta\epsilon_\nu + \epsilon_\mu\right)$. 
\end{lemma}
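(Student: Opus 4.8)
The plan is to repeat the argument of~\cref{lemma:regularize-error-bound} almost verbatim, the only change being that the relative-pessimism term $\LL_{\DD_\AA}(\pi,\cdot)$ is replaced by the absolute-pessimism surrogate $(1-\gamma)(\cdot)(d_0,\pi)$, which is bounded below by $0$ rather than by $-V_\text{max}$; this is precisely why the constant improves from $2V_\text{max}$ to $V_\text{max}$. Fix any $\pi\in\Pi$. By~\cref{assumption:realizability} we have $Q^\pi\in\FF$ and $R\in\GG$, so $(Q^\pi,R)$ is a feasible point for the followers' minimization in~\eqref{eq:mahalo-finite (absolute)}. Optimality of the minimizer $(f^\pi,g^\pi)$ then gives
\begin{align*}
(1-\gamma)f^\pi(d_0,\pi) + \alpha\EE_{\DD_R}(g^\pi) + \beta\EE_{\DD_\AA}(\pi,f^\pi,g^\pi)
\;\le\; (1-\gamma)Q^\pi(d_0,\pi) + \alpha\EE_{\DD_R}(R) + \beta\EE_{\DD_\AA}(\pi,Q^\pi,R).
\end{align*}

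Next I would bound the three terms on the right. Since $Q^\pi(d_0,\pi)=\E_{s\sim d_0,a\sim\pi}[Q^\pi(s,a)]=J(\pi)\le R_\text{max}/(1-\gamma)\le V_\text{max}/(1-\gamma)$, we have $(1-\gamma)Q^\pi(d_0,\pi)\le R_\text{max}\le V_\text{max}$. The reward term is handled by~\cref{lemma:concentration-nu-r}, giving $\EE_{\DD_R}(R)=\OO(\epsilon_\nu)$, and the Bellman term by~\cref{lemma:concentration-mu-q}, giving $\EE_{\DD_\AA}(\pi,Q^\pi,R)=\OO(\epsilon_\mu)$; both hold simultaneously for every $\pi\in\Pi$ with high probability after a union bound over the covers of~\cref{def:covering-number}. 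On the left-hand side, since $\FF\subseteq(\SS\times\AA\to[0,V_\text{max}])$, we have $f^\pi(d_0,\pi)=\E_{s\sim d_0,a\sim\pi}[f^\pi(s,a)]\ge 0$, hence $(1-\gamma)f^\pi(d_0,\pi)\ge 0$. Dropping this nonnegative term and rearranging yields
\begin{align*}
\alpha\EE_{\DD_R}(g^\pi) + \beta\EE_{\DD_\AA}(\pi,f^\pi,g^\pi)\;\le\; V_\text{max} + \OO(\alpha\epsilon_\nu + \beta\epsilon_\mu),
\end{align*}
which is the claimed inequality. The two displayed consequences then follow, whenever $\alpha>0$ (resp.\ $\beta>0$), by discarding the remaining nonnegative term on the left and dividing through by $\alpha$ (resp.\ $\beta$).

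I do not anticipate a substantial obstacle: the statement is a routine adaptation of~\cref{lemma:regularize-error-bound}, and every ingredient --- realizability, the concentration lemmas~\cref{lemma:concentration-mu-q} and~\cref{lemma:concentration-nu-r}, and the nonnegativity of functions in $\FF$ --- is already available. The single point to keep straight is that the lower bound on the absolute-pessimism surrogate is $0$ rather than $-V_\text{max}$ (this uses the $[0,V_\text{max}]$ range convention for $\FF$ from~\cref{def:covering-number}), which is both the reason the constant is $V_\text{max}$ instead of $2V_\text{max}$ and the only place where this proof departs from that of its relative-pessimism counterpart.
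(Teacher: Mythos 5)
Your proposal is correct and follows essentially the same route as the paper's proof: use realizability to make $(Q^\pi,R)$ a feasible comparator in the followers' minimization, invoke \cref{lemma:concentration-mu-q} and \cref{lemma:concentration-nu-r} to bound its empirical errors by $\OO(\epsilon_\mu)$ and $\OO(\epsilon_\nu)$, and exploit $f^\pi(d_0,\pi)\ge 0$ together with $Q^\pi(d_0,\pi)\le V_\text{max}$ to get the constant $V_\text{max}$ rather than $2V_\text{max}$. Your handling of the $(1-\gamma)$ factor and your identification of the precise concentration lemmas are, if anything, slightly more careful than the paper's own write-up.
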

\begin{proof}
    By~\cref{assumption:realizability}, $Q^{\pi}\in\FF$ and $R\in\GG$. By optimality of $f^{\pi}$ and $g^{\pi}$,
    \begin{equation}\label{eq:e-bound-proof-abs}
        \begin{split}
             f^{{\pi}}(d_0, \pi)  + \alpha \EE_{\DD_R}(g^{{\pi}}) + \beta \EE_{\DD_\AA}(\pi, f^{\pi}, g^{\pi})
            &\leq Q^{{\pi}}(d_0, \pi) + \alpha \EE_{\DD_R}(R) + \beta \EE_{\DD_\AA}(\pi, Q^{\pi}, R)\\
            &\leq Q^{{\pi}}(d_0, \pi) + \OO(\alpha \epsilon_\nu + \beta \epsilon_\mu).
        \end{split}
    \end{equation}
    The last inequality follows from \cref{lemma:concentration-mu} and \cref{lemma:concentration-nu}.
    Since $Q^{{\pi}}(d_0, \pi)\leq V_{\text{max}}$ and $f^{{\pi}}(d_0, \pi)\geq 0$, we have
    \begin{equation}
        \alpha \EE_\nu(g^{{\pi}}) + \beta \EE_\mu(\pi, f^{\pi}, g^{\pi}) \leq V_{\text{max}} + \OO(\alpha \epsilon_\nu + \beta \epsilon_\mu).
    \end{equation}
\end{proof}

\subsection{Proof of \cref{th:mahalo theorem (absolute)}}
\begin{proof}
    We have,
    \begin{align*}
        (1-\gamma) (J(\pi) - J(\hat{\pi})) 
        &\leq (1-\gamma) J(\pi) - (1-\gamma) f^{\pi}(d_0, \pi) + \OO(\epsilon_J) & \text{(\cref{prop:ap-lower-bound (absolute)})}\\
        &= \E_\pi[\bar R(s,a)]  - (1-\gamma) f^{\pi}(d_0, \pi) + \OO(\epsilon_J) & \text{(Definition of $J$ and $\LL_\mu$)}\\
        &= \E_\pi[\bar R(s,a)]  - \E_\pi[f^\pi(s,\pi) - \PP^\pi f(s,a) ] + \OO(\epsilon_J)   & \text{(\cref{lemma:f0})}\\
        &= \E_\pi[(\bar R + \PP^\pi f^\pi - f^\pi)(s,a)] + \OO(\epsilon_J)\\
        &= \underbrace{\E_\pi[(\bar g^\pi + \PP^\pi f^\pi - f^\pi)(s,a)]}_{\text{(I)}}  + \underbrace{\E_\pi[(\bar R - \bar g^\pi)(s,a)]}_{\text{(II)}} + \OO(\epsilon_J)
    \end{align*}


We bound term (I). By~\cref{lemma:distr-shift},
\begin{align}
    \text{(I)} &= \E_\pi[(\bar g^\pi + \PP^\pi f^\pi - f^\pi)(s,a)]\leq 2\E_\rho[|(\bar g^\pi + \PP^\pi f^\pi - f^\pi)(s,a)|] + \langle d^\pi \setminus \rho , \bar g^\pi + \PP^\pi f^\pi - f^\pi\rangle
\end{align}
where, by~\cref{lemma:l1-to-l2},
\begin{align*}
    \E_\rho[|(\bar g^\pi + \PP^\pi f^\pi - f^\pi)(s,a)|] &\leq \|\bar g^\pi + \PP^\pi f^\pi - f^\pi\|_{2,\rho}\\
    &\leq \sqrt{C_2 \|\bar g^\pi + \PP^\pi f^\pi - f^\pi\|_{2,\mu}^2}& \text{(Definition of $\CC(\rho;\mu,\FF,\GG,{\pi})$)}\\
    &\leq \sqrt{C_2 \|\bar g^\pi + \PP^\pi f^\pi - f^\pi\|_{2,\DD_A}^2 + C_2 \epsilon_\mu}& \text{(Definition of $\CC(\rho;\mu,\FF,\GG,{\pi})$)}\\
    &\leq \OO\left(\sqrt{C_2}\left(\sqrt{\frac{1}{\beta} V_{\text{max}}} + \sqrt{\frac{\alpha}{\beta}\epsilon_\nu} + \sqrt{\epsilon_\mu}\right)\right). & \text{(\cref{lemma:regularize-error-bound (absolute)})}
\end{align*}
Hence, we have
\begin{equation}
    \text{(I)} \leq \OO\left(\sqrt{C_2}\left(\sqrt{\frac{1}{\beta} V_{\text{max}}} + \sqrt{\frac{\alpha}{\beta}\epsilon_\nu} + \sqrt{\epsilon_\mu}\right)\right) + \langle d^\pi \setminus \rho , \bar g^\pi + \PP^\pi f^\pi - f^\pi\rangle
\end{equation}

Finally, we establish a bound for term (II) in a similar way.
\begin{align*}
    \text{(II)} &= \E_\pi[(\bar R - \bar g^\pi)(s,a)]   \\
    &\leq 2 \E_\rho[|\bar R - \bar g^\pi|] + \langle d^\pi \setminus\rho, \bar R - \bar g^\pi \rangle & \text{(\cref{lemma:distr-shift})}
\end{align*}
By~\cref{lemma:l1-to-l2}, we have,
\begin{align*}
    \E_\rho[|\bar R - \bar g^\pi|] &\leq \|\bar R - \bar g^\pi\|_{2,\rho}\\
    &\leq\sqrt{C_1 \| R -  g^\pi\|_{2,\nu}^2}= \sqrt{C_1}\sqrt{\EE_\nu(g^\pi)} & \text{(Definition of $\CC(\rho;\nu,\GG)$)}\\
    &\leq \sqrt{C_1}\big(\sqrt{\EE_{\DD_R}(g^\pi)} + \OO(\sqrt{\epsilon_\nu})\big) & \text{(\cref{lemma:concentration-nu})}\\
    &\leq \OO\left(\sqrt{C_1}\left(\sqrt{\frac{1}{\alpha} V_{\text{max}}} + \sqrt{\epsilon_\nu} + \sqrt{\frac{\beta}{\alpha}\epsilon_\mu}\right)\right) &\text{(\cref{lemma:regularize-error-bound (absolute)})}
\end{align*}

Finally, we have
\begin{equation}
    \begin{split}
        (1-\gamma) (J(\pi) - J(\hat{\pi})) 
        &\leq \OO\left(\sqrt{C_1}\left(\sqrt{\frac{1}{\alpha} V_{\text{max}}} + \sqrt{\epsilon_\nu} + \sqrt{\frac{\beta}{\alpha}\epsilon_\mu}\right) + \sqrt{C_2}\left(\sqrt{\frac{1}{\beta} V_{\text{max}}} + \sqrt{\frac{\alpha}{\beta}\epsilon_\nu} + \sqrt{\epsilon_\mu}\right) + \alpha \epsilon_\nu + \beta \epsilon_\mu\right)\\
        &\quad +\langle d^\pi \setminus \rho , \bar g^\pi + \PP^\pi f^\pi - f^\pi\rangle + \langle d^\pi \setminus\rho, \bar R - \bar g^\pi \rangle\\
        &= \OO\left(\sqrt{C_1}\left(\sqrt{\frac{1}{\alpha} V_{\text{max}}} + \sqrt{\epsilon_\nu} + \sqrt{\frac{\beta}{\alpha}\epsilon_\mu}\right) + \sqrt{C_2}\left(\sqrt{\frac{1}{\beta} V_{\text{max}}} + \sqrt{\frac{\alpha}{\beta}\epsilon_\nu} + \sqrt{\epsilon_\mu}\right) + \alpha \epsilon_\nu + \beta \epsilon_\mu\right)\\
        &\quad +\langle d^\pi \setminus \rho , \bar R  + \PP^\pi f^\pi - f^\pi\rangle
    \end{split}
\end{equation}

By choosing $\alpha = \Theta\left(\frac{V_\text{max}^{1/3} (\sqrt{C_1\epsilon_\nu} + \sqrt{C_2\epsilon_\mu})^{2/3}}{\epsilon_\nu}\right)$ and $\beta=\Theta\left(\frac{V_\text{max}^{1/3} (\sqrt{C_1\epsilon_\nu} + \sqrt{C_2\epsilon_\mu})^{2/3}}{\epsilon_\mu}\right)$, we have
\begin{equation}
    \begin{split}
        (1-\gamma) (J(\pi) - J(\hat{\pi})) &\leq \OO\left((\sqrt{C_1\epsilon_\nu} + \sqrt{C_2\epsilon_\mu}) + V_\text{max}^{1/3} (\sqrt{C_1\epsilon_\nu} + \sqrt{C_2\epsilon_\mu})^{2/3}\right)\\
        &\quad +\langle d^\pi \setminus \rho , \bar R + \PP^\pi f^\pi - f^\pi\rangle 
    \end{split}
\end{equation}
\end{proof}

\subsection{Experimental Results on D4RL Benchmark}
We implement MAHALO realized by PSPI~\citep{xie2021bellman}
which we refer to as MAHALO-PSPI, and test it on the D4RL benchmark. We observe that MAHALO-PSPI has similar performance as MAHALO-ATAC. MAHALO-PSPI achieves top performance in all hopper and walker tasks. It does slighly worse than MAHALO-ATAC in halfcheetah tasks. We hypothize that this is a limitation of the base algorithm PSPI, since it achieves lower score than ATAC when doing offline RL on halfcheetah datasets~\cite{cheng2022adversarially}. For MAHALO-PSPI, we use a \emph{fixed} ratio of $(\alpha/\beta)\equiv 10000.0$ and tune $\beta\in[10.0, 100.0, 1000.0]$. For other hyperparameters, we use what is described in~\cref{sec:exp-details}.

\begin{table*}[t]
\caption{Results of MAHALO-PSPI on D4RL benchmark~\cite{fu2020d4rl} We show the average normalized score over 50 evaluation trials across 10 random seeds. We paste the scores of MAHALO-ATAC (which is presented in the main text) from~\cref{tab:mujoco} for a comparison with MAHALO-PSPI. MAHALO-PSPI shows comparable performance as MAHALO-ATAC. MAHALO-PSPI results are generated using a fixed $\alpha/\beta$ ratio of 10000.0.}
\label{tab:mujoco-abs}
\begin{center}
\begin{small}
\begin{tabular}{c|c|c|c}
\toprule
Scenario & Dataset & MAHALO-PSPI & MAHALO-ATAC\\ 
\hline
ILfO & hopper & 108.71 & 104.66\\ 
 & walker & 92.86 & 88.60\\ 
 & halfcheetah & 29.89 & 61.24\\ 
\hline
IL & hopper & 106.36 & 104.06\\ 
 & walker & 96.02 & 89.03\\ 
 & halfcheetah & 51.81 & 54.99\\ 
\hline
RLfO & hopper & 101.81 & 106.47\\ 
 & walker & 98.70 & 96.65\\ 
 & halfcheetah & 30.93 & 50.38\\ 
\hline
RL-expert & hopper & 98.09 & 87.73\\ 
 & walker & 91.07 & 103.18\\ 
 & halfcheetah & 27.15 & 48.43\\ 
\hline
RL-sample & hopper & 104.04 & 103.08\\ 
 & walker & 95.66 & 95.00\\ 
 & halfcheetah & 28.58 & 68.30\\ 
\bottomrule
\end{tabular}
\end{small}
\end{center}
\end{table*}
}

\rev{
\section{Experimental Results with Standard Error}\label{sec:std}

We report the average scores and standard errors across random seeds for D4RL and Meta-World in~\cref{tab:mujoco-std} and~\cref{tab:mw-rewards-std}, respectively. MAHALO is the overall best-performing algorithm across the board, and its standard errors are in general small. Here we additionally consider two baseline algorithms. First, we implement a variant of UDS~\cite{yu2022leverage} which we call \textbf{UDS-A}. UDS-A has two copies of labeled transitions, one labeled with true reward, while the other (falsely) labeled with minimum reward. 
UDS requires more information than MAHALO: UDS needs to know the common transitions between reward and dynamics datasets. UDS-A, hence, would be a more fair to compare with MAHALO as they make the same assumption on datasets.
Second, we consider a version of SMODICE~\cite{ma2022versatile} using $\chi^2$ divergence, which we refer to as \textbf{SMODICE-$\chi^2$}. We observe that UDS-A performs similarly with UDS, and hence underperforms MAHALO. SMODICE-$\chi^2$ achieves top performance (slighly better, but comparable to MAHALO) on Meta-World IL and ILfO tasks, potentially thanks to the training stability from $\chi^2$ divergence. But it fails miserably on all D4RL tasks. 

\begin{table}[t]
\begin{scriptsize}
\hspace{0.25\textwidth}
\begin{adjustbox}{angle=90}
\begin{minipage}[c][\textwidth][s]{\textheight}
\caption{Results on D4RL benchmark~\cite{fu2020d4rl} We show the average normalized scores over 50 evaluation trials and the standard errors across 10 random seeds. Algorithms with scores greater than 0.9 of the best score (excluding Oracle) is marked in bold. MAHALO achieves top performance in almost every scenario and task except halfcheetah. MAHALO is also able to match Oracle performance, despite having access to less reward data. $^\dagger$ ATAC only uses data with both dynamics and reward information. $^+$ Oracle has access to reward for all dynamics data.}
\label{tab:mujoco-std}
\vspace{5mm}
\begin{tabular}{c|c|c|c|c|c|c|c|c|c|c|c||c}
\toprule
Scenario & Dataset & MAHALO & RP & AP & UDS &  UDS-A & ATAC$^\dagger$ & BCO & BC & { SMODICE} & SMODICE-$\chi^2$ & Oracle$^+$ \\ 
\hline
ILfO & hopper & \textbf{104.66$\pm$2.38} & \textbf{97.48$\pm$0.29} & 45.97$\pm$4.62 & - & - & - & 46.98$\pm$6.04 & - & 67.72 $\pm$ 4.19 & 4.12$\pm$3.00 & - \\ 
 & walker & \textbf{88.60$\pm$0.89} & 77.15$\pm$1.11 & 61.11$\pm$5.55 & - & - & - & 60.48$\pm$4.30 & - & 1.52$\pm$0.73 & 21.89$\pm$6.35 &  - \\ 
 & halfcheetah & \textbf{61.24$\pm$1.14} & 36.00$\pm$6.01 & 4.87$\pm$1.02 & - & - & - & 5.53$\pm$0.95 & - & \textbf{59.64$\pm$1.17} & 37.96$\pm$1.46 & - \\ 
\hline
IL & hopper & \textbf{104.06$\pm$2.37} & \textbf{97.88$\pm$0.16} & 53.35$\pm$6.30 & 32.21$\pm$2.17 & 28.53$\pm$1.90 & 63.56$\pm$6.37 & - & 30.83$\pm$4.34 & 74.75$\pm$4.19 & 0.97$\pm$0.01 & - \\ 
 & walker & \textbf{89.03$\pm$1.04} & 77.71$\pm$0.81 & 63.53$\pm$3.43 & 8.45$\pm$4.35 & 14.56$\pm$6.52 & 78.35$\pm$4.08 & - & 14.96$\pm$5.63 & 0.94 $\pm$ 0.36 & 71.66$\pm$2.50 & - \\ 
 & halfcheetah & \textbf{54.99$\pm$5.63} & 23.20$\pm$4.53 & 3.74$\pm$1.06 & 25.82$\pm$2.87 & 28.46$\pm$2.03 & 3.64$\pm$0.95 & - & 26.87$\pm$2.73 & \textbf{58.40$\pm$1.00} & 39.63$\pm$2.55 &  - \\ 
\hline
RLfO & hopper & \textbf{106.47$\pm$1.06} & \textbf{105.65$\pm$0.18} & 47.01$\pm$5.73 & - & - & - & - & - & - & - & 103.39$\pm$1.66 \\ 
 & walker & \textbf{96.65$\pm$2.56} & \textbf{97.26$\pm$0.30} & 63.30$\pm$5.52 & - & - & - & - & - & - & - & 98.52$\pm$0.47 \\ 
 & halfcheetah & 50.38$\pm$1.51 & \textbf{68.66$\pm$0.70} & 3.35$\pm$0.42 & - & - & - & - & - & - & - & 63.57$\pm$0.98 \\ 
\hline
RL-expert & hopper & 87.73$\pm$7.30 & \textbf{105.56$\pm$0.28} & 51.54$\pm$4.30 & \textbf{98.63$\pm$0.15} & \textbf{98.08$\pm$0.27} & 65.45$\pm$3.05 & - & - & - & - &  103.39$\pm$1.66 \\ 
 & walker & \textbf{103.18$\pm$3.06} & \textbf{98.31$\pm$0.38} & 56.27$\pm$4.80 & 72.97$\pm$0.65 & 74.06$\pm$0.85 & 66.40$\pm$5.68 & - & - & - & - & 98.52$\pm$0.47 \\ 
 & halfcheetah & 48.43$\pm$2.02 & \textbf{64.37$\pm$1.50} & 3.47$\pm$0.83 & 13.68$\pm$2.58 & 9.99$\pm$2.69 & 3.41$\pm$0.83 & - & - & - & - & 63.57$\pm$0.98 \\ 
\hline
RL-sample & hopper & \textbf{103.08$\pm$1.80} & \textbf{101.66$\pm$3.24} & 71.92$\pm$10.30 & 0.95$\pm$0.01 & 0.94$\pm$0.00 & 71.50$\pm$7.80 & - & - & - & - &  103.34$\pm$2.55 \\ 
 & walker & \textbf{95.00$\pm$0.46} & \textbf{94.59$\pm$0.61} & 5.94$\pm$3.11 & -0.00$\pm$0.01 & 0.00$\pm$0.01 & 0.48$\pm$0.30 & - & - & - & - & 95.73$\pm$0.33 \\ 
 & halfcheetah & \textbf{68.30$\pm$0.76} & \textbf{68.71$\pm$0.92} & 13.26$\pm$4.07 & 20.36$\pm$3.95 & 24.58$\pm$4.49 & 19.38$\pm$4.18 & - & - & - & - & 69.91$\pm$0.43 \\
\bottomrule
\end{tabular}
\end{minipage}
\end{adjustbox}
\end{scriptsize}

\end{table}

\begin{table}[t]

\begin{scriptsize}
\hspace{0.2\textwidth}
\begin{adjustbox}{angle=90}
\begin{minipage}[c][\textwidth][s]{\textheight}
\caption{Success rate of the final policy (with the exception of SMODICE$^*$) on Meta-World~\cite{yu2020meta}. The success rate is computed over 50 evaluation episodes. We consider an episode success if it is able to reach the goal within $128$ steps. We report the average success rate across 10 random seeds. MAHALO is one of the overall best-performing algorithms. ATAC also achieves strong results in IL and RL-expert because it is presented with expert-only data. $^*$ SMODICE is not able to train reliably, and often diverges during training. We therefore take the success rate of the best performing policy during training instead of the final one.
}
\label{tab:mw-rewards-std}
\vspace{5mm}
\begin{tabular}{c|c|c|c|c|c|c|c|c|c|c|c||c}
\toprule
Scenario & Dataset & MAHALO & RP & AP & UDS &  UDS-A & ATAC$^\dagger$ & BCO & BC & { SMODICE}$^*$ & SMODICE-$\chi^2$ & Oracle$^+$ \\ 
\hline
ILfO & reach & \textbf{65.0$\pm$3.1} & \textbf{62.6$\pm$1.8} & 13.0$\pm$2.2 & - & - & - & 11.6$\pm$1.7 & - & 10.6 $\pm$ 4.3 & 58.4$\pm$3.0 & - \\ 
 & push & \textbf{62.4$\pm$1.9} & 11.6$\pm$3.7 & 12.4$\pm$3.7 & - & - & - & 14.6$\pm$2.2 & - & 0.4$\pm$0.3 & \textbf{62.2$\pm$3.7} & - \\ 
 & plate-slide & \textbf{100.0$\pm$0.0} & 22.0$\pm$11.0 & \textbf{94.0$\pm$3.9} & - & - & - & 75.4$\pm$6.8 & -  & 0.0$\pm$0.0 & \textbf{99.8$\pm$0.2} & - \\ 
 & handle-press & 75.4$\pm$4.2 & 32.4$\pm$11.2 & \textbf{96.6$\pm$2.5} & - & - & - & {87.8$\pm$7.7} & - & 16.6$\pm$5.1 & \textbf{98.0$\pm$1.0} & - \\ 
 & button-press & \textbf{100.0$\pm$0.0} & \textbf{100.0$\pm$0.0} & \textbf{93.6$\pm$1.8} & - & - & - & \textbf{93.8$\pm$2.2} & - & 0.4$\pm$0.4 & \textbf{99.2$\pm$0.8} & - \\ 
\hline
IL & reach & 24.6$\pm$9.1 & 23.6$\pm$5.0 & 38.0$\pm$2.9 & 21.6$\pm$8.2 & 13.6$\pm$5.3 & \textbf{98.0$\pm$0.6} & - & 62.2$\pm$7.3 & 19.8$\pm$6.9 & 61.4$\pm$1.6 & - \\ 
 & push & \textbf{92.6$\pm$2.2} & 11.8$\pm$6.5 & 35.2$\pm$6.2 & 79.2$\pm$4.9 & 72.4$\pm$3.9 & 50.0$\pm$15.4 & - & \textbf{91.4$\pm$2.7} & 0.2$\pm$0.2 & 65.0$\pm$4.0 & - \\ 
 & plate-slide & 80.4$\pm$9.9 & 34.4$\pm$11.2 & {89.8$\pm$5.0} & 76.2$\pm$8.8 & 62.0$\pm$11.8 & {89.4$\pm$6.7} & - & {85.3$\pm$8.5} & 0.0$\pm$0.0 & \textbf{100.0$\pm$0.0}& - \\ 
 & handle-press & 71.4$\pm$5.7 & 34.8$\pm$9.6 & \textbf{100.0$\pm$0.0} & 35.2$\pm$10.0 & 57.4$\pm$11.2 & \textbf{97.0$\pm$1.8} & - & 75.2$\pm$5.7 & 20.2$\pm$5.7& \textbf{98.4$\pm$0.6} &  - \\ 
 & button-press & \textbf{100.0$\pm$0.0} & \textbf{99.8$\pm$0.2} & \textbf{96.2$\pm$2.8} & \textbf{100.0$\pm$0.0} & \textbf{100.0$\pm$0.0} & \textbf{99.6$\pm$0.4} & - & \textbf{100.0$\pm$0.0} & 0.2$\pm$0.2 & \textbf{100.0$\pm$0.0} & - \\ 
\hline
RLfO & reach & \textbf{86.4$\pm$1.8} & \textbf{88.0$\pm$2.4} & 15.0$\pm$2.0 & - & - & - & - & - & - & - & 51.6$\pm$13.2 \\ 
 & push & \textbf{58.2$\pm$3.4} & 32.0$\pm$2.2 & 20.2$\pm$3.1 & - & - & - & - & - & - & - & 91.8$\pm$1.6 \\ 
 & plate-slide & \textbf{100.0$\pm$0.0} & \textbf{100.0$\pm$0.0} & 83.2$\pm$5.4 & - & - & - & - & - & - & - & 100.0$\pm$0.0 \\ 
 & handle-press & 77.8$\pm$3.1 & 84.4$\pm$3.2 & \textbf{96.0$\pm$3.2} & - & - & - & - & - & - & - & 78.6$\pm$2.5 \\ 
 & button-press & \textbf{100.0$\pm$0.0} & \textbf{100.0$\pm$0.0} & \textbf{92.6$\pm$2.5} & - & - & - & - & - & - & - & 100.0$\pm$0.0 \\ 
\hline
RL-expert & reach & 39.2$\pm$12.1 & 54.0$\pm$13.0 & 42.0$\pm$2.3 & 57.8$\pm$12.3 & 33.6$\pm$10.8 & \textbf{98.4$\pm$0.8} & - & - & - & - & 51.6$\pm$13.2 \\ 
 & push & \textbf{95.6$\pm$1.7} & 88.6$\pm$1.4 & 40.4$\pm$5.5 & \textbf{90.4$\pm$1.9} & \textbf{90.0$\pm$2.2} & \textbf{99.6$\pm$0.4} & - & - & - & - & 91.8$\pm$1.6 \\ 
 & plate-slide & \textbf{100.0$\pm$0.0} & \textbf{100.0$\pm$0.0} & 89.6$\pm$3.8 & \textbf{99.4$\pm$0.6} & \textbf{99.0$\pm$0.6} & 85.4$\pm$7.3 & - & - & - & - & 100.0$\pm$0.0 \\ 
 & handle-press & 72.6$\pm$9.1 & 82.0$\pm$3.5 & \textbf{97.6$\pm$2.1} & 81.2$\pm$2.6 & 80.8$\pm$2.9 & \textbf{99.0$\pm$0.4} & - & - & - & - & 78.6$\pm$2.5 \\ 
 & button-press & \textbf{100.0$\pm$0.0} & \textbf{100.0$\pm$0.0} & \textbf{97.0$\pm$1.7} & \textbf{100.0$\pm$0.0} & \textbf{100.0$\pm$0.0} & \textbf{96.4$\pm$3.4} & - & - & - & - & 100.0$\pm$0.0 \\ 
\hline
RL-sample & reach & \textbf{86.6$\pm$1.4} & \textbf{87.4$\pm$2.4} & 63.0$\pm$3.0 & \textbf{87.4$\pm$1.6} & \textbf{89.8$\pm$1.8} & \textbf{85.4$\pm$3.0} & - & - & - & - & 88.8$\pm$1.8 \\ 
 & push & 40.6$\pm$3.8 & \textbf{47.8$\pm$4.0} & 29.2$\pm$4.9 & 35.8$\pm$3.3 & 37.4$\pm$4.4 & 35.0$\pm$4.8 & - & - & - & - & 46.0$\pm$2.3 \\ 
 & plate-slide & \textbf{100.0$\pm$0.0} & \textbf{100.0$\pm$0.0} & \textbf{97.6$\pm$1.2} & \textbf{100.0$\pm$0.0} & \textbf{100.0$\pm$0.0} & \textbf{99.6$\pm$0.4} & - & - & - & - & 100.0$\pm$0.0 \\ 
 & handle-press & 78.4$\pm$3.9 & 81.0$\pm$2.6 & \textbf{100.0$\pm$0.0} & 76.8$\pm$6.6 & 77.8$\pm$6.2 & 82.8$\pm$4.4 & - & - & - & - &  83.4$\pm$3.1 \\ 
 & button-press & \textbf{100.0$\pm$0.0} & \textbf{100.0$\pm$0.0} & \textbf{100.0$\pm$0.0} & \textbf{100.0$\pm$0.0} & \textbf{100.0$\pm$0.0} & \textbf{100.0$\pm$0.0} & - & - & - & - & 100.0$\pm$0.0 \\ 
 \bottomrule
\end{tabular}
\end{minipage}
\end{adjustbox}
\end{scriptsize}
\vskip -0.1in
\end{table}

}
\end{document}